\newtheorem{prop}{Proposition}
\newtheorem{remark}{Remark}
\newtheorem{lemma}{Lemma} 
\title{
Guided Cooperation in Hierarchical Reinforcement Learning via Model-based Rollout
}
\author[1]{
  Haoran~Wang
}
\author[1]{
  Zeshen~Tang
}
\author[1]{
  Leya~Yang
}
\author[ ,1]{
  Yaoru~Sun\thanks{Corresponding author: Yaoru Sun.}
}
\author[2]{
  Fang~Wang
}
\author[1]{
  \\Siyu~Zhang
}
\author[1]{
  Yeming~Chen
}
 \affil[1]{Department of Computer Science and Technology, Tongji University\protect\\ \texttt{\{1910664, 2011610, 2152827, yaoru, zsyzsy, 2130769\}@tongji.edu.cn}}
 \affil[2]{Department of Computer Science, Brunel University\protect\\ \texttt{fang.wang@brunel.ac.uk}}
\begin{document}

\maketitle

\begin{abstract}
Goal-conditioned hierarchical reinforcement learning (HRL) presents a promising approach for enabling effective exploration in complex, long-horizon reinforcement learning (RL) tasks \chreplaced{through}{via} temporal abstraction. \chreplaced{Empirically,}{In essence, for} \chreplaced{heightened}{increased} inter-level communication and coordination can induce more stable and robust policy improvement in hierarchical systems. Yet, most existing goal-conditioned HRL algorithms \chadded{have primarily} focused on the subgoal discovery, \chreplaced{neglecting}{regardless of} \chreplaced{inter-level cooperation.}{inter-level coupling.} Here, we propose a goal-conditioned HRL framework named Guided Cooperation via Model-based Rollout (GCMR)\footnote[1]{Code is available at \url{https://github.com/HaoranWang-TJ/GCMR_ACLG_official} \label{link_us}}, \chreplaced{aiming to bridge inter-layer information synchronization and cooperation by exploiting forward dynamics.}{which estimates forward dynamics to promote inter-level cooperation.} \chadded{Firstly, }\chreplaced{t}{T}he GCMR \chreplaced{mitigates}{alleviates} the state-transition error within off-policy correction \chreplaced{via}{through a} model-based rollout, \chreplaced{thereby enhancing}{further improving the} sample efficiency. \chreplaced{Secondly}{Meanwhile}, \chreplaced{to prevent disruption by}{to avoid being disrupted by} \chdeleted{these corrected }\chreplaced{the}{but possibly} unseen\chdeleted{ or faraway} subgoals and states, lower-level Q-function gradients are constrained using a gradient penalty with a model-inferred upper bound, leading to a more stable behavioral policy \chadded{conducive to effective exploration}. \chreplaced{Thirdly}{Besides}, we propose a one-step rollout-based planning\chdeleted{ to further facilitate inter-level cooperation}, \chreplaced{using}{wherein the } higher-level \chreplaced{critics}{Q-function} \chreplaced{to guide}{is used to guide} the lower-level policy\chreplaced{. Specifically, we estimate}{ by estimating } the value of future states \chadded{of the lower-level policy} \chadded{using the higher-level \chreplaced{critic }{Q-}function, thereby transmitting} \chdeleted{so that }global task information\chdeleted{ is transmitted} downwards to avoid local pitfalls. \chadded{These three critical components in GCMR are expected to facilitate inter-level cooperation significantly.}
Experimental results demonstrate that incorporating the proposed GCMR framework with a disentangled variant of HIGL, namely ACLG, yields more stable and robust policy improvement \chreplaced{compared to}{than} various baselines and \chreplaced{significantly}{substantially} outperforms previous state-of-the-art \chdeleted{(SOTA) HRL  }algorithms\chdeleted{ in both hard-exploration problems and robotic control}.
\end{abstract}

\section{Introduction}
Hierarchical reinforcement learning (HRL) has made significant contributions toward solving complex and long-horizon tasks with sparse rewards. Among HRL frameworks, goal-conditioned HRL is an especially promising paradigm for goal-directed learning in a divide-and-conquer manner\chdeleted{\cite{dayan1992feudal}}\cite{vezhnevets2017feudal}. In goal-conditioned HRL, multiple goal-conditioned policies are stacked hierarchically, where the higher-level policy assigns a directional subgoal to the lower-level policy, and the lower strives toward it. Recently, related advances \cite{zhang2020generating, zhang2022adjacency, li2021learning, leed2022hrl, guo2021state, huang2019mapping, eysenbach2019search, zhang2021world} have made significant progress in improving exploration efficiency via reachable subgoal generation with adjacency constraint \cite{zhang2020generating, zhang2022adjacency}\chadded{, long-term decision-making with state-temporal compression \cite{guo2021state},} and graph-based planning \cite{csimcsek2005identifying, huang2019mapping, eysenbach2019search, zhang2021world}. \chadded{Furthermore, the challenges posed by local optima and transient traps \cite{gao2022partial} can also be mitigated by boosting exploration through auxiliary rewards\cite{burda2019exploration, ren2021orientation, kim2024accelerating}.} Yet, integrating these strategies with an off-policy learning method still struggles with being sample efficient. Previous research handles this issue by relabeling experiences with more faithful goals \cite{nachum2018data, levy2019learning, zhu2021mapgo}, in which the relabeling aims to explore how to match the higher-level intent and the actual outcome of the lower-level subroutine. The HER-style approaches \cite{andrychowicz2017hindsight, levy2019learning, zhu2021mapgo} overwrite the former with the latter, while the HIRO \cite{nachum2018data} modifies the past instruction to adapt to the current behavioral policy, thereby improving the data-efficiency. \chadded{Another effective approach for enhancing data efficiency is to leverage model-based transition dynamics in planning. Recent advancements in dynamics generalization \cite{lee2020context, seo2020trajectory} have demonstrated that unseen latent dynamics empower agents with robust generalization capabilities, effectively adapting to unseen scenarios. Related research in autonomous navigation systems \cite{kahn2021badgr, kahn2021land} has indicated that learning dynamics can ensure the robustness of long-term planning against unpredictable changes and perturbations in open environments.
However, there has been limited literature \cite{zhang2016learning, mordatch2016combining, nair2020hierarchical, zhu2021mapgo} studying the model exploitation in the goal-conditioned RL. Several studies have attempted to deploy model-based hierarchical reinforcement learning on various real-world control tasks, such as autonomous aerial vehicles\cite{zhang2016learning} and physical humanoids\cite{mordatch2016combining}. These studies employed model predictive control (MPC) for high-level motion planning and model-free RL for motion primitive learning\cite{xie2016model, mordatch2016combining}. However, to our knowledge, there is no prior work studying inter-level model-based cooperation in goal-conditioned HRL.}

\chadded{Within a hierarchical architecture, promoting inter-level cooperation is a notably more effective approach to accelerating reinforcement learning\cite{setyawan2022cooperative, setyawan2022depth, setyawan2022combinations}.} \chreplaced{To achieve}{It can be seen that, empirically, an inter-level association and communication mechanism bridging the gap between the intent and outcome is essential for robust policy improvement. For achieving} inter-level cooperation and communication, addressing the following questions is essential: 1) how does the lower level comprehend and synchronize with the higher level? 2) how does the lower level \chreplaced{maintain robustness in the face of}{handle} errors from the higher level? 3) how does the lower level directly understand the overall task without relying on higher-level proxies? \chadded{Empirically, a learned dynamics model can function as the inter-level communication mechanism to bridge} the gap between the \chadded{high-level} intent and \chadded{final (low-level)} outcome, leading to robust policy improvement.

In this paper, we propose a novel goal-conditioned HRL framework to systematically \chdeleted{address these questions}facilitate inter-level cooperation, which mainly consists of three crucial components: 1) the goal-relabelling for synchronizing, 2) the gradient penalty for enhancing robustness against high-level errors, and 3) the one-step rollout-based planning for \chreplaced{transmitting global tasks downwards}{collectively working towards achieving a global task}. The key insight in the proposed framework is to modularly integrate a forward dynamics prediction model into HRL framework for improving the data efficiency and enhancing learning efficiency. Specifically, our framework, named \textbf{G}uided \textbf{C}ooperation via \textbf{M}odel-based \textbf{R}ollout (GCMR), brings together three crucial ingredients:
\begin{itemize}[leftmargin=2em]
\item \chreplaced{We propose a novel model-based rollout-based}{A novel} off-policy correction\chdeleted{ based on weighted model-based rollout}\chadded{, which was deployed to mitigate the cumulative state-transition error in HIRO \cite{nachum2018data}}. Additionally, we \chreplaced{also propose \chreplaced{a trick, }{two tricks, the exponential weighting and }soft goal-relabeling, to \chdeleted{suppress the cumulative error in long-horizon rollouts and }make the correction more robust to outliers}{ employed exponential weighting to suppress the cumulative error in long-horizon rollouts, assigning higher weights to the initial transitions. We also propose a convenient trick, soft goal-relabeling, to make correction robust to outliers. All of these contribute to further improving the data efficiency}. 
\item We propose a gradient penalty to suppress sharp lower-level Q-function gradients, which clamps the Q-function gradient by means of an inferred upper bound. \chreplaced{The gradient penalty implicitly constrained the behavioral policy to change steadily, enhancing the stability of the optimization}{Consequently, the behavioral policy is implicitly constrained to change steadily, yielding conservative action. This, in turn, enhances the stability of the hierarchical optimization}.
\item Meanwhile, we designed a one-step rollout-based planning method to prevent the lower-level policy from getting stuck in local optima\chreplaced{, wherein the values of future transitions of lower-level agents were evaluated using the higher-level critics. Such foresight helps the lower-level policy cooperate better with the goal planner.}{. Specifically, we use the learned dynamics to simulate one-step rollouts and evaluate task-specific values of future transitions through the higher-level Q-function. Such foresight and planning help the lower-level policy cooperate better with the goal planner.}
    
\end{itemize}

To justify the superiority of the proposed \chreplaced{GCMR}{method}\chdeleted{ and achieve a remarkable SOTA}, we integrate \chreplaced{it}{the GCMR} with a strong baseline: ACLG, a disentangled variant of HIGL \cite{kim2021landmark}. Experimental results show that incorporating the proposed framework \chreplaced{achieved}{improves the} \chreplaced{state-of-the-art performance}{performance of state-of-the-art HRL algorithms} in complex and sparse reward environments. 
\chadded{The contributions of this article are summarized as follows:}
\begin{itemize}[leftmargin=2.2em]
\item[1)] \chadded{This article proposes GCMR, a novel method designed to facilitate inter-level cooperation in HRL, thereby accelerating learning.}
\item[2)] \chadded{We benchmark our method on various long-horizon control and planning tasks, which are commonly used in the HRL literature\cite{florensa2017stochastic, nachum2018data, nachum2019near, zhang2020generating, kim2021landmark, leed2022hrl,yang2021hierarchical, zhang2022adjacency, zeng2023ahegc}. Extensive experiments demonstrate the superior performance of our proposed method. Moreover, we conduct sufficient ablation studies to validate the contribution of different components in GCMR.}
\item[3)] \chadded{Our study emphasizes the importance of inter-level cooperation, contributing to new insights in hierarchical RL.}
\end{itemize}
\chadded{The rest of this article is structured as follows. Section \ref{sec:Preliminaries} introduces the preliminaries on HRL, adjacency constraint, landmark-based planning, and our proposed disentangled variant of HIGL. Section \ref{related_work} provides a review of related works, focusing on transition relabeling and model exploitation in goal-conditioned HRL. Section \ref{methods} offers a detailed implementation of the proposed GCMR method. In Section \ref{experiments}, we outline our experimental environments, explore the impact of different parameters in ACLG and GCMR, and present our main experimental results along with ablation analyses. Section \ref{Discussion} discusses the principal findings, highlights several limitations of this study, and outlines potential directions for future research. Section \ref{Conclusion} presents our conclusions.}

\section{Preliminaries}
\label{sec:Preliminaries}

Consider a finite-horizon, goal-conditioned Markov decision process (MDP) represented by a tuple $\left(\mathcal{S}, \mathcal{G}, \mathcal{A}, \mathcal{P}, \mathcal{R} \right)$, where $\mathcal{S}$, $\mathcal{G}$, and $\mathcal{A}$ denote the state space, goal space, and action space, respectively. The transition function $\mathcal{P}: \mathcal{S} \times \mathcal{A} \rightarrow \mathcal{S}$ defines the transition dynamics of environment, and the $\mathcal{R}: \mathcal{S} \times \mathcal{A} \times \mathcal{G} \rightarrow \mathbb{R}$ is the reward function. Specifically, the environment will transition from $s_t \in \mathcal{S}$ to a new state $s_{t+1} \in \mathcal{S}$ while yielding a reward $R_t \in \mathcal{R}$ once it takes an action $a_t \in \mathcal{A}$, where $s_{t+1} \sim \mathcal{P}\left(s_{t+1}|a_{t+1}, a_{t}\right)$ and $R_t$ is conditioned on a final goal $g \in \mathcal{G}$. In most real-world scenarios, complex tasks can often be decomposed into a sequence of simpler movements and interactions. Therefore, we formulate a hierarchical reinforcement learning framework, which typically has two \chreplaced{layers}{hierarchies}: higher- and lower-level policies, to deal with these challenging tasks. The higher-level policy observes the state $s_{t}$ of environment and produces a high-level action $sg_t$, i.e., a subgoal indicating a desired change of state or absolute location to reach every $c$ time steps. The lower-level policy attempts to reach these assigned subgoals within a $c$ time interval. Suppose that the higher-level policy and lower-level policy are parameterized by neural networks with parameters $\theta_{hi}$ and $\theta_{lo}$, respectively. The above procedure of the higher-level controller can be formulated: $sg_t \sim \pi \left(sg|s_t, g;\theta_{hi}\right) \in \mathcal{G}$ when $t \equiv 0$ (mod $c$). The lower-level policy observes the state $s_t$ as well as subgoal $sg_t$ and then yields a low-level atomic action to interact directly with the environment: $a_t \sim \pi \left(a|s_t,sg_t;\theta_{lo}\right) \in \mathcal{A}$. Notably, for the relative subgoal scheme, subgoals evolve following a pre-defined subgoal transition process $sg_t = h\left(sg_{t-1},s_{t-1},s_t\right) = sg_{t-1} + \varphi(s_{t-1} - s_t)$ when $t \not\equiv 0$ (mod $c$), where $\varphi: \mathcal{S} \rightarrow \mathcal{G}$ is a known mapping function that transforms a state into the goal space. The pre-defined transition makes the lower-level agent seem completely self-contained and like an autonomous dynamical system.

\subsection{Parameterized Rewards} During interaction with the environment, the higher-level agent makes a plan using subgoals and receives entire feedback by accumulating all external rewards within the planning horizon:
\begin{equation}
r^{hi}_t=\sum_{i=t}^{t+c-1} R_i \left(s_i,a_i,g\right)
\end{equation}
The lower-level agent is intrinsically motivated in the form of internal reward that evaluates subgoal-reaching performance:
\begin{equation}
r^{lo}_t=-\Vert sg_{t+1} -  \eta \varphi \left( s_{t+1} \right)\Vert_2
\label{intrinsic_reward}
\end{equation}
Where $\eta$ denotes a Boolean hyper-parameter whose value is 0/1 for the relative/absolute subgoal scheme.

\subsection{Experience Replay for Off-Policy Learning}
Experience replay has been the fundamental component for off-policy RL algorithms, which greatly improves the sample efficiency by reusing previously collected experiences. Here, there is no dispute that the lower-level agent can collect the experience $\tau_{lo} = \left( \left \langle s_t, sg_t \right \rangle, a_t, r^{lo}_t, \left \langle s_{t+1}, sg_{t+1} \right \rangle \right)$ by using the behavioral policy to directly interact with the environment. The higher-level agent interacts indirectly with it through the lower-level proxy and then stores a series of state-action transitions as well as a cumulative reward, i.e., $\tau_{hi} = \left( \left \langle s_{t:t+c-1}, g \right \rangle, sg_{t:t+c-1}, r^{hi}_t, \left \langle s_{t+c}, g \right \rangle \right)$, into the high-level replay buffer. The lower- and higher-level policies can be trained by sampling transitions stored in these experience replay buffers $\mathcal{D}_{lo}$, $\mathcal{D}_{hi}$. The aim of optimization is to maximize the expected discounted reward $\mathbb{E}_{L\in \left \{ lo, hi\right \} } \left[ \sum^{\infty}_{t=0} \gamma^i r^L_t\right]$, where $\gamma \in \left[0, 1 \right]$ is the discount factor. In practice, we instantiate lower- and higher-level agents based on the TD3 algorithm \cite{fujimoto2018addressing}, each having a pair of online critic networks with parameters $\phi_1$ and $\phi_2$, along with a pair of target critic networks with parameters $\phi^{\prime}_1$ and $\phi^{\prime}_2$. Additionally, TD3 has a single online actor parameterized by $\theta$ and a target actor parameterized by $\theta^{\prime}$. All target networks are updated using a soft update approach. Then, the Q-network can be updated by minimizing the mean squared temporal-difference (TD) error over all sampled transitions. To simplify notation, we adopt unified symbols $o^{L}_t$ and $a^L_t$ to indicate the observation and performed action, where $\left \langle o^{L}_t, a^L_t\big|_{L=lo}\right \rangle=\left \langle \left \langle s_t, sg_t \right \rangle, a_t \right \rangle$ for lower-level while $\left \langle o^{L}_t, a^L_t\big|_{L=hi}\right \rangle=\left \langle \left \langle s_t, g \right \rangle, sg_t \right \rangle$ for higher-level. Hence, the Q-learning loss can be written as follows:
\begin{equation}
\mathcal{L}(\phi_{i, L}) = \mathbb{E}_{\tau_{L} \sim \mathcal{D}_{L}} \left[Q \left(o^{L}_t, a^L_t; \phi_{i, L} \right) - y^{L}_t \right]^2 \Big|_{\substack{L\in \left \{ lo, hi\right \} \\ i\in \left \{1, 2\right \}}}
\label{critic_loss}
\end{equation}
Where $y^{L}_t$, i.e., $y^{lo}_t$ or $y^{hi}_t$, is dependent on $\theta_{lo}$ or $\theta_{hi}$ correspondingly because target policies map states to the "optimal" actions in an almost deterministic manner:
\begin{equation}
\begin{aligned}
y^{L}_t=r^{L}_t + \gamma \min_{i=1,2} &Q \left(o^{L}_{t^{\prime}}, \pi \left(o^{L}_{t^{\prime}};\theta^{\prime}_{L} \right)+\varepsilon; \phi^{\prime}_{i,L} \right) \Big|_{L\in \left \{ lo, hi\right \}}\\
{\rm with} \quad \varepsilon & \sim {\rm clip}(\mathcal{N}(0,\sigma), -a_c, a_c)
\label{critic_loss_q}
\end{aligned}
\end{equation}
Where $\sigma$ is the s.d. of the Gaussian noise, $a_c$ defines the range of the auxiliary noise, and $o^{L}_{t^{\prime}}$ refers to the next obtained observation after taking an action. It is noteworthy that $t^{\prime}=t+c$ with respect to the higher-level while $t^{\prime}=t+1$ for the lower-level. Drawing support from Q-network, the policy can be optimized by minimizing the following loss:
\begin{equation}
\mathcal{L}(\theta_{L}) = -\mathbb{E}_{\tau_{L} \sim \mathcal{D}_{L}}\left[ Q \left(o^{L}_{t}, \pi \left(o^{L}_{t};\theta_{L} \right); \phi_{1,L} \right) \right] \Big|_{L\in \left \{ lo, hi\right \}}
\label{actor_loss}
\end{equation}
As mentioned above, we outline the common actor-critic approach with the deterministic policy algorithms \cite{timothy2016continuous, fujimoto2018addressing}. For more details, please refer to \cite{fujimoto2018addressing}.

\subsection{Adjacency Constraint}
For the high-level subgoal generation, reachability within $c$ steps is a sufficient condition for facilitating reasonable exploration.  Zhang et al. \cite{zhang2020generating, zhang2022adjacency} mined the adjacency information from trajectories gathered by the changing behavioral policy over time\chdeleted{ during the training procedure}. In that study, a $c$-step adjacency matrix was constructed to memorize $c$-step adjacent state-pairs appearing in these trajectories. To ensure this procedure is differentiable and can be generalized to newly-visited states, the adjacency information stored in such matrix was further distilled into an adjacency network $\psi$ parameterized by $\Phi$. Specifically, the adjacency network approximates a mapping from a goal space into an adjacency space. Subsequently, the resulting embeddings can be utilized to measure whether two states are $c$-step adjacent using the Euclidean distance. For example, the $c$-step adjacent estimation (or shortest transition distance) of two states $s_i$ and $s_j$ can be calculated as: $d_{st}(s_i, s_j; \Phi) \approx \frac{c}{\zeta_c}||\psi_{\Phi}(\varphi(s_i))-\psi_{\Phi}(\varphi(s_j))||_2$, where $\zeta_c$ is a scaling factor and the $\varphi$ function maps states into the goal space. Such an adjacency network can be learned by minimizing the following contrastive-like loss: $\mathcal{L}_{{\rm adj}}(\Phi)=\mathbb{E}_{s_i,s_j\in\mathcal{S}}[l\cdot \max(||\psi_{\Phi}(\varphi(s_i))-\psi_{\Phi}(\varphi(s_j))||_2-\zeta_c, 0)+(1-l)\cdot \max(\zeta_c+\delta_{\rm adj}-||\psi_{\Phi}(\varphi(s_i))-\psi_{\Phi}(\varphi(s_j))||_2, 0)]$, where $\delta_{\rm adj} > 0 $ is a hyper-parameter indicating a margin between embeddings and $l \in \{0,1\}$ is the label indicating whether $s_i$ and $s_j$ are \chreplaced{$c$}{$k$}-step adjacent.

\subsection{Landmark-based Planning}
Graph-based navigation has become a popular technique for solving complex and sparse reward tasks by providing a long-term horizon. The relevant frameworks \cite{savinov2018semi, huang2019mapping, eysenbach2019search, emmons2020sparse, yang2020plan2vec, kim2021landmark, zhang2021world,leed2022hrl, kim2022imitating} commonly contain two components: (a) a graph built by sampling landmarks and (b) a graph planner to select waypoints. In a graph, each node corresponds to an observation state, while edges between nodes are weighted using a distance estimation. Specifically, a set of observations randomly subsampling from the replay buffer are organized as nodes, where high-dimensional samples (e.g., images) may be embedded into low-dimensional representations \cite{savinov2018semi, eysenbach2019search, liu2020hallucinative, yang2020plan2vec, zhang2021world}. However, operations over the direct subsampling of the replay buffer will be costly. The state aggregation \cite{emmons2020sparse} and landmark sampling based on farthest point sampling (FPS) were proposed for further sparsification \cite{huang2019mapping,kim2021landmark,leed2022hrl,kim2022imitating}. Our study follows prior works of Kim et al. \cite{kim2021landmark, kim2022imitating}, in which FPS was employed to select a collection of landmarks, i.e., \textit{coverage-based landmarks ${LM}^{\rm cov}$}, from the replay buffer. In addition to this, HIGL \cite{kim2021landmark} used random network distillation \cite{burda2019exploration} to explicitly sample novel landmarks, i.e., \textit{novelty-based landmarks ${LM}^{\rm nov}$}, a set of states rarely visited in the past. Hence, the final collection of landmarks was ${LM}={LM}^{\rm cov} \cup {LM}^{\rm nov}$. Once landmarks are added to the graph, the edge weight between any two vertices can be estimated by a lower-level value function \cite{huang2019mapping, eysenbach2019search,kim2021landmark,leed2022hrl,kim2022imitating} or the (Euclidean-based or contrastive-loss-based) distance between low-dimensional embeddings of states \cite{savinov2018semi,yang2020plan2vec,liu2020hallucinative,zhang2020generating}. Following prior works \cite{huang2019mapping,kim2021landmark,kim2022imitating}, in this study, we estimated the edge weight via the lower-level value function, i.e., $-V_{lo}(s_i, \varphi(s_j)) \approx -Q(s_i, \varphi(s_j), \pi \left(a|s_i,\varphi(s_j);\theta_{lo}\right); \phi_{lo}), \forall s_i, s_j \in LM$. After that, unreachable edges were clipped  by a preset threshold \cite{huang2019mapping}. In the end, the shortest path planning algorithm was run to plan the next subgoal, the very first landmark in the shortest path from the current state $s_t$ to the goal $g$: 
\begin{equation}
sg_{t}^{\rm plan}=\arg\min_{\varphi(s_i)}-[V_{lo}(s_t, \varphi(s_i))+V_{lo}(s_i, g)], \qquad\text{s.t.} \quad \forall s_i \in {LM}^{\rm cov} \cup {LM}^{\rm nov}
\label{HIGL_landmarks}
\end{equation}

\subsection{\chadded{Disentangled Variant of HIGL \cite{kim2021landmark}: \textbf{A}djacency \textbf{C}onstraint and \textbf{L}andmark-\chreplaced{\textbf{G}uided p}{based \textbf{P}}lanning (\textbf{ACLG})}}

\begin{figure*}[htbp]
\captionsetup[subfloat]{format=hang, justification=centering}
\centering
\includegraphics[width=0.9\textwidth]{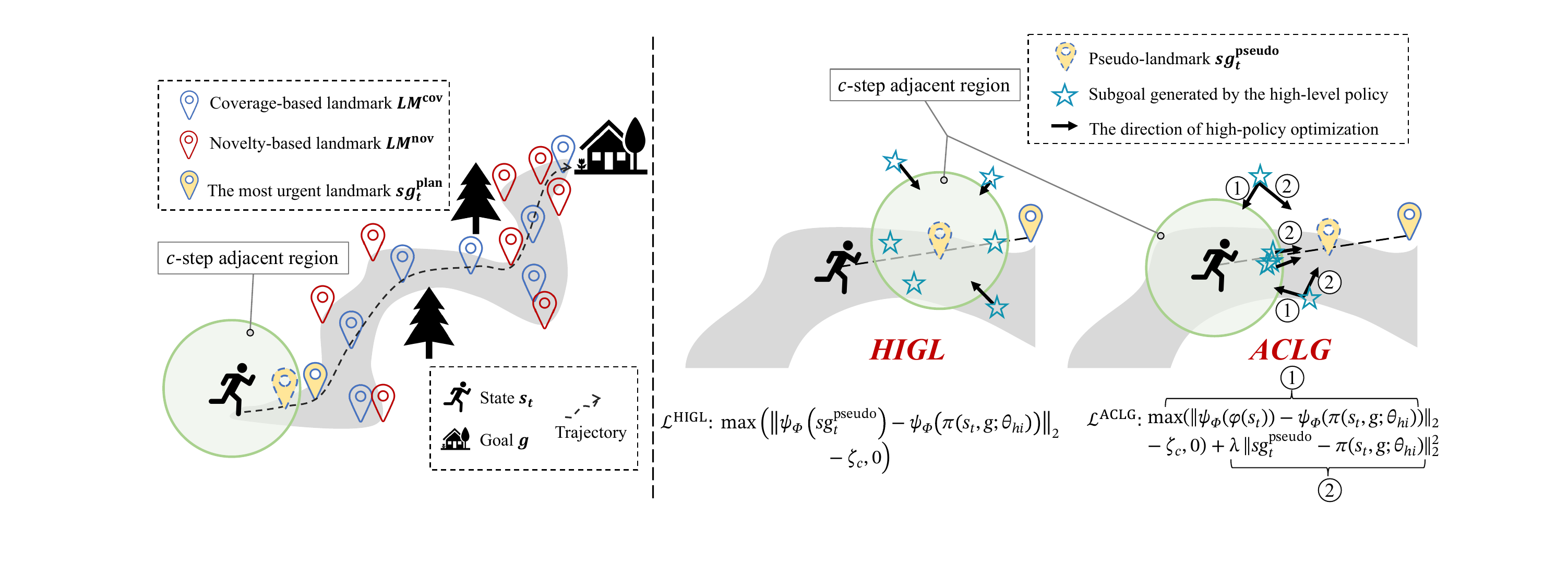}
\caption{\chadded{Illustrations of HIGL \cite{kim2021landmark} and adjacency constraint and landmark-guided planning (ACLG). In HIGL, coverage- and novelty-based landmarks are selected to form a map, from which the most urgent landmark is chosen as the next expected subgoal. Meanwhile, to ensure the reachability of subgoals, HIGL introduces the adjacent constraint. However, in HIGL, the entanglement between the adjacency constraint and landmark-based planning only compels the subgoals to move towards the selected landmark, without guaranteeing $c$-step adjacency with the current state. The ACLG decouples the two to provide a better balance between the adjacency and landmark-based planning.}}
\label{higl_aclg_vis}
\end{figure*}

In HIGL, after finding a landmark through landmark-based planning (see Equation~\ref{HIGL_landmarks}), the raw selected landmark was shifted towards the current state $s_t$ for reachability: $sg_{t}^{\rm pseudo}= sg_{t}^{\rm plan} + \delta_{\rm pseudo} \cdot \frac{sg_{t}^{\rm plan}-\varphi(s_t)}{||sg_{t}^{\rm plan}-\varphi(s_t)||_2}$, where $\delta_{\rm pseudo}$ denotes the shift magnitude. Then, the higher-level policy was guided to generate subgoals adjacent to the planned landmarks and the landmark loss of HIGL was formulated as:
\begin{equation}
\mathcal{L}^{\rm HIGL}_{\rm landmark}(\theta_{hi}) = \lambda^{\rm HIGL}_{\rm landmark} \cdot \max(||\psi_{\Phi}(sg_{t}^{\rm pseudo}) -\psi_{\Phi}(\pi(s_t, g;\theta_{hi}))||\chadded{_{2}} - \zeta_c, 0)
\label{higl_loss}
\end{equation}
Here, Kim et al.\chadded{\cite{kim2021landmark}} employed the adjacency constraint to encourage the generated subgoals to be in the $c$-step adjacent region to the planned landmark. However, the entanglement between the adjacency constraint and landmark-based planning limited the performance of HIGL. 

Inspired by PIG \cite{kim2022imitating}, we proposed a disentangled variant of HIGL. We only made minor modifications to the landmark loss (see Equation \ref{higl_loss}) of HIGL:
\begin{equation}
\begin{aligned}
\mathcal{L}^{\rm ACLG}&(\theta_{hi}) \\= & \lambda_{\rm adj} \cdot \max(||\psi_{\Phi}(\varphi(s_t))-\psi_{\Phi}(\pi(s_t, g;\theta_{hi}))||\chadded{_{2}} - \zeta_c, 0) \\ \qquad &\qquad+ \lambda^{\rm ACLG}_{\rm landmark} \cdot ||sg_{t}^{\rm pseudo} - \pi(s_t, g;\theta_{hi}) ||^2_2
\end{aligned}
\end{equation}
The former term is the adjacency constraint and the latter is the landmark-guided loss, so the proposed method was called ACLG. The hyper-parameters $\lambda_{\rm adj}$ and $\lambda^{\rm ACLG}_{\rm landmark}$ were introduced to better balance the adjacency constraint and landmark-based planning. \chadded{The illustrations (see Fig.~\ref{higl_aclg_vis}) visually demonstrate the differences between HIGL and ACLG.}

\section{Related work}
\label{related_work}

\subsection{Transition Relabeling}
Training a hierarchy using an off-policy algorithm remains a prominent challenge due to the non-stationary state transitions \cite{nachum2018data, levy2019learning, zhu2021mapgo}. Specifically, the higher-level policy takes the same action under the same state but could receive markedly different outcomes because of the low-level policy changing, so the previously collected transition tuple is no longer valid. To address the issues, HIRO \cite{nachum2018data} deployed an off-policy correction to maintain the validity of past experiences, which relabeled collected transitions with appropriate high-level actions chosen to maximize the probability of the past lower-level actions. Alternative approaches, HAC \cite{andrychowicz2017hindsight, levy2019learning}, replaced the original high-level action with the achieved state (projected to the goal space) in the form of hindsight. However, HAC-style relabeled subgoals are \chdeleted{only }compatible with the past low-level policy rather than the current one, deteriorating the non-stationarity issue.

Our work is related to HIRO \cite{nachum2018data}, and the majority of modification is that we roll out the off-policy correction using learned transition dynamics to suppress the accumulative error. The closest work is the MapGo \cite{zhu2021mapgo}, a model-based HAC-style framework in which the original goal was replaced with a foresight goal by reasoning using an ensemble transition model. Our work differs in that we screen out a faithful subgoal that induces rollout-based action sequence similar to the past transitions, while the MapGo overwrites the subgoal with a foresight goal based on the model-based rollout. Meanwhile, our framework proposes a gradient penalty with model-inferred upper bound to prohibit the disturbance caused by relabeling to the behavioral policy. 

\subsection{Model Exploitation in Goal-conditioned HRL}
The promises of model-based RL (MBRL) have been extensively discussed in past research \cite{moerland2023model}. The well-known model-based RL algorithm, Dyna \cite{sutton1991dyna}, leveraged a dynamics model to generate one-step transitions and then update the value function using these imagined data, thus  accelerating the learning. Recently, instantiating environment dynamics\chdeleted{model} using an ensemble of probabilistic networks has become quite popular because of its ability to model both aleatory uncertainty and epistemic uncertainty \cite{chua2018deep}. Hence, a handful of Dyna-style methods proposed to simulate multi-step rollouts by using ensemble models, such as SLBO \cite{luo2019algorithmic} and MBPO \cite{janner2019trust}. Alternatively, the  model-based value expansion methods performed multi-step simulation and estimated the future transitions using the Q-function, which helped to reduce the value estimation error. The representative \chdeleted{value estimation }algorithms include MVE \cite{feinberg2018model} and STEVE \cite{buckman2018sample}. Besides, in fact, the estimated value of states can directly provide gradients to the policy when the learned dynamic models are differentiable, like Guided Policy Search \cite{levine2013guided} and Imagined Value Gradients \cite{byravan2020imagined}. Our work differs from these works since we use the higher-level Q-function to estimate the value of future lower-level transitions. As stated in a recent survey \cite{a2023luo}, there have been only a few works \cite{nair2020hierarchical, zhu2021mapgo} involving the model exploitation in the goal-conditioned RL. To our knowledge, there is no prior work studying such inter-level planning.

\section{Methods}
\label{methods}
This section explains how our framework with Guided Cooperation via Model-based Rollout (GCMR) promotes inter-level cooperation. The GCMR involves three critical components: 1) the off-policy correction via model-based rollouts, 2) gradient penalty with a model-inferred upper bound, and 3) one-step rollout-based planning. Below, we \chdeleted{first introduce a strong baseline, ACLG, and then }detail the architecture of the dynamics model and such three critical components.

\subsection{Forward Dynamics Modeling}
A bootstrapped ensemble of dynamics models is constructed to approximate the true transition dynamics of environment: $f(s_{t+1}|s_t,a_t)$, which has been demonstrated in several studies \cite{chua2018deep, kurutach2018model, janner2019trust, shen2020model, yu2020mopo, yu2021combo}. We denote the dynamics approximators as $\Gamma_{\xi} = \{\hat{f}^1_{\xi}, \dots, \hat{f}^B_{\xi}\}$, where $B$ is the ensemble size and $\xi$ denotes the parameters of models. Each model of the ensemble projects the state $s_t$ conditioned on the action $a_t$ to a Gaussian distribution of the next state, i.e., $\hat{f}^b_{\xi}(s_{t+1}|s_t,a_t)=\mathcal{N}(\mu^b_{\xi}(s_t,a_t),\Sigma^b_{\xi}(s_t,a_t))$, with $b \in \{1,\dots,B\}$. In usage, a model is picked out uniformly at random to predict the next state. Note that, here, we do not learn the 
reward function because the compounding error from multi-step rollouts makes it infeasible for higher-level to infer the future cumulative rewards. As for the lower-level agent, the reward can be computed through the  intrinsic reward function (see Equation \ref{intrinsic_reward}) on the fly. Finally, such dynamics models are trained via maximum likelihood and are incorporated to encourage inter-level cooperation and stabilize the policy optimization process.

\subsection{Off-Policy Correction via Model-based Rollouts}
With well-trained dynamics models, we expand the vanilla off-policy correction in HIRO \cite{nachum2018data} by using the model-generated state transitions to bridge the gap between the past and current behavioral policies. Recall a stored high-level transition $\tau_{hi} = \left( \left \langle s_{t:t+c-1}, g \right \rangle, sg_{t:t+c-1}, r^{hi}_t, \left \langle s_{t+1:t+c}, g \right \rangle \right)$ in the replay buffer, which is converted into a state-action-reward transition: $\tau_{hi} = \left( \left \langle s_t, g \right \rangle, sg_t, r^{hi}_t, \left \langle s_{t+c}, g \right \rangle \right)$ during training. Relabeling either the cumulative rewards or the final state via $c$-step rollouts, resembling the FGI in MapGo \cite{zhu2021mapgo}, substantially suffers from the high variance of long-horizon prediction. In essence, both the final state $s_{t+c}$ and the reward sequence $R_{t:t+c-1}$ are explicitly affected by the action sequence $a_{t:t+c-1}$. Hence, relabeling the $sg_t$, instead of the $s_{t+c}$ or $r^{hi}_t$, with an action sequence-based maximum likelihood is a promising way to improve sample efficiency. Following prior work \cite{nachum2018data}, we consider the maximum likelihood-based action relabeling:
\begin{equation}
\log \pi(a_{t:t+c-1}|s_{t:t+c-1},\tilde{sg}_{t:t+c-1};\theta_{lo}) \propto -\frac{1}{2} \sum^{i+c-1}_{i=t} \Vert a_i - \pi(s_i,\tilde{sg}_i;\theta_{lo}) \Vert^2_2+\rm{const}
\label{opc_hiro}
\end{equation}
Where $\tilde{sg}_t$ indicates the candidate subgoals sampled randomly from a Gaussian centered at $\varphi(s_{t+c})$. Meanwhile, the original goal $sg_t$ and the achieved state (in goal space) $\varphi(s_{t+c})$ are also taken into consideration. Specifically, according to Equation~\ref{opc_hiro}, the current low-level policy performed $c$-step rollouts conditioned on these candidate subgoals. These sub-goals maximizing the similarity between original and rollout-based action sequences will be selected as optimal. Yet, we find that the current behavioral policy cannot produce the same action as in the past, so the $s_{t+1}$\chreplaced{ may not be revisited. }{ will not be reached at all.}Therefore, the vanilla off-policy correction still suffers from the cumulative error due to the gap between the $s_{t+1:t+c}$ and the unknown transitions $\hat{s}_{t+1:t+c}$. In view of this fact, we roll out these transitions using the learned dynamics models $\Gamma_{\xi}$ to mitigate the issue. Besides, we employ an exponential weighting function along the time axis to highlight shorter rollouts and \chreplaced{slowly shift the original states to the rollout-based ones.}{eliminate the effect of cumulative error.} Then Equation~\ref{opc_hiro} is rewritten as:
\begin{equation}
\begin{aligned}
 &\log\pi(a_{t:t+c-1}|s_{t},\tilde{sg}_{t};\theta_{lo}) \propto -\mathbb{E}_{\hat{a}_{i}} \chdeleted{\rho^{i-t} }\cdot \Vert a_i-\hat{a}_i \Vert^2_2 +\rm{const} \\
 &\quad\text{s.t.} \quad \hat{a}_{i} \sim \pi(\hat{s}_{i}, \tilde{sg}_i;\theta_{lo}) ;\\ \chdeleted{\text{ and }} &\quad\quad\quad\hat{s}_{i+1} \sim \chadded{(1-\rho^{i-t})\cdot}\Gamma_{\xi}(\hat{s}_{i}, \hat{a}_{i}) \chadded{+ \rho^{i-t}\cdot{s}_{i+1}}
\end{aligned}
\end{equation}
Where $t \leq i \le t+c-1$ and $\hat{s}_{i}\big|_{i=t} = s_t$. $\rho \in \mathbb{R}$. $\rho \in \mathbb{R}$ is a hyper-parameter indicating the base of the exponential function, where in practice, we set $\rho$ to 0.95.

\textit{\textbf{Soft-Relabeling:}}
Inspired by the pseudo-landmark shift of HIGL \cite{kim2021landmark}, instead of an immediate overwrite, we use a soft mechanism to smoothly update subgoals:
\begin{equation}
 sg_{t}\leftarrow  sg_{t} + \chreplaced{\delta_{sg}}{\delta_{g}}\frac{\Delta sg_{t}}{\Vert \Delta sg_{t} \Vert_2}; \qquad
\Delta sg_{t} := \tilde{sg}_t - sg_{t}
\end{equation}
Where $\chreplaced{\delta_{sg}}{\delta_{g}}$ represents the shift magnitude\chreplaced{ from the original subgoals}{ updated at the constant speed $\epsilon$}. The soft update is expected to be robust to outliers\chdeleted{ and protect the Q-learning, which is highly sensitive to noise}.

\subsection{Gradient Penalty with a Model-Inferred Upper Bound}
Apparently, from the perspective of the lower-level policy, the subgoal relabeling implicitly brings in a distributional shift of observation. Specifically, these relabeled subgoals are sampled from the goal space but are not executed in practice. The behavioral policy is prone to produce unreliable actions under such an unseen or faraway goal, resulting in ineffective exploration. Motivated by \chreplaced{\cite{blonde2022lipschitzness,kumar2019stabilizing, gao2022robust}}{\cite{kumar2019stabilizing, gao2022robust}}, we pose the Lipschitz constraint on the Q-function gradients to stabilize the Q-learning of behavioral policy. To understand the effect of the gradient penalty, we highlight the \chreplaced{Lipschitz}{Llipschitz} property of the learned Q-function.

\begin{prop}
\label{gradient_penalty}
Let $\pi^*(a_t|s_t)$ and $r^*(s_t,a_t)$ be the policy and the reward function in an MDP. Suppose there are the upper bounds of Frobenius norm of the policy and reward gradients w.r.t. input actions, i.e., $\Vert \frac{\partial \pi^*(a_{t+1}|s_{t+1})}{\partial a_t}\Vert_F \leq L_{\pi} < 1 $ and $\Vert \frac{\partial r^*(\chreplaced{s_{t}, a_{t}}{s_{t+1}, a_{t+1}})}{\partial a_t}\Vert_F \leq L_{r}$. Then the gradient of the learned Q-function w.r.t. action can be upper-bounded as:
\begin{equation}
\Vert \nabla_{a_t}Q_{\pi^*}(s_t,a_t) \Vert_F \leq \frac{\sqrt{N}L_r}{1-\gamma L_{\pi}}
\end{equation}
Where $N$ denotes the dimension of the action and $\gamma$ is the discount factor.
\end{prop}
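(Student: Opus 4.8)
The plan is to express the Bellman equation for $Q_{\pi^*}$, differentiate it with respect to the action $a_t$, and then bound the resulting expression recursively. Starting from
\begin{equation}
Q_{\pi^*}(s_t,a_t) = r^*(s_t,a_t) + \gamma\, \mathbb{E}_{s_{t+1}}\!\left[ Q_{\pi^*}\!\big(s_{t+1}, \pi^*(a_{t+1}|s_{t+1})\big)\right],
\end{equation}
I would take $\nabla_{a_t}$ of both sides. The first term contributes $\nabla_{a_t} r^*(s_t,a_t)$, whose Frobenius norm is at most $L_r$ by hypothesis. For the second term, the dependence on $a_t$ enters only through the next state $s_{t+1}$ (assuming a deterministic or reparameterized transition, which is the setting of this paper) and hence through $a_{t+1} = \pi^*(\cdot|s_{t+1})$; applying the chain rule yields a factor $\frac{\partial \pi^*(a_{t+1}|s_{t+1})}{\partial a_t}$ contracted with $\nabla_{a_{t+1}} Q_{\pi^*}(s_{t+1},a_{t+1})$, whose norm is controlled by $L_\pi$ times the Q-gradient norm one step ahead.

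Next I would let $M := \sup_{s,a} \Vert \nabla_a Q_{\pi^*}(s,a)\Vert_F$ (assuming this supremum is finite, which should be remarked as a regularity assumption on the MDP), apply the triangle inequality and submultiplicativity of the Frobenius norm to the differentiated Bellman equation, and obtain the scalar inequality $M \le \sqrt{N}\,L_r + \gamma L_\pi M$. The $\sqrt{N}$ factor arises from relating the norm of the reward gradient, which lives in $\mathbb{R}^N$, to the matrix norms appearing elsewhere — I would check carefully where dimension-counting introduces it (likely from bounding a vector's Euclidean norm by its appearance as a column block, or from $\Vert v\Vert_2 \le \sqrt{N}\Vert v\Vert_\infty$-type estimates in the chain-rule product). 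Solving $M \le \sqrt{N}L_r + \gamma L_\pi M$ for $M$, using $\gamma L_\pi < 1$ (which holds since $L_\pi < 1$ and $\gamma \le 1$), gives $M \le \frac{\sqrt{N}L_r}{1-\gamma L_\pi}$, which is exactly the claimed bound.

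The main obstacle is making the recursive step rigorous: the naive argument assumes $M < \infty$ a priori, so I would either (i) phrase the result as a fixed-point/contraction statement — the operator $T(x) = \sqrt{N}L_r + \gamma L_\pi x$ is a contraction on $[0,\infty)$ with unique fixed point $\frac{\sqrt{N}L_r}{1-\gamma L_\pi}$, and the Q-gradient norm, being a bounded quantity in the discounted setting, must lie below it — or (ii) unroll the Bellman recursion for $n$ steps to get $M \le \sqrt{N}L_r \sum_{k=0}^{n-1}(\gamma L_\pi)^k + (\gamma L_\pi)^n M_n$ and let $n\to\infty$, with the tail vanishing because $\gamma L_\pi < 1$ and the per-step gradients are uniformly bounded. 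A secondary subtlety is the stochastic transition: passing $\nabla_{a_t}$ through $\mathbb{E}_{s_{t+1}}$ requires either a deterministic transition (as in the forward dynamics model used in this paper), a reparameterization of $\mathcal{P}$, or a dominated-convergence justification; I would state explicitly which assumption is invoked and note that it is consistent with the deterministic transition function $\mathcal{P}: \mathcal{S}\times\mathcal{A}\to\mathcal{S}$ declared in the preliminaries.
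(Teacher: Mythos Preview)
Your approach is correct and reaches the claimed bound, but it differs from the paper's route. The paper does not work with the one-step Bellman recursion; instead it fully unrolls $Q_{\pi^*}(s_t,a_t)=\sum_{c\ge 0}\gamma^c\,\mathbb{E}_{s_{t+c}\mid s_t}[r^*(s_{t+c},a_{t+c})]$, proves a preparatory lemma showing $|\nabla_{a_t}\mathbb{E}[r^*(s_{t+c},a_{t+c})]|\le L_r L_\pi^{\,c}$ by iterated chain rule through the action chain, and then bounds $\Vert\nabla_{a_t}Q\Vert_F^2$ \emph{componentwise}: each of the $N$ partial derivatives $\partial_{a_t^i}Q$ is bounded by $\sum_c \gamma^c L_r L_\pi^{\,c}=L_r/(1-\gamma L_\pi)$, and summing the $N$ squares produces the $\sqrt{N}$. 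This resolves your uncertainty about the dimension factor: it is an artifact of the componentwise bookkeeping (each $|\partial_{a_t^i} r^*|$ is bounded by the full norm $L_r$), not of any matrix-norm conversion in the chain-rule step. Your Bellman fixed-point argument, carried out directly with vector norms and using $\|\cdot\|_{\mathrm{op}}\le\|\cdot\|_F$ on the policy Jacobian, actually yields the tighter recursion $M\le L_r+\gamma L_\pi M$ and hence $M\le L_r/(1-\gamma L_\pi)$ without the $\sqrt{N}$, which of course still implies the stated inequality. The paper's unrolled argument is closer to your option~(ii) than to~(i), and by summing the geometric series explicitly it sidesteps the need to assume $M<\infty$ a priori.
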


\begin{proof}
See the \nameref{sec:mgp_proof}.
\end{proof}

\begin{remark}
Proposition \ref{gradient_penalty} proposes a tight upper bound. A more conservative upper bound can be obtained by employing the inequality pertaining to $L_{\pi}$:
\begin{equation}
\Vert \nabla_{a_t}Q_{\pi^*}(s_t,a_t) \Vert_F < (1-\gamma)^{-1} \sqrt{N} L_r
\end{equation}
Hence, a core challenge in the applications is how to estimate the upper bound of reward gradients w.r.t. input actions.
\end{remark}

Now, we propose an approximate upper-bound approach grounded on the learned dynamics $\Gamma_{\xi}$. Fortunately, the lower-level reward function is specified in the form of L2 distance and is immune to environment stochasticity. Naturally, the upper bound of reward gradients w.r.t. input actions can be estimated as:
\begin{equation}
\begin{aligned}
\hat{L}_r =& \sup \left \{ \Vert \nabla_{a_t} \Vert sg_{t+1} -  \eta \varphi ( s_{t+1} )\Vert_2 \Vert_F \right \}  \\
 &\text{s.t.}\quad s_{t+1} \in \mathcal{S}, sg_t \in \mathcal{G}, a_t \in \mathcal{A}
 \end{aligned}
\end{equation}
In practice, we approximate the upper bound using a mini-batch of lower-level observations independently sampled from the replay buffer $\mathcal{D}_{lo}$, yielding a tighter upper bound and, in turn, more forcefully penalizing the gradient:
\begin{equation}
\begin{aligned}
\hat{L}_r \simeq& \max \left \{ \Vert \nabla_{a_t} \Vert sg_t+ \varphi(s_t-\Gamma_{\xi}(s_t,a_{t})) \right. \left. -  \eta \varphi (\Gamma_{\xi}(s_t,a_{t}) )\Vert_2 \Vert_F \right \} \\
 &\text{s.t.} \quad s_t, sg_t \sim \mathcal{D}_{lo} \text{ and } a_{t} \sim \pi(s_{t}, sg_t;\theta_{lo})
 \end{aligned}
\end{equation}
Then, following prior works \cite{gao2022robust}, we plug the gradient penalty term into the lower-level Q-learning loss (see Equation \ref{critic_loss}), which can be formulated as:
\begin{equation}
\begin{aligned}
\mathcal{L}_{gp}(\phi_{lo}) &= \lambda_{gp} \cdot \mathbb{E}_{s_t, sg_t}[{\rm ReLU}(\Vert \nabla_{a_t}Q_{\pi}(s_t, sg_t,a_t; \phi_{lo}) \Vert_F - (1-\gamma)^{-1}\sqrt{N} \cdot \hat{L}_r)]^2 \\
&\text{s.t.} \quad s_t, sg_t \sim \mathcal{D}_{lo} \text{ and } a_{t} \sim \pi(s_{t}, sg_t;\theta_{lo})
\end{aligned}
\end{equation}
Where $\lambda_{gp}$ is a hyper-parameter controlling the effect of the gradient penalty term. Because the gradient penalty enforces the Lipschitz constraint on the critic, limiting its update, we had to increase the number of critic training iterations to 5, a recommended value in WGAN-GP \cite{gulrajani2017improved}, per actor iteration. Considering the computational efficiency, we apply the gradient penalty every 5 training steps.

\newpage
\subsection{One-Step Rollout-based Planning}

\begin{wrapfigure}{r}{0.5\textwidth}
\centering
\includegraphics[width=0.5\textwidth]{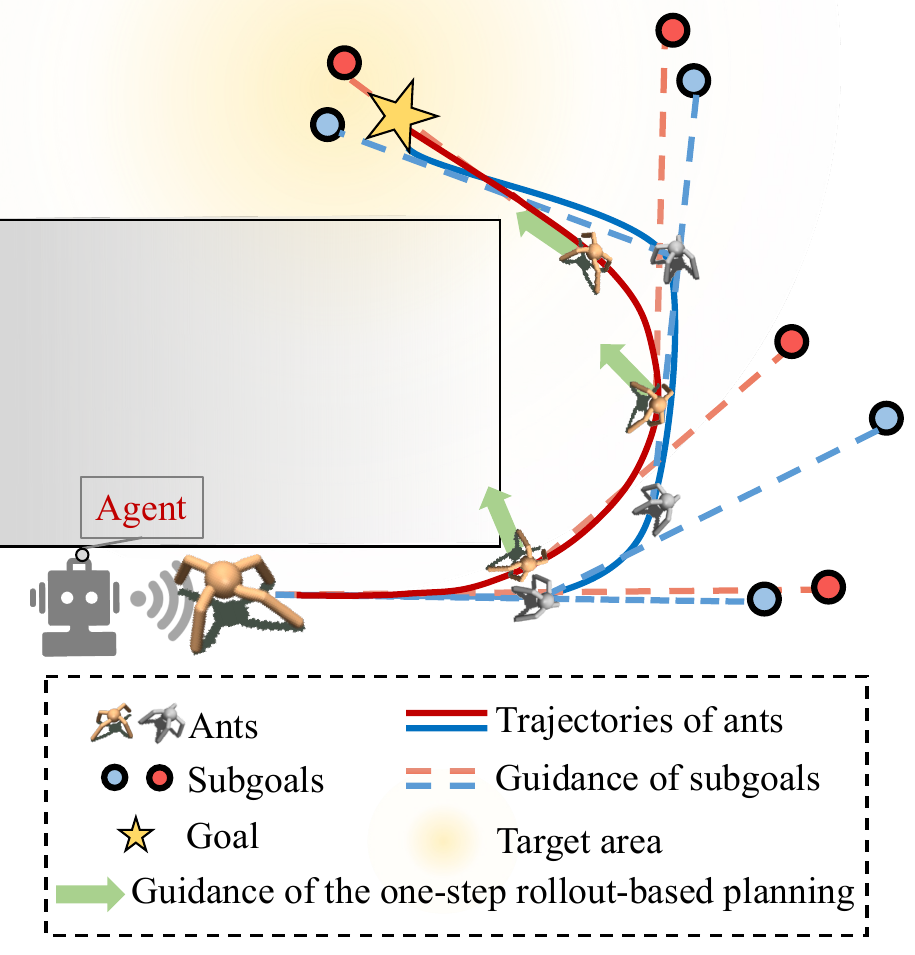}
\caption{\chadded{One-step rollout-based planning endeavors to utilize global information to direct the behavioral policy. Our method steers lower-level policy towards valuable highlands with respect to the final goal (see red trajectory), surpassing the performance of general HRL (see blue trajectory).}}
\label{osrp_workflow}
\end{wrapfigure}
In a flat model-based RL framework, model-based value expansion-style methods \cite{feinberg2018model, buckman2018sample} use dynamics models to simulate short rollouts and evaluate future transitions using the Q-function. Here\chadded{, as shown in Fig.~\ref{osrp_workflow}}, we steer the behavioral policy towards globally valuable states, i.e., having a higher higher-level Q-value. Specifically, we perform a one-step rollout and evaluate the next transition using the higher-level critics. The objective is to minimize the following loss:
\begin{equation}
\begin{aligned}
\mathcal{L}_{osrp} =& -\lambda_{osrp}\cdot\\ 
&\mathbb{E}_{s_t, g, sg_t}\left[Q(\Gamma_{\xi}(s_t, a_t),g,sg_{t+1};\phi_{hi}) \right]\\
&\text{s.t.} \qquad s_t \in \mathcal{S} \\
&\text{\quad} \qquad g, sg_t \in \mathcal{G}  \\ 
&\text{\quad}\qquad a_{t} \sim \pi(s_{t}, sg_t;\theta_{lo}))
\end{aligned}
\label{osrp_1}
\end{equation}
Where $\lambda_{osrp}$ is a hyper-parameter to weigh the planning loss. Note that the $sg_t$ is not determined by higher-level policy solely. Meanwhile, considering that the higher-level policy is also \chdeleted{constantly }changing over time, the $sg_t$ is sampled randomly from a Gaussian distribution centered at $\pi(s_{t}, g;\theta_{hi})$. In practice, a pool of $s_t$ and $g$ is sampled from the buffer $\mathcal{D}_{hi}$, and then they are repeated ten times with shuffling the $g$. Next, these samples are duplicated again to accommodate the variance of $sg_t$. On the other hand, the next step’s subgoal $sg_{t+1}$ is also produced by the fixed goal transition function or by the higher-level policy conditioning on the observation. But, from the perspective of lower-level policy, the probability of such two events is equal because of the property of Markov decision process, i.e.,
\begin{equation}
Pr\{sg_{t+1}=h\left(sg_{t},s_{t},s_{t+1}\right)|s_t, sg_t, a_t\} = Pr\{sg_{t+1}=\pi(s_t,g;\theta_{hi})|s_t, sg_t, a_t\} = 0.5
\end{equation}
Hence, the Equation \ref{osrp_1} is instantiated:
\begin{equation}
\begin{aligned}
&\mathcal{L}_{osrp} \\
&= -\lambda_{osrp} \cdot \mathbb{E}_{s_t, g, sg_t \atop sg_{t+1} \in \{h, \pi(\theta_{hi})\}} Q(\Gamma_{\xi}(s_t, a_t),g,sg_{t+1};\phi_{hi}) \\
&= -\frac{1}{2}\lambda_{osrp} \cdot \mathbb{E}_{s_t, g, sg_t} \left[Q(\Gamma_{\xi}(s_t, a_t),g,h;\phi_{hi}) \right.+ \underbrace{\left. Q(\Gamma_{\xi}(s_t, a_t),g,\pi(\theta_{hi});\phi_{hi}) \right] }_{\textcircled{a}} \\
 &\text{s.t.} \quad s_t, g, sg_t \sim \mathcal{D}_{hi} \text{ and } a_{t} \sim \pi(s_{t}, sg_t;\theta_{lo})
 \end{aligned}
\end{equation}

Obviously, the second term $\textcircled{a}$ is too dependent on current higher-level policy. The TD3 \cite{fujimoto2018addressing} seeks to smoothen the value estimate by bootstrapping off of nearby state-action pairs. Similarly, we add clipped noise to keep the value estimate robust. This makes our modified term $\textcircled{a}$:
\begin{equation}
\begin{aligned}
\textcircled{a} := Q(&\Gamma_{\xi}(s_t, a_t),g,\pi(\theta_{hi})+\varepsilon;\phi_{hi}) \\
{\rm with} \quad \varepsilon & \sim {\rm clip}(\mathcal{N}(0,\sigma), -a_c, a_c)
 \end{aligned}
\end{equation}
Where the hyper-parameters $\sigma$ and $a_c$ are common in the TD3 algorithm (see Equation~\ref{critic_loss_q}).
In the end, $\mathcal{L}_{osrp}$ is incorporated into lower-level actor loss (see Equation~\ref{actor_loss}) to guide the lower-level policy towards valuable highlands with respect to the overall task. Here, in the same way, we employ the one-step rollout-based planning every 10 training steps.

\section{Experiments}
\label{experiments}
We evaluated the proposed GCMR on challenging continuous control tasks, as shown in Fig.~\ref{environments}. Specifically, the following simulated robotics environments are considered:
\begin{itemize}[leftmargin=2em]
	\item Point Maze \cite{kim2021landmark}: In this environment, a simulated ball starts at the bottom left corner and navigates to the top left corner in a '$\sqsupset$'-shaped corridor.
	\item Ant Maze (W-shape) \cite{kim2021landmark}: In a '$\exists$'-shaped corridor, a simulated ant starts from a random position and must navigate to the target location at the middle left corner.
	\item Ant Maze (U-shape) \cite{nachum2018data, kim2021landmark}, Stochastic Ant Maze (U-shape) \cite{zhang2020generating, zhang2022adjacency}, and Large Ant Maze (U-shape): A simulated ant starts at the bottom left corner in a '$\sqsupset$'-shaped corridor and seeks to reach the top left corner. As for the randomized variation, \textit{Stochastic} Ant Maze (U-shape) introduces environmental stochasticity by replacing the agent's action at each step with a random action (with a probability of 0.25).
	\item Ant Maze-Bottleneck \cite{leed2022hrl}: The environment is almost the same as the Ant Maze (U-shape). Yet, in the middle of the maze, there is a very narrow bottleneck so that the ant can barely pass through it.
	\item Pusher \cite{kim2021landmark}: A 7-DOF robotic arm is manipulated into pushing a (puck-shaped) object on a plane to a target position.
	\item Reacher \cite{kim2021landmark}: \chreplaced{A 7-DOF robotic arm is manipulated to make the end-effector reach a spherical object that is randomly placed in mid-air.}{A 7-DOF robotic arm is manipulated to make the end-effector reach a target position.}
\end{itemize}
    
\begin{figure}[htbp]
\captionsetup[subfloat]{format=hang, justification=centering}
\centering
\subfloat[Point Maze]{\includegraphics[width=0.23\textwidth]{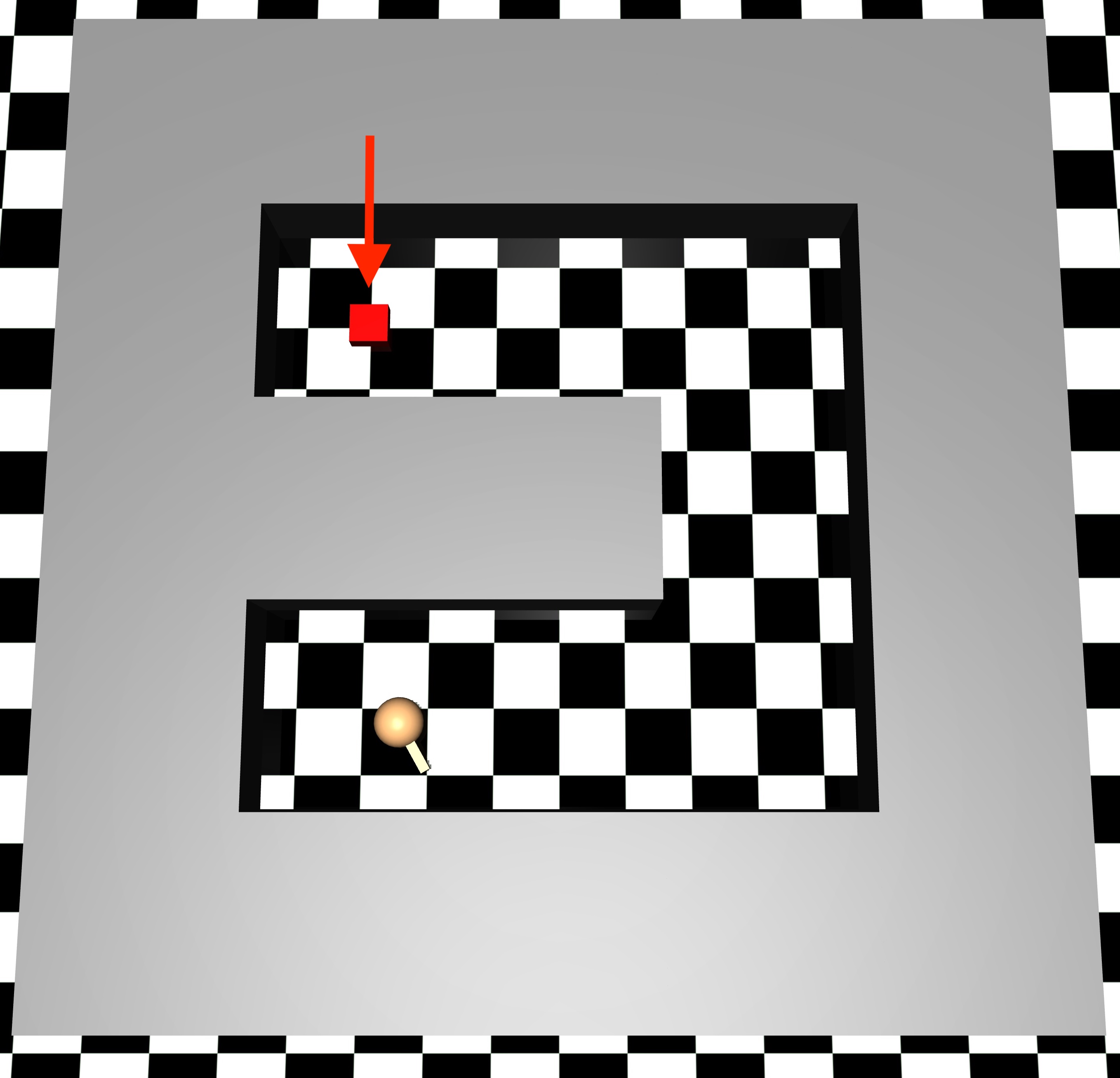}} 
\subfloat[Ant Maze \protect\\(W-shape)]{\includegraphics[width=0.24\textwidth]{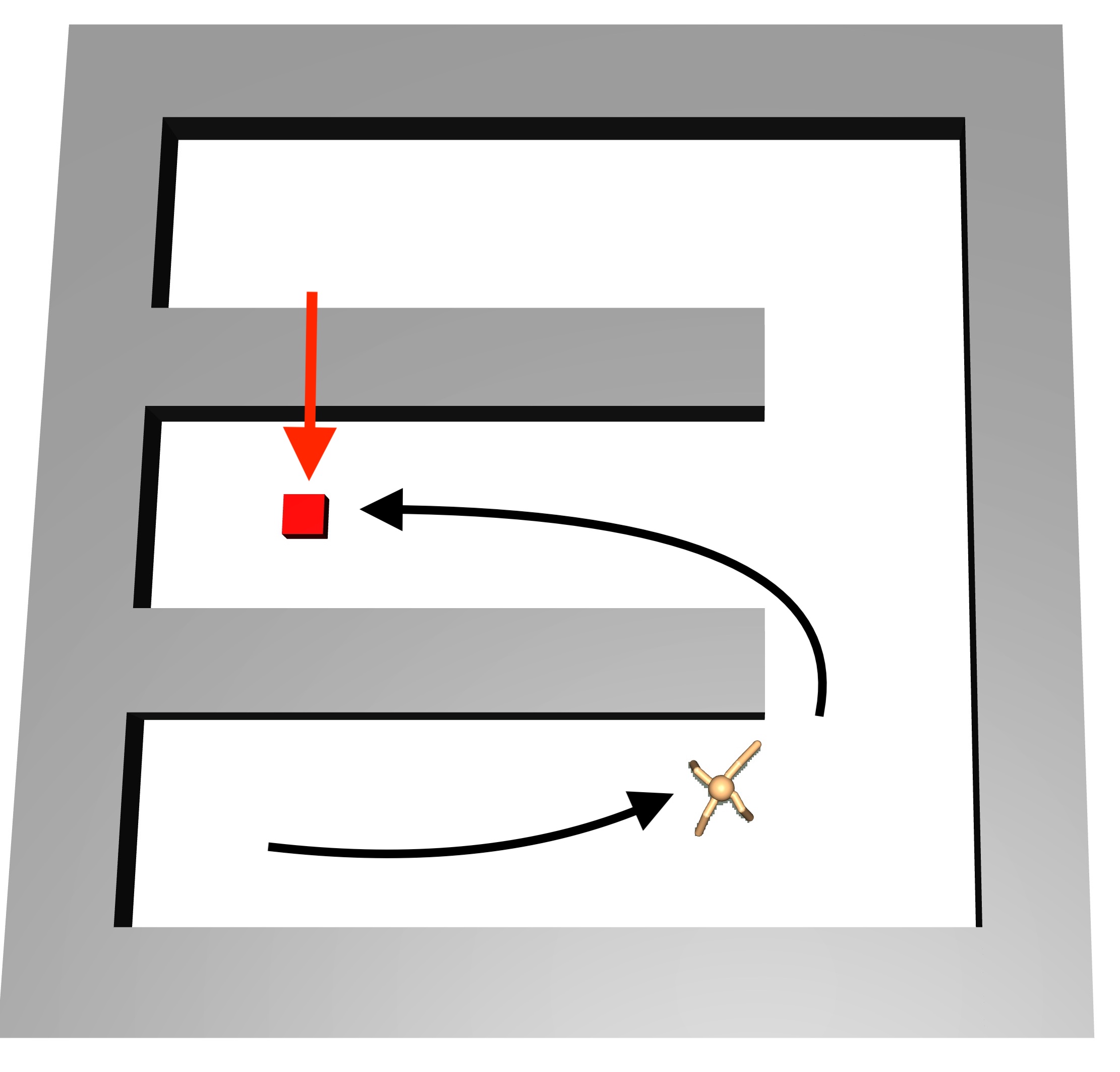}} 
\subfloat[Ant Maze \protect\\(U-shape)]{\includegraphics[width=0.26\textwidth]{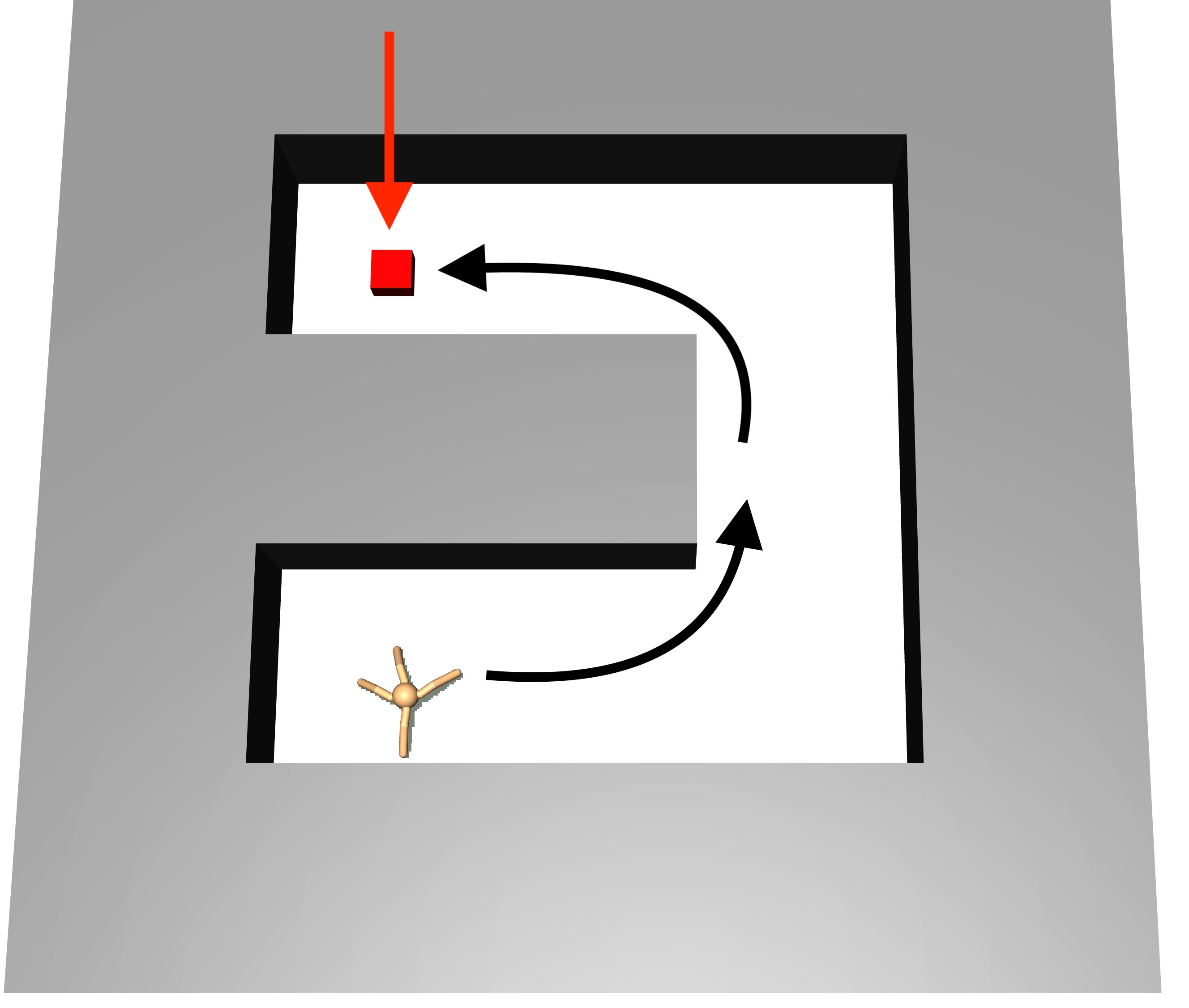}}
\subfloat[Ant Maze-\protect\\Bottleneck]{\includegraphics[width=0.24\textwidth]{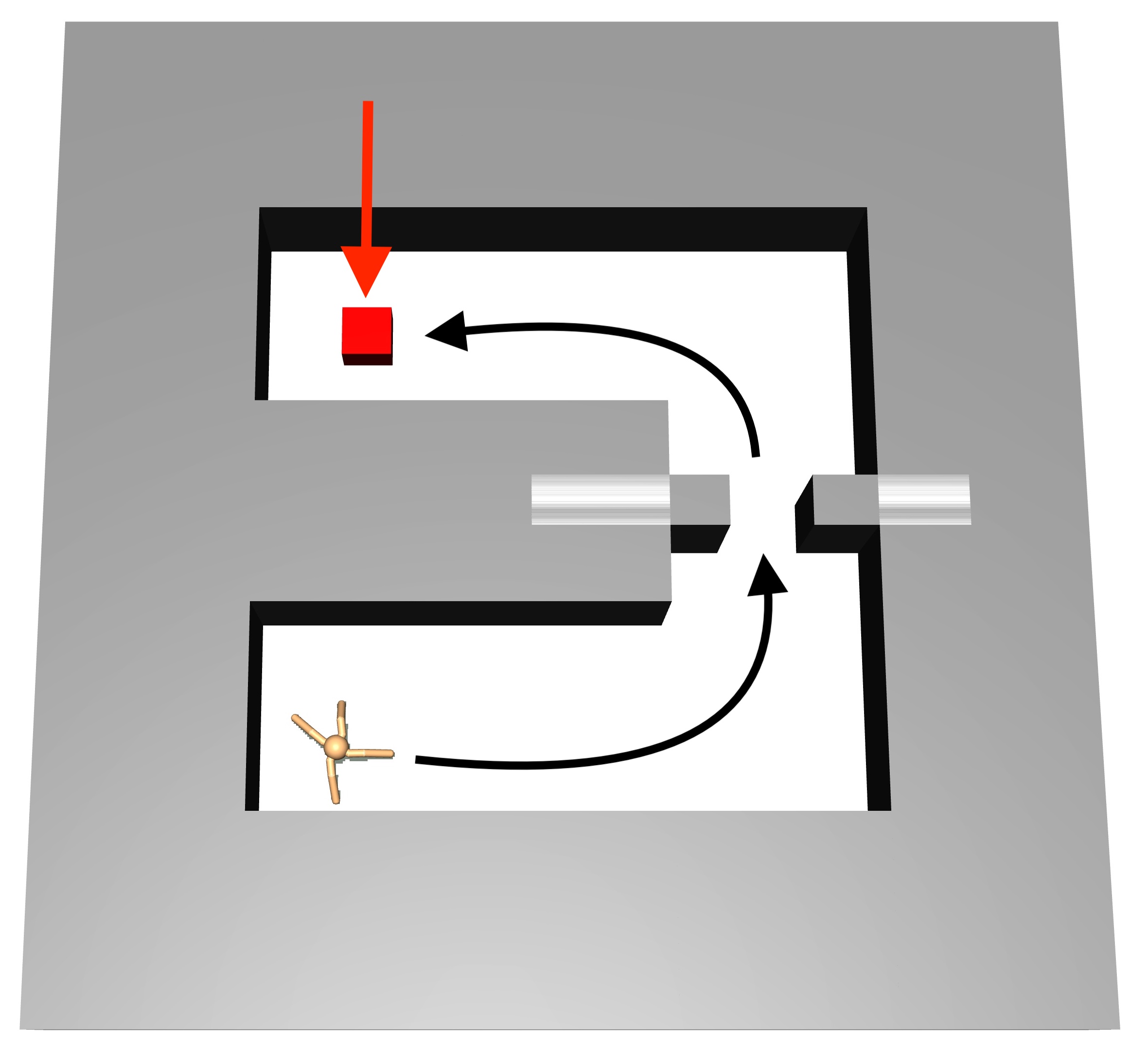}}\\
\subfloat[Pusher]{\includegraphics[width=0.24\textwidth]{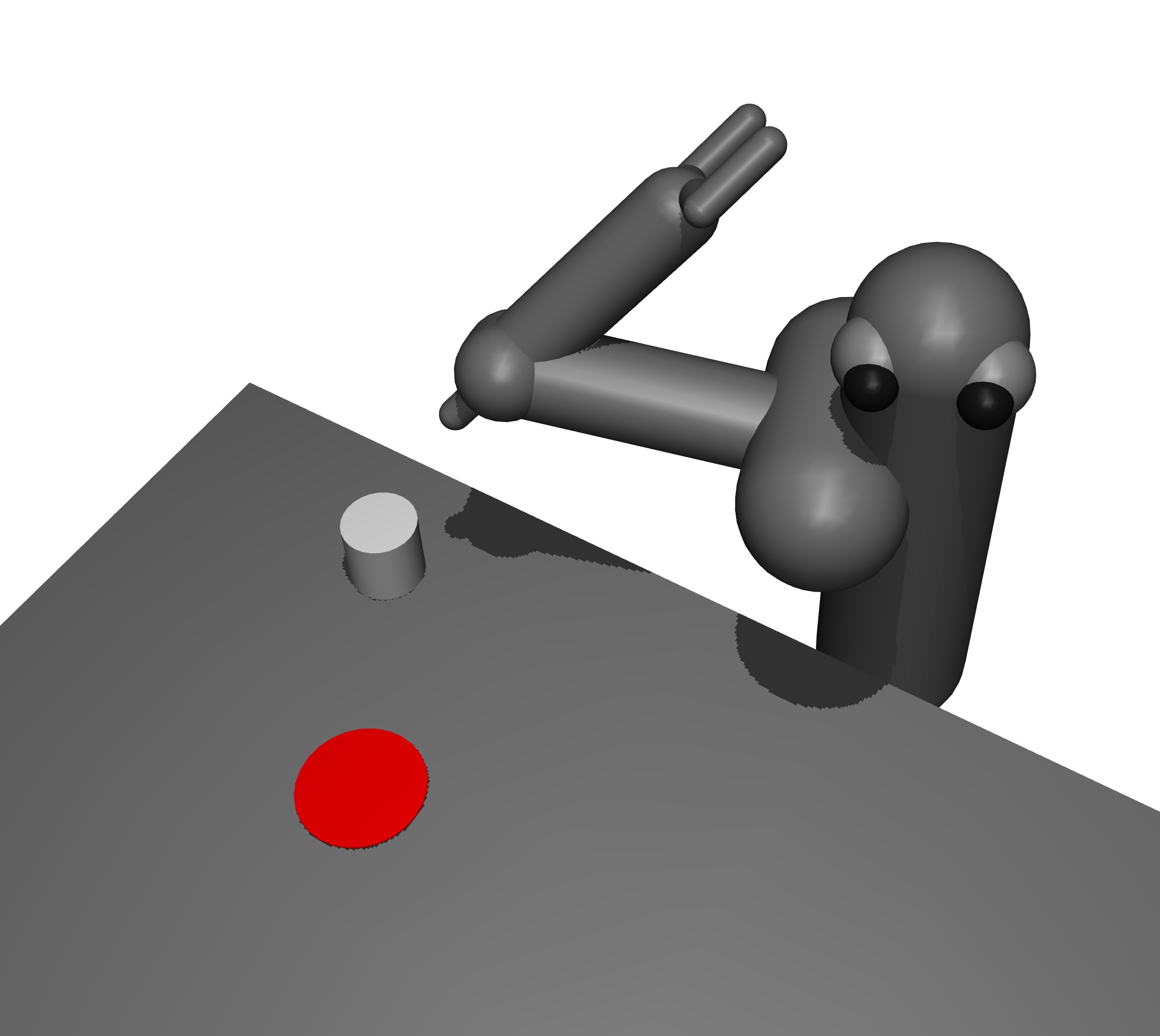}}
\subfloat[Reacher]{\includegraphics[width=0.24\textwidth]{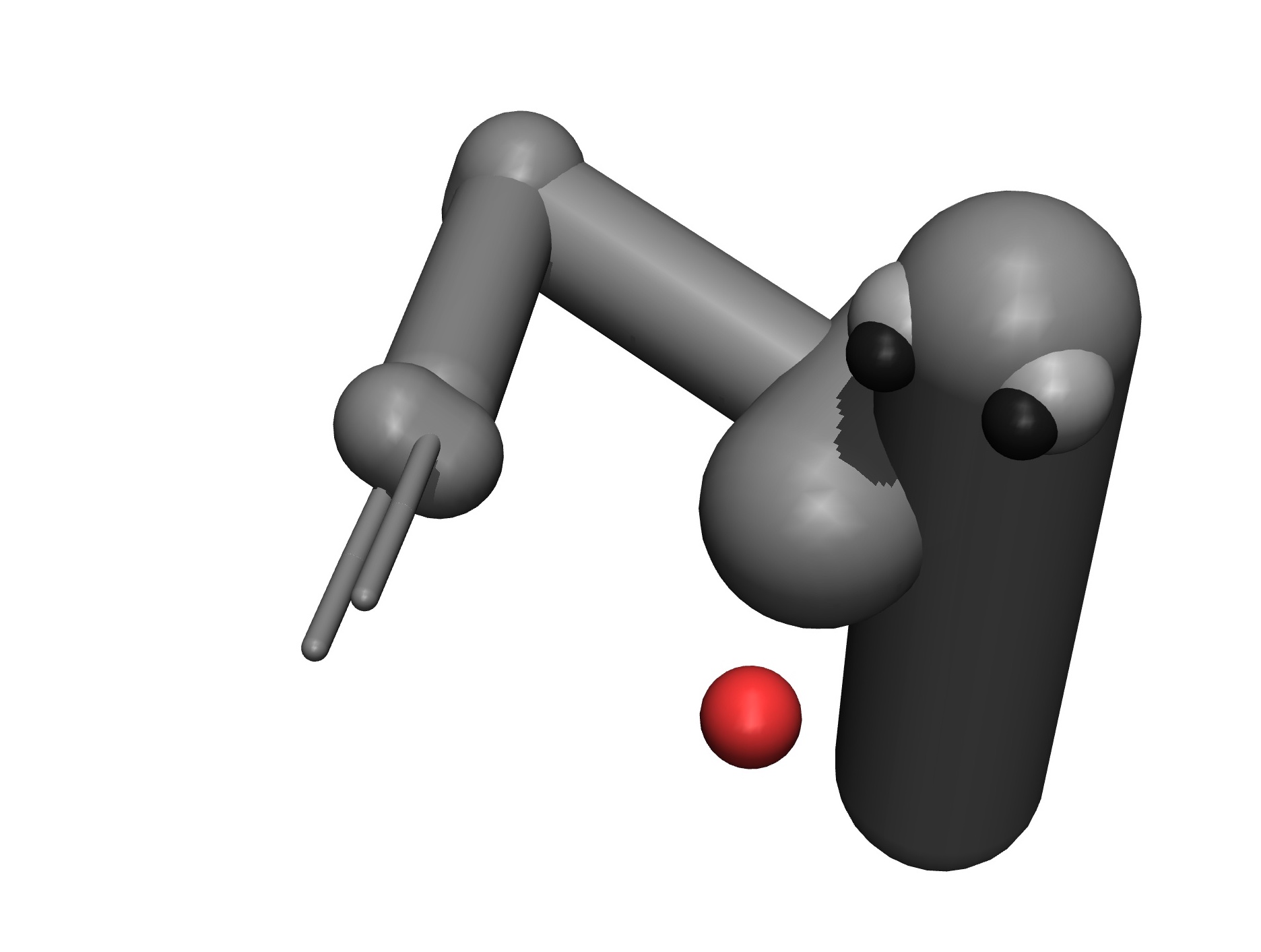}}
\subfloat[\chadded{FetchPush}]{\includegraphics[width=0.23\textwidth]{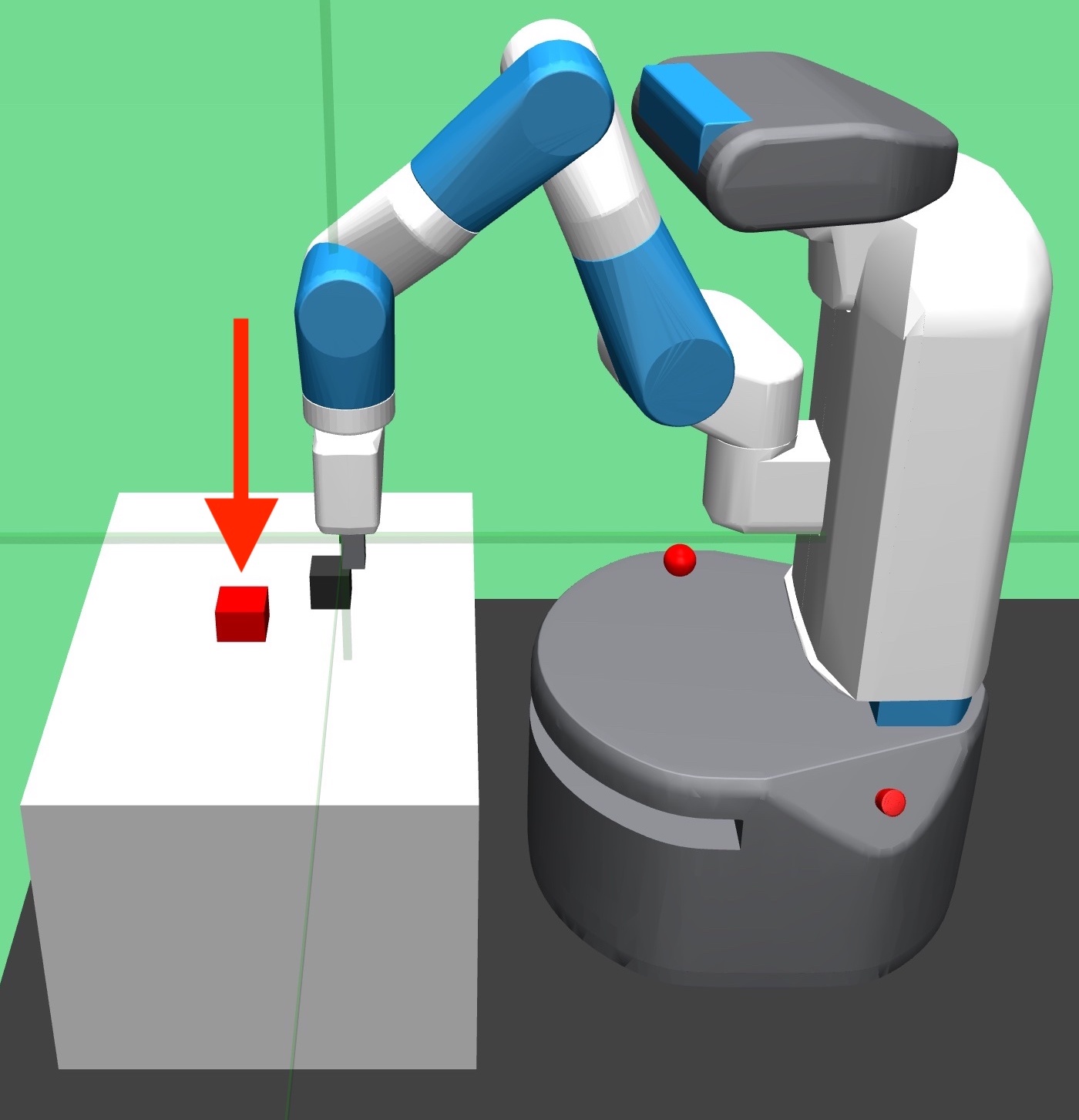}}\hspace{0.015\textwidth}
\subfloat[\chadded{FetchPickAndPlace}]{\includegraphics[width=0.23\textwidth]{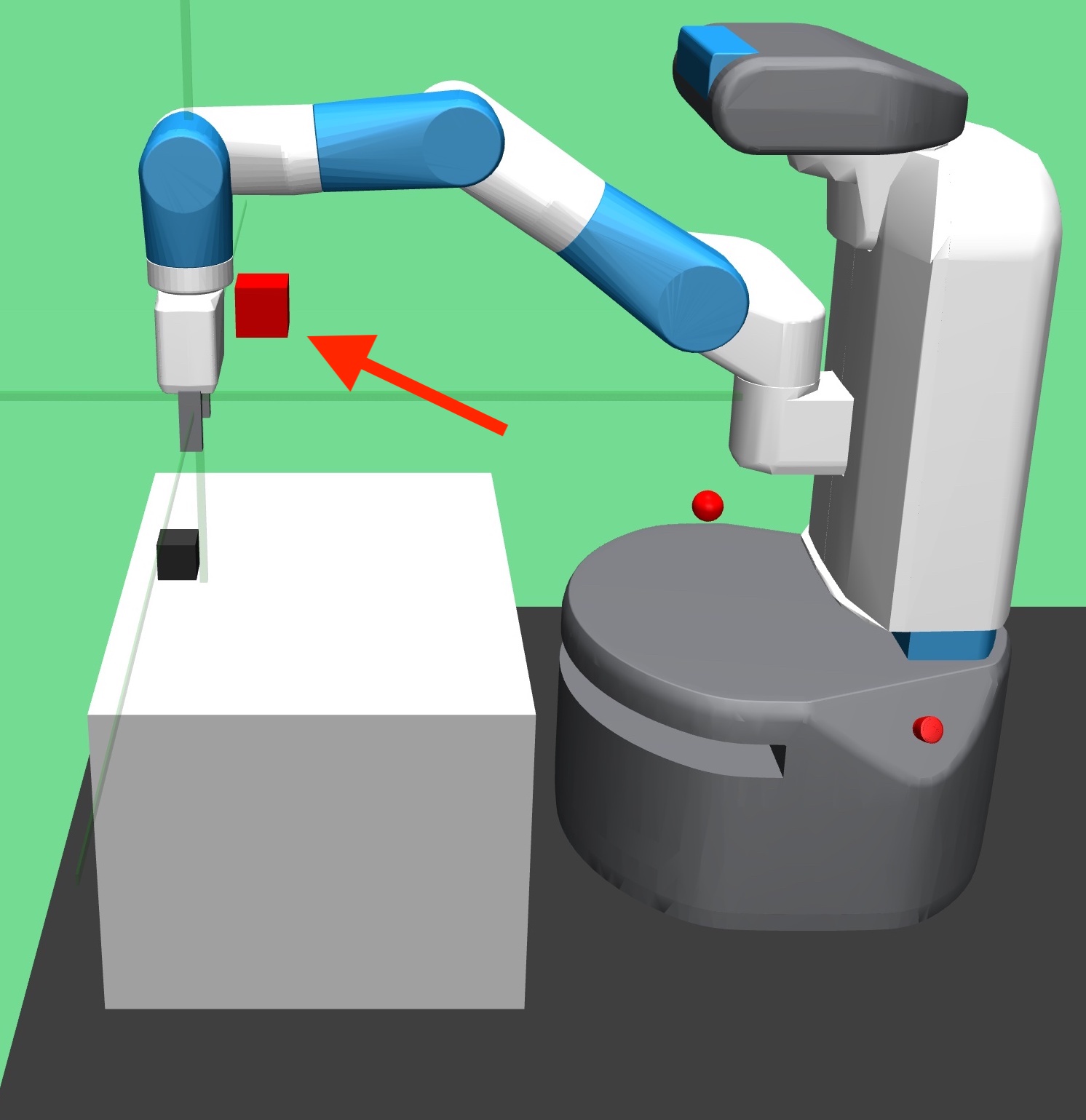}}
\caption{Environments used in our experiments. In the maze-related tasks, the goal in each task is marked with a red arrow, and the black line represents a possible trajectory from the current state to the goal.}
\label{environments}
\end{figure}
These general '$\sqsupset$'-shaped mazes have the same size of $12 \times 12$ while $20 \times 20$ is for the '$\exists$'-shaped maze. Besides, the size of the \textit{Large} Ant Maze (U-shape) is twice as large as that of the general Ant Maze (U-shape), i.e., 24 × 24. 

\begin{itemize}[leftmargin=2em]
	\item \chadded{FetchPush \cite{yang2021hierarchical, zhang2022adjacency, zeng2023ahegc}: In this environment, a block is randomly placed on the table surface and is expected to be moved to the specified location using a 7-DoF fetch manipulator arm. Here, the gripper of the manipulator arm is locked in a closed configuration.}
	\item \chadded{FetchPickAndPlace \cite{yang2021hierarchical, zhang2022adjacency, zeng2023ahegc}: The environment is similar to FetchPush. Yet, the block is randomly positioned either in mid-air or on the table surface, and the gripper of the manipulator arm can be opened to pick up the block.}
\end{itemize}

Further environment details are available in the "Supplementary Materials".

\subsection{Hyper-parameters in \textit{ACLG}}
\label{sec:hyperparams_aclg}
First, we conduct experiments on Ant Maze (U-shape) to explore the effect of hyper-parameters in ACLG: (1) the number of landmarks and (2) the balancing coefficient $\lambda^{\rm ACLG}_{\rm landmark}$.

\begin{figure}[htbp]
\captionsetup[subfloat]{format=hang, justification=centering}
\centering
\subfloat[Number of landmarks]{\includegraphics[width=0.4\textwidth]{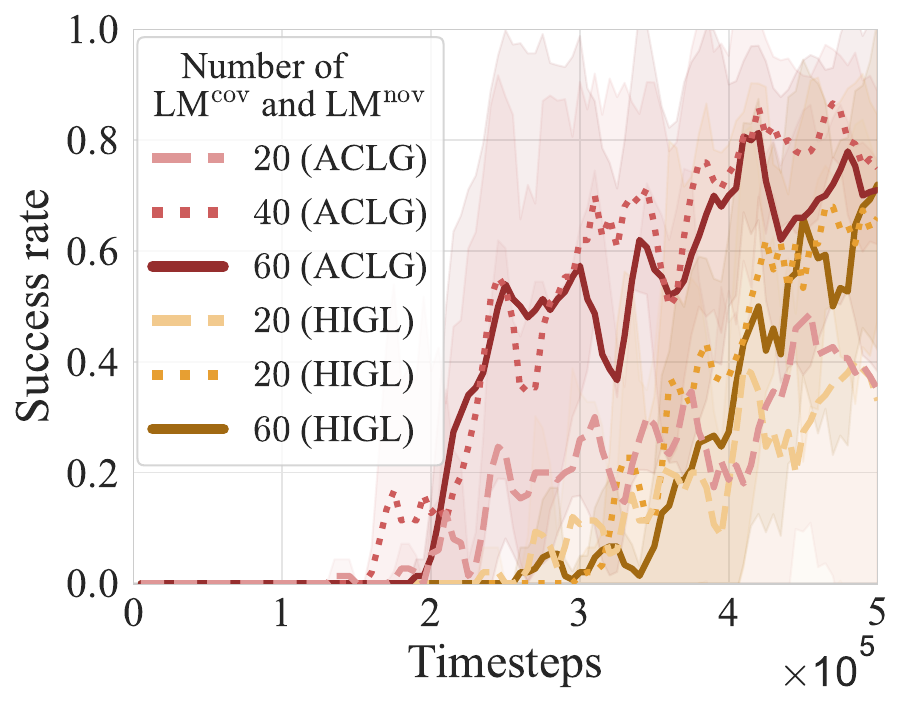}
\label{landmark_num}}\hspace{-0.3cm}
\subfloat[Varying $\lambda^{\rm ACLG}_{\rm landmark}$]{\includegraphics[width=0.40\textwidth]{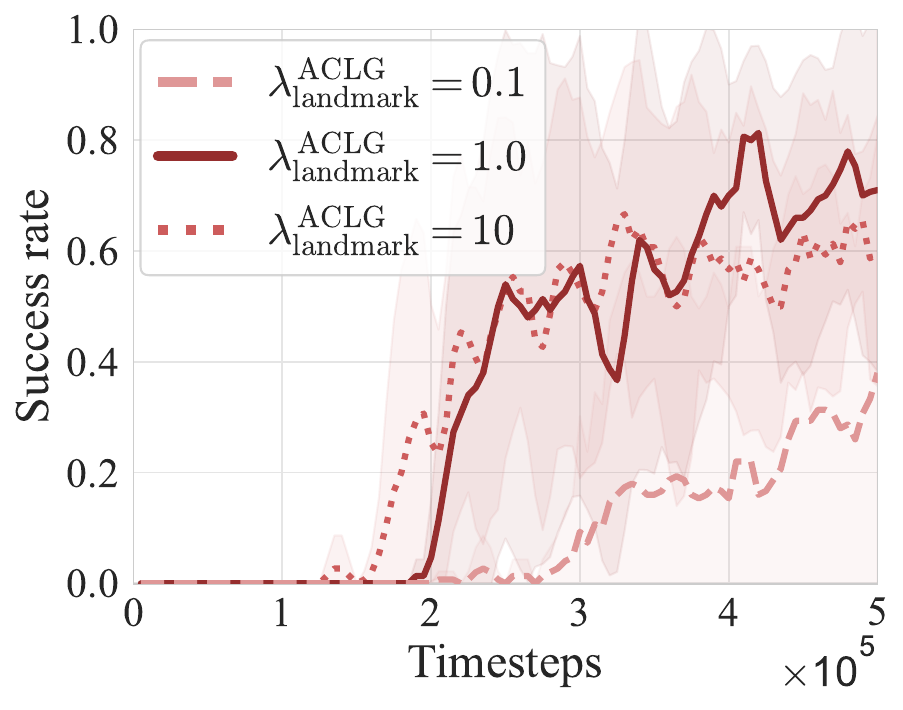}
\label{landmark_param}}
\caption{Ablation studies on landmark-related components. We measure the performance of ACLG by (a) varying number of landmarks and (b) varying balancing coefficient $\lambda^{\rm ACLG}_{\rm landmark}$ in Ant Maze (U-shape).}
\end{figure}

\paragraph{Landmark Number Selection}
Since the number of landmarks plays an important role in the graph-related method, we explored the effects of different numbers of landmarks on performance. Here, we sample the same number of landmarks for each criterion, i.e., the same number for novelty-based and novelty-based landmarks ${LM}^{\rm \chreplaced{cov}{nov}}={LM}^{\rm nov}$. As shown in Fig.~\subref*{landmark_num}, overall, the ACLG significantly outperforms the HIGL since the disentanglement between the adjacency constraint and landmark-based planning further highlights the advantages of landmarks. Finally, the setting ${LM}^{\rm \chreplaced{cov}{nov}}={LM}^{\rm nov}=60$ was adopted for further analysis.

\paragraph{Balancing Coefficient $\lambda^{\rm ACLG}_{\rm landmark}$} In Fig.~\subref*{landmark_param}, we investigate the effectiveness of the balancing coefficient $\lambda^{\rm ACLG}_{\rm landmark}$, which determines the effect of the landmark-based planning term in ACLG on performance. We find that ACLG with $\lambda^{\rm ACLG}_{\rm landmark}=1.0$ outperforms others. Moreover, ACLG with $\lambda^{\rm ACLG}_{\rm landmark}\in\{1.0, 10\}$ outperforms that with $\lambda^{\rm ACLG}_{\rm landmark}=0.1$, which shows a large value of $\lambda^{\rm ACLG}_{\rm landmark}$ helps unleash the guiding role of landmarks.

\subsection{\chadded{Hyper-parameters in \textit{GCMR}}}
\label{sec:hyperparams_gcmr}
\chadded{Next, we highlight the impact of the hyper-parameters in GCMR: the shift magnitude $\delta_{sg}$ in soft-relabeling, the penalty coefficient $\lambda_{gp}$, and the coefficient $\lambda_{osrp}$ for the rollout-based planning. We deploy ACLG+GCMR in multiple environments.}

\paragraph{\chadded{Shift Magnitude $\delta_{sg}$ for Soft-Relabeling}}

\chadded{In Fig.~\ref{goal_shift}, we conduct experiments to examine the impact of $\delta_{sg}$. Fig.~\subref*{org_goals} illustrates the original subgoals sampled from the experience
replay buffer of \textit{Large} Ant Maze (U-shape) at 0.6M steps. Fig.~\subref*{no_shift}-\subref*{shift_40} depict the relabeled subgoals without or with varying shift magnitude. By comparing Fig.~\subref*{org_goals} and Fig.~\subref*{no_shift}, it is evident that relabeling introduces out-of-distribution (OOD) subgoals, which in turn lead to instability. The OOD issue can be addressed using the soft-relabeling method, which involves shifting the original subgoals towards the relabeled ones by a certain magnitude, as shown in Fig.~\subref*{shift_10}-\subref*{shift_40}. However, it should be noted that small shift magnitudes will weaken the corrective effect. As shown in Fig.~\subref*{success_shift}, we observe that too large or little value of $\delta_{sg}$ harms the performance. Therefore, the shift magnitude $\delta_{sg}$ needs to be carefully tuned to strike a balance between suppressing the OOD issue and maintaining corrective effectiveness. In our experiments, $\delta_{sg}$ is set to 20 for the small maze and 30 for the large maze.
}

\begin{figure}[htbp]
\captionsetup[subfloat]{format=hang, justification=centering}
\centering
\subfloat[\chadded{Original subgoals}]{\includegraphics[width=0.24\textwidth]{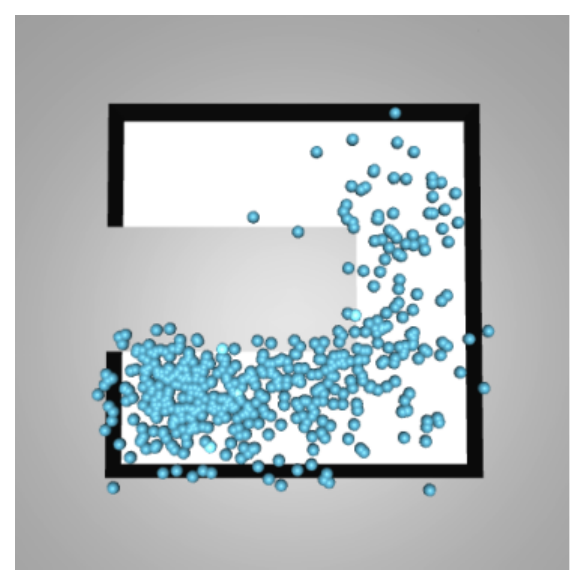}
\label{org_goals}}
\subfloat[\chadded{Relabeling with \textit{no shift}}]{\includegraphics[width=0.24\textwidth]{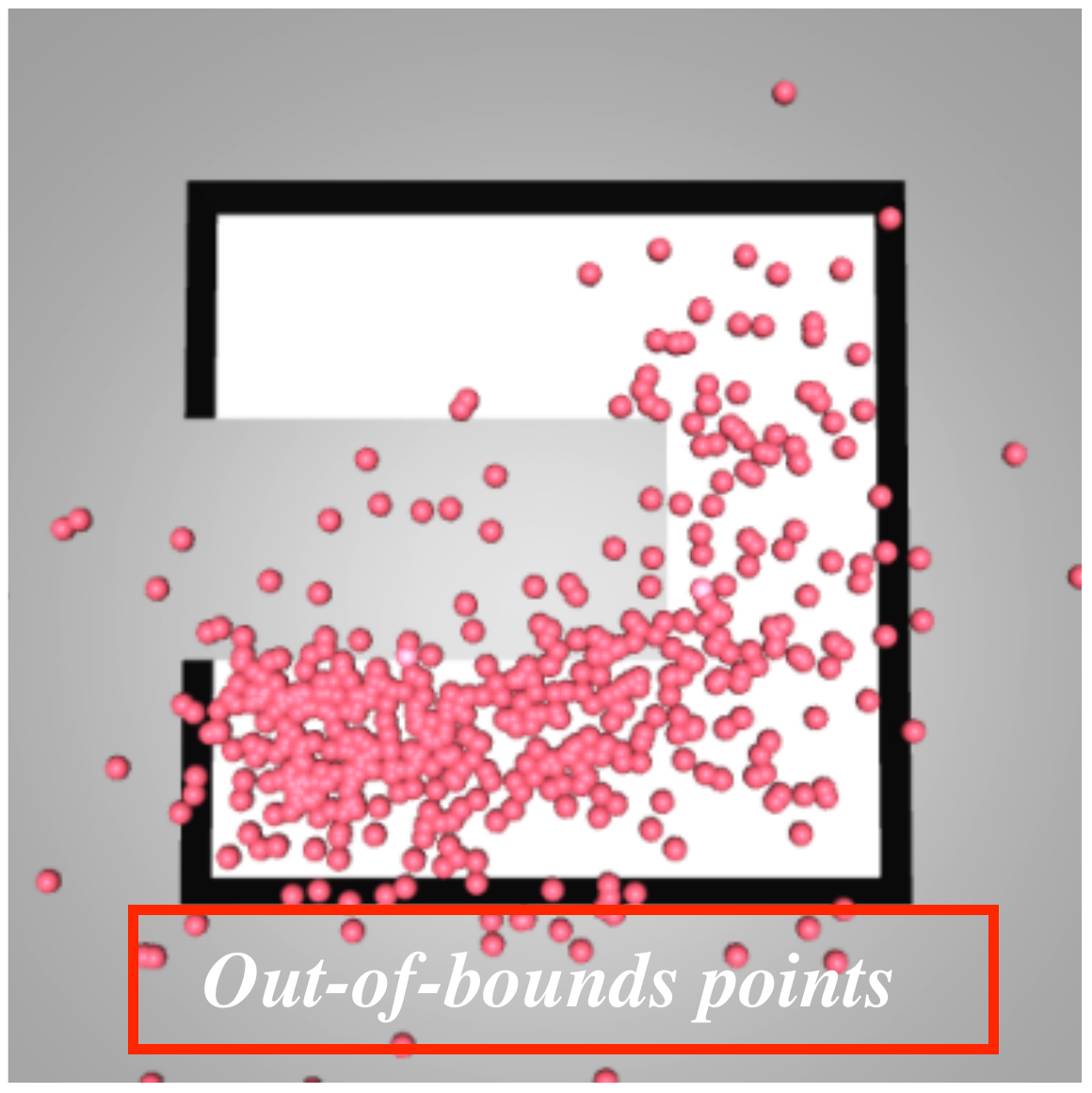}
\label{no_shift}}
\subfloat[\chadded{Relabeling with $\delta_{sg}=10$}]{\includegraphics[width=0.24\textwidth]{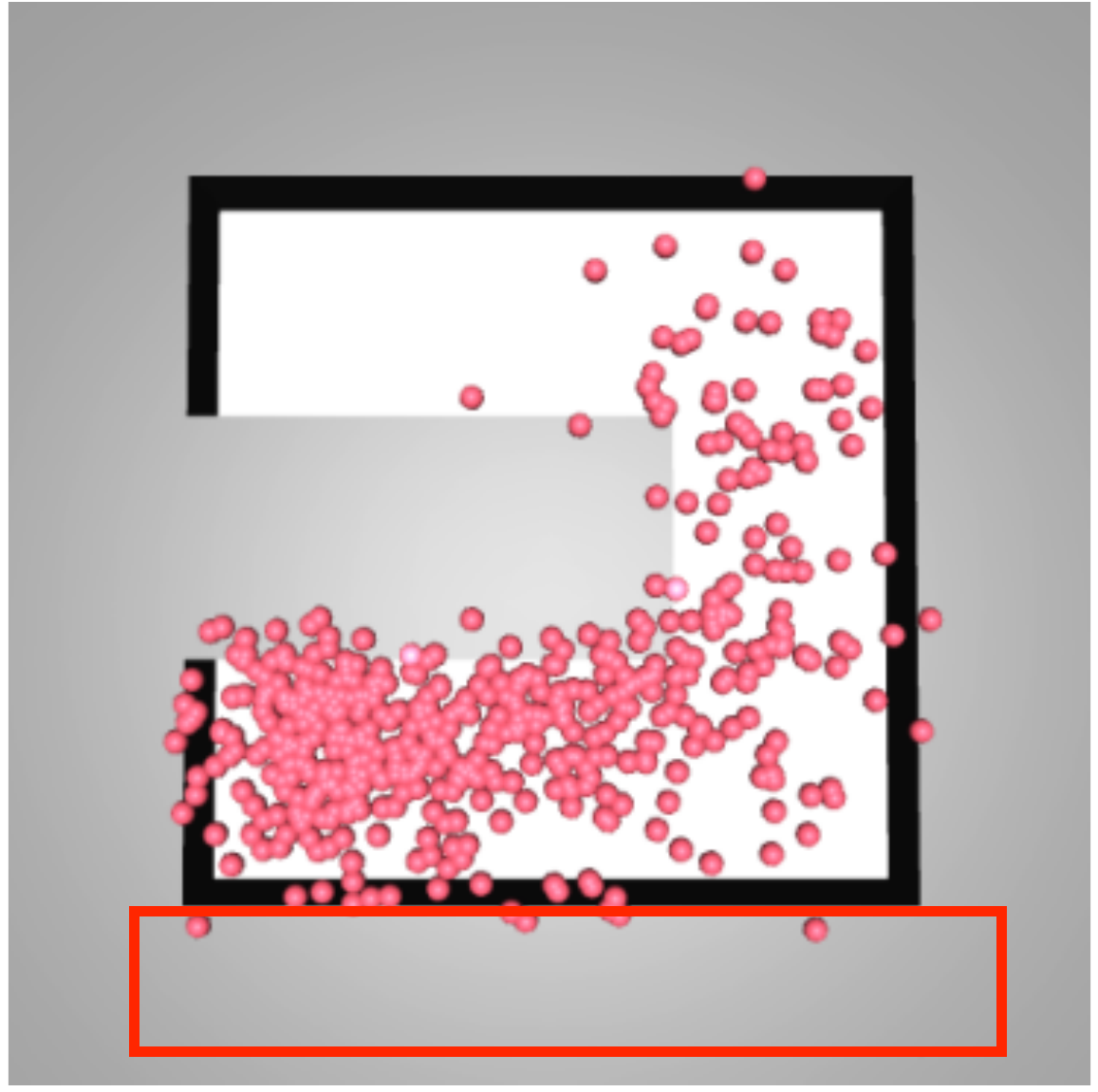}
\label{shift_10}}
\subfloat[\chadded{Relabeling with $\delta_{sg}=20$}]{\includegraphics[width=0.24\textwidth]{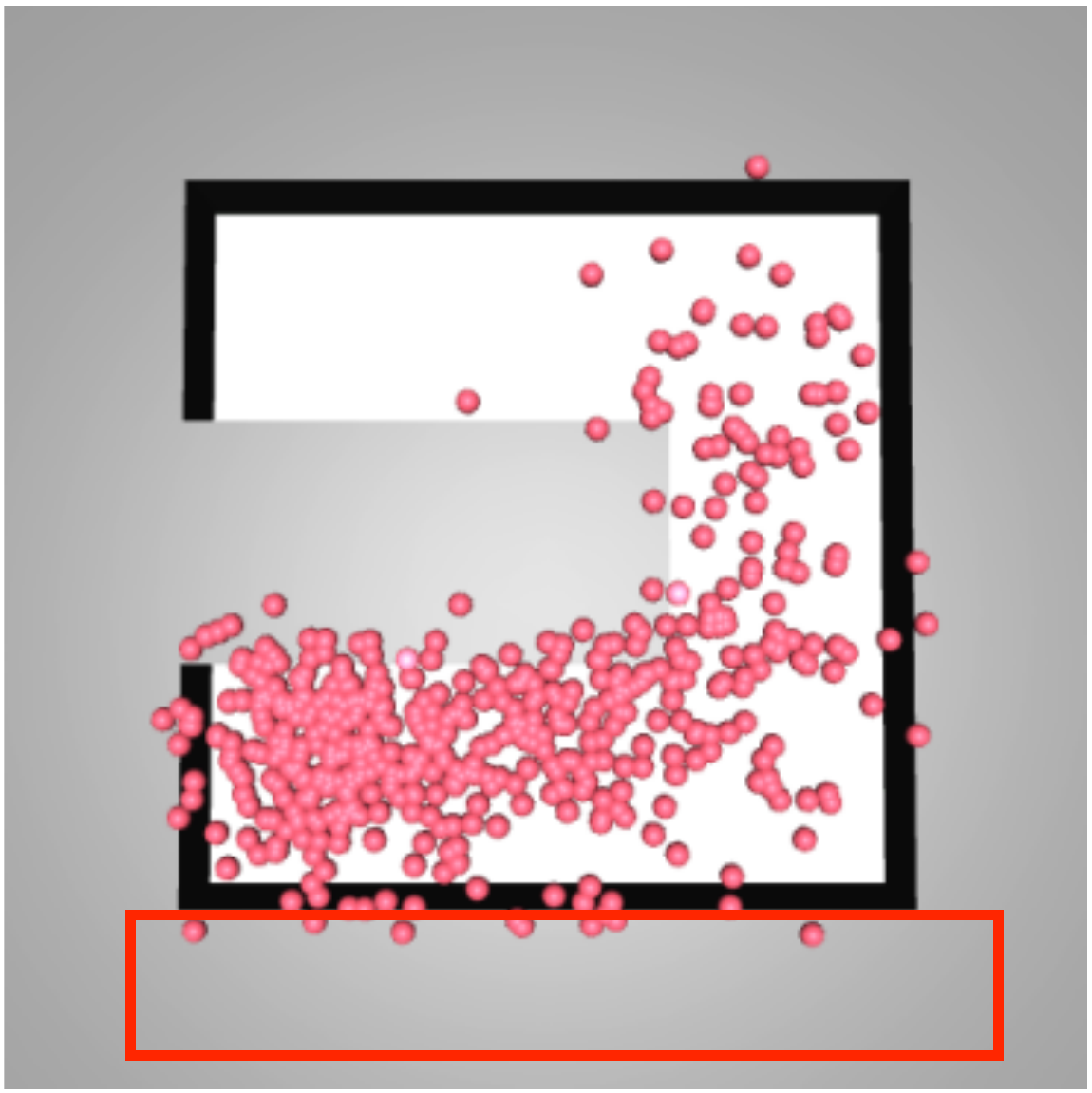}
\label{shift_20}}\\
\subfloat[\chadded{Relabeling with $\delta_{sg}=30$}]{\includegraphics[width=0.24\textwidth]{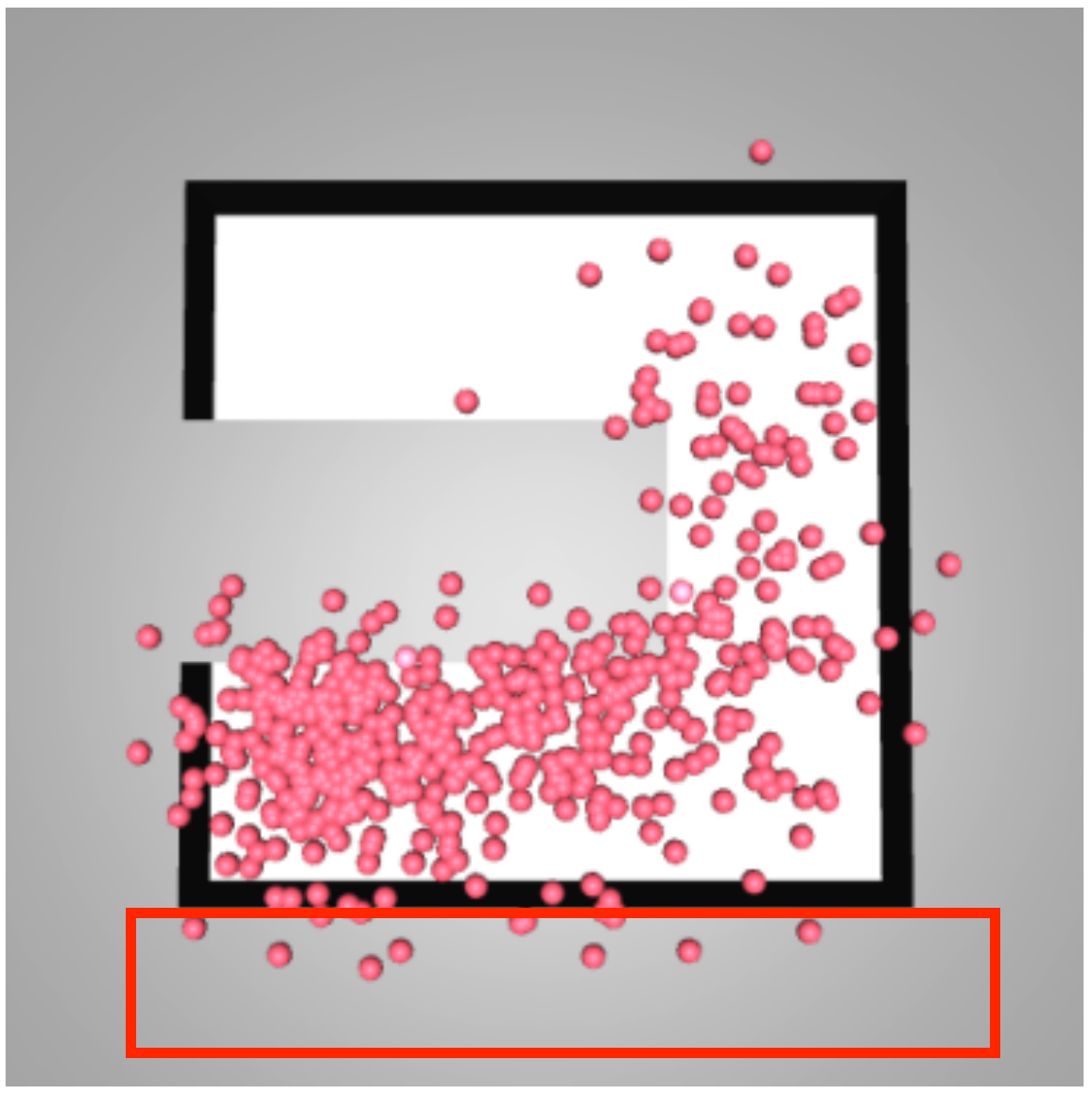}
\label{shift_30}}
\subfloat[\chadded{Relabeling with $\delta_{sg}=40$}]{\includegraphics[width=0.24\textwidth]{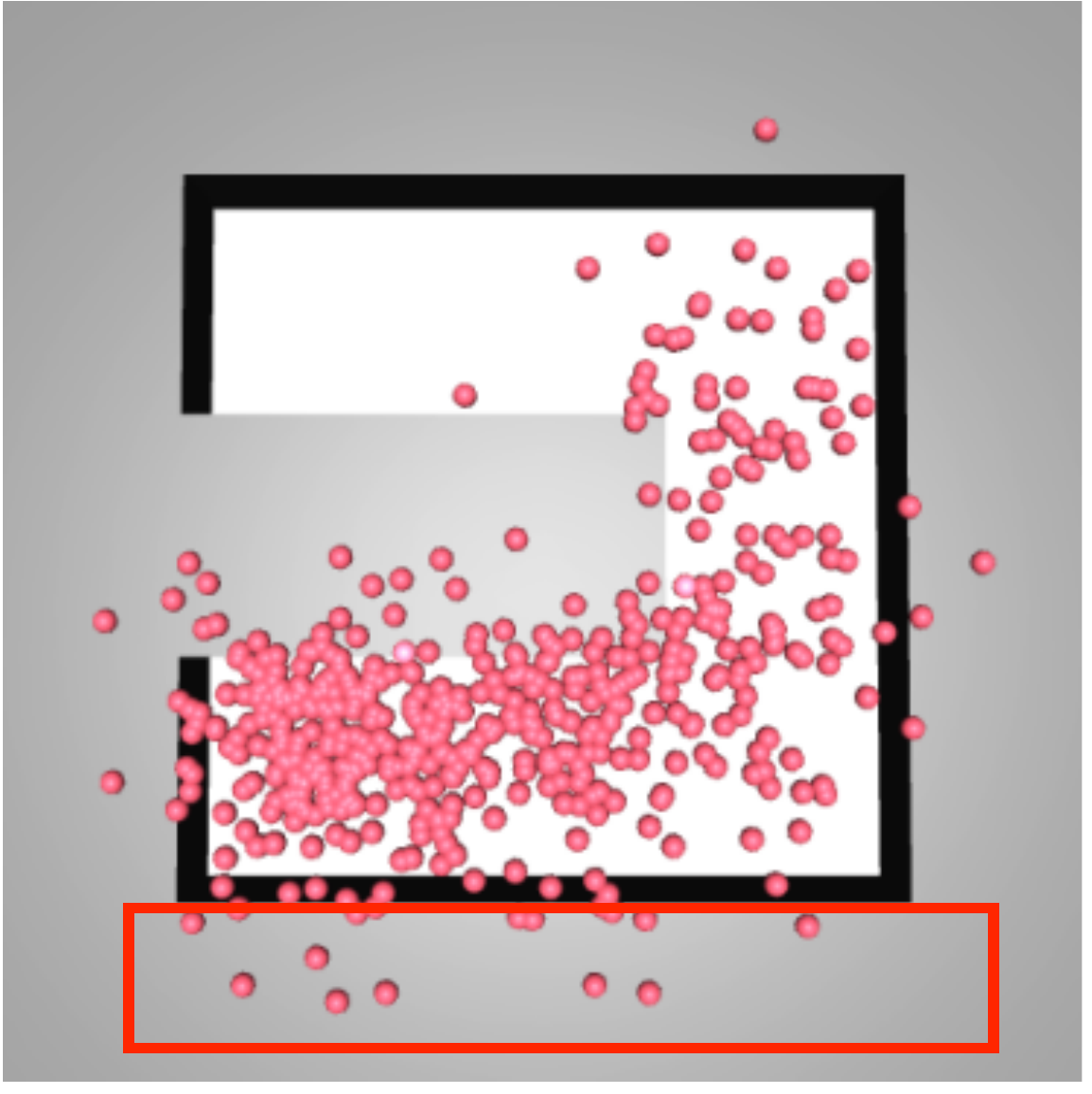}
\label{shift_40}}
\subfloat[\chadded{\textit{Large} Ant Maze (U-shape)}]{\includegraphics[width=0.5\textwidth]{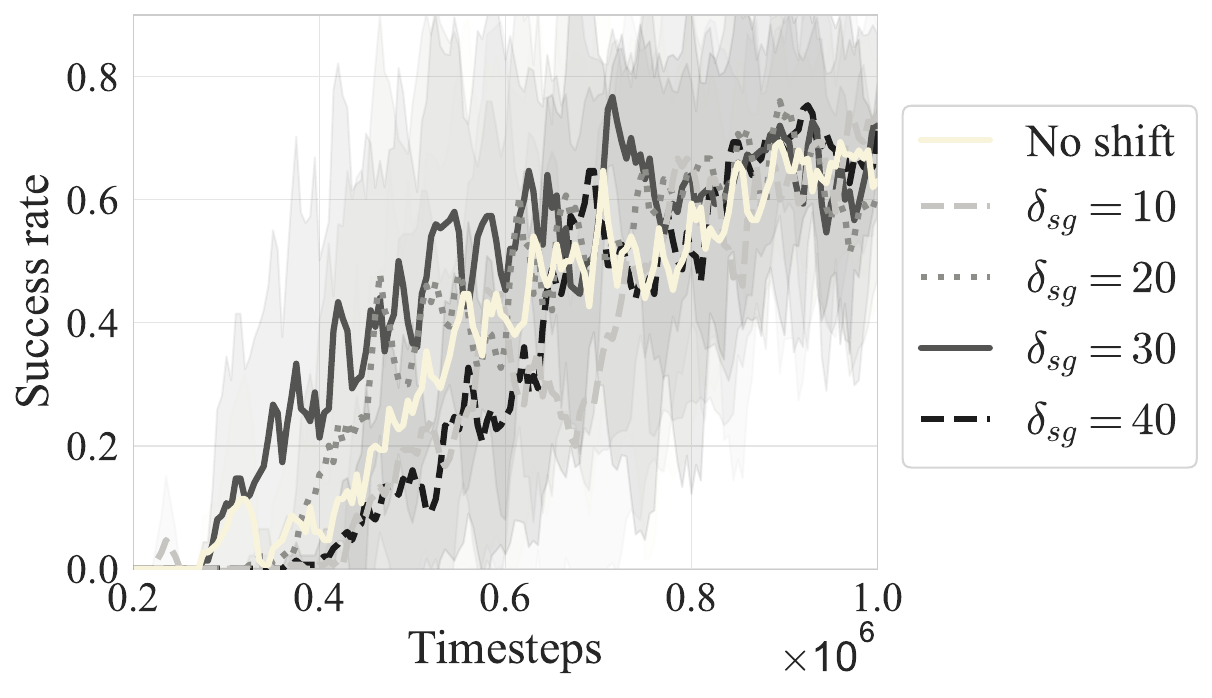}
\label{success_shift}}
\caption{\chadded{Impact of shift magnitude $\delta_{sg}$ on the performance of ACLG+GCMR in \textit{Large} Ant Maze (U-shape). \subref{org_goals} illustrates the original subgoals, while \subref{no_shift}-\subref{shift_40} depict the relabeled subgoals without or with varying shift magnitude. \subref{success_shift} plots the learning curves of ACLG+GCMR on the \textit{Large} Ant Maze (U-shape) with varying shift magnitude $\delta_{sg}$. In \subref{success_shift}, the result is averaged over five random seeds.}}
\label{goal_shift}
\end{figure}

\paragraph{\chadded{Penalty Coefficient $\lambda_{gp}$}}
\chadded{In Fig.~\ref{mgp_lambda}, we explore the impact of the penalty coefficient $\lambda_{gp}$ on the performance of ACLG+GCMR.
Overall, when $\lambda_{gp}\geq1.0$, applying the gradient penalty leads to improved asymptotic performance. Furthermore, across most environments, we consistently find that the value $\lambda_{gp}=1.0$ emerges as optimal. For highly intricate environments, the gradient penalty should be strengthened, as in the case of the Ant Maze-Bottleneck (see Fig.~\subref*{mgp_lambda2}).}

\begin{figure}[htbp]
\captionsetup[subfloat]{format=hang, justification=centering}
\centering
\subfloat[\chadded{Ant Maze (U-shape)}]{\includegraphics[width=0.4\textwidth]{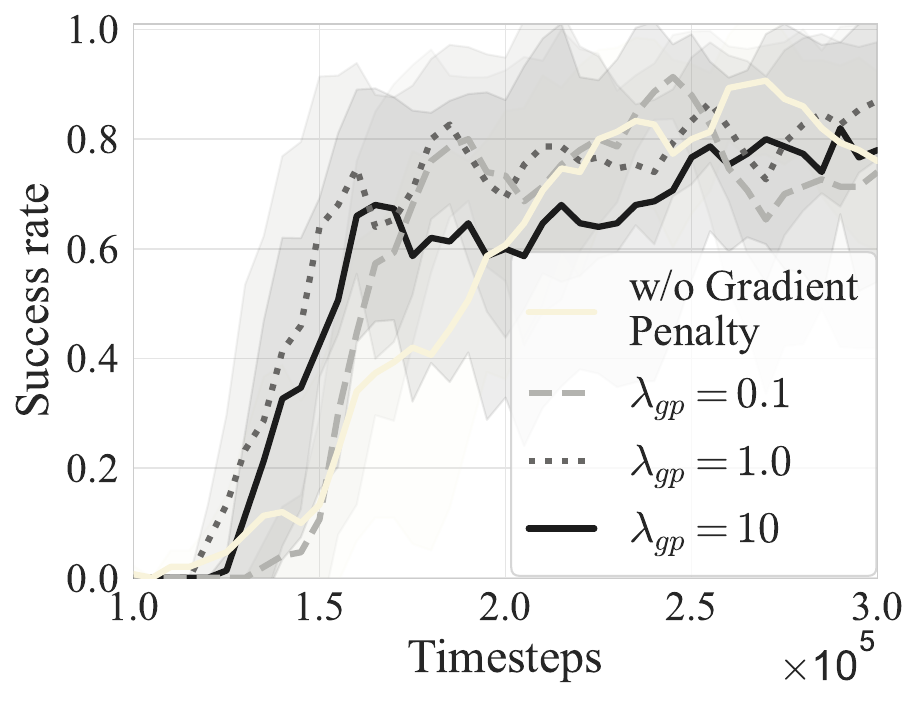}
\label{mgp_lambda1}}\hspace*{-0.35cm}
\subfloat[\chadded{Ant Maze-Bottleneck}]{\includegraphics[width=0.4\textwidth]{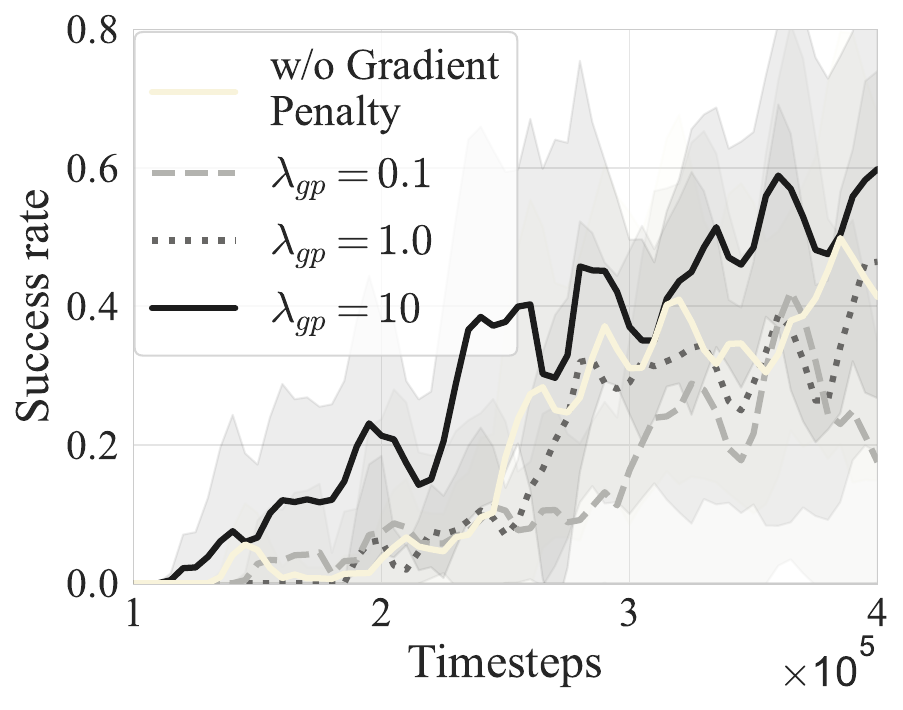}
\label{mgp_lambda2}}
\caption{\chadded{Learning curves of ACLG+GCMR with varying penalty coefficient $\lambda_{gp}$ on (a) Ant Maze (U-shape) and (b) Ant Maze-Bottleneck. Here, the success rate is averaged over five random seeds.}}
\label{mgp_lambda}
\end{figure}

\begin{figure}[htbp]
\captionsetup[subfloat]{format=hang, justification=centering}
\centering
\subfloat[\chadded{Ant Maze (U-shape)}]{\includegraphics[width=0.4\textwidth]{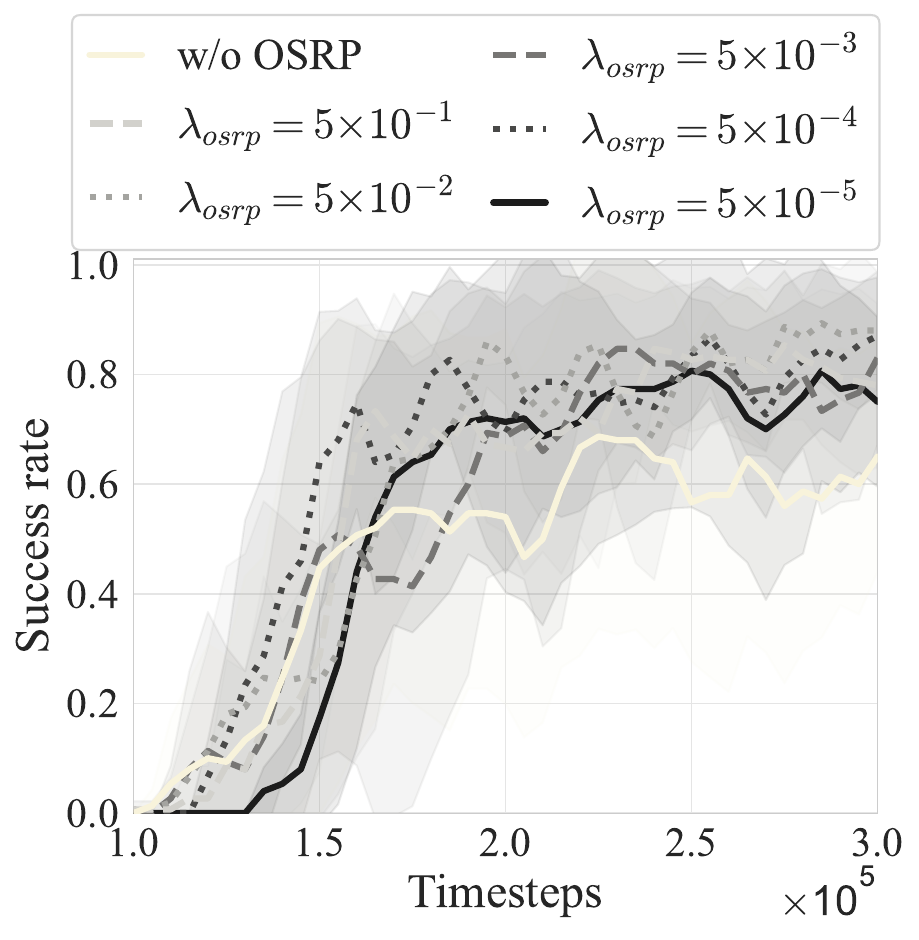}
\label{osrp_lambda1}}\hspace*{-0.33cm}
\subfloat[\chadded{Point Maze}]{\includegraphics[width=0.4\textwidth]{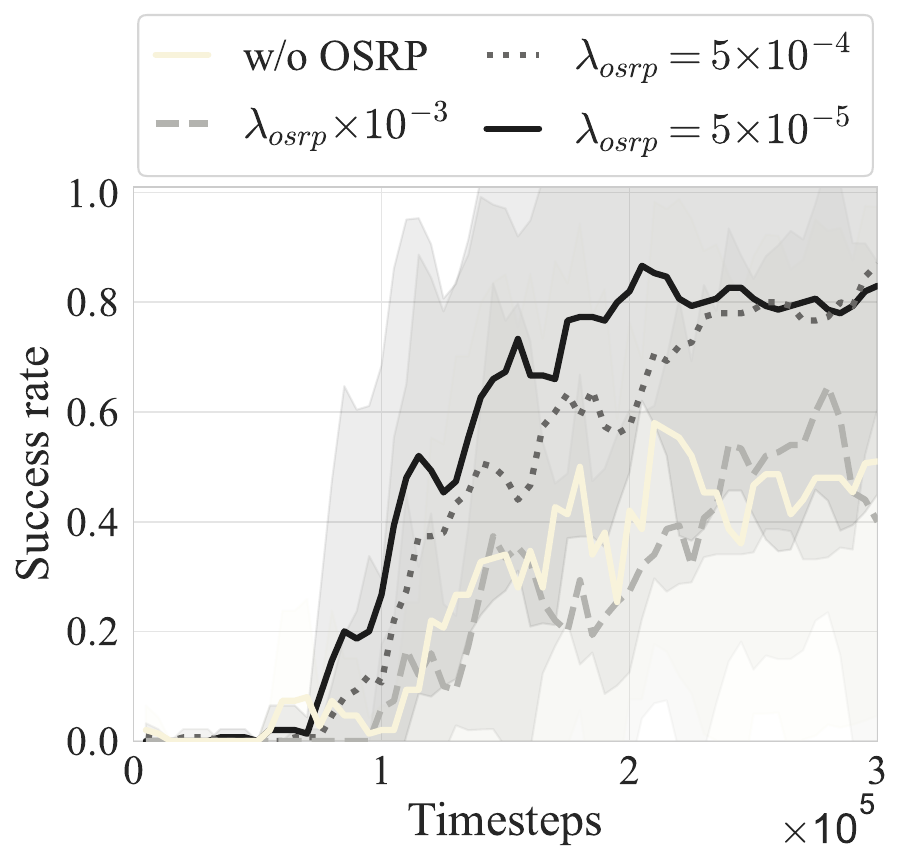}
\label{osrp_lambda2}}
\caption{\chadded{Learning curves of ACLG+GCMR with varying balancing coefficient $\lambda_{osrp}$ on (a) Ant Maze (U-shape) and (b) Point Maze. Here, the success rate is averaged over five random seeds.}}
\label{osrp_lambda}
\end{figure}

\paragraph{\chadded{Balancing Coefficient $\lambda_{osrp}$ for One-Step Rollout-based Planning}}
\chadded{In Fig.~\ref{osrp_lambda}, we investigate the impact of $\lambda_{osrp}$, which determines the effect of the proposed one-step rollout-based planning term, $\mathcal{L}_{osrp}$ (see Equation~\ref{osrp_1}), on the proposed framework. The results show that utilizing one-step rollout-based planning achieved a more stable asymptotic performance in the Ant Maze (U-shape) and Point Maze. In fact, a slightly larger $\lambda_{osrp}$ could powerfully and aggressively force the lower-level policy to follow the one-step rollout-based planning, leading to accelerated learning. However, if $\lambda_{osrp}$ is too large, it may restrict the exploration capability of the behavioral policy itself. The recommended range for the parameter $\lambda_{osrp}$ is between $5 \times10^{-5}$ and $5 \times 10^{-4}$.}

\subsection{Comparative Experiments}
To validate the effectiveness of the GCMR, we plugged it into the ACLG, the disentangled variant of HIGL, and then compared the performance of the integrated framework with that of ACLG, HIGL \cite{kim2021landmark}, HRAC \cite{zhang2020generating}, DHRL \cite{leed2022hrl}, as well as the PIG \cite{kim2022imitating}. Note that the numbers of landmarks used in these methods are different. In most tasks, HIGL employed 40 landmarks, ACLG used \chreplaced{60}{120} landmarks, DHRL utilized 300 landmarks, and PIG employed 400 landmarks (see Table 3 in the "Supplementary Materials" for details). This is in line with prior works.
Also, consistent with prior research, our experiments were performed on the above-mentioned environments with \textit{sparse} reward settings, where the agent will obtain no reward until it reaches the target area.

\begin{wrapfigure}{r}{0.5\textwidth}
\centering
\includegraphics[width=0.5\textwidth]{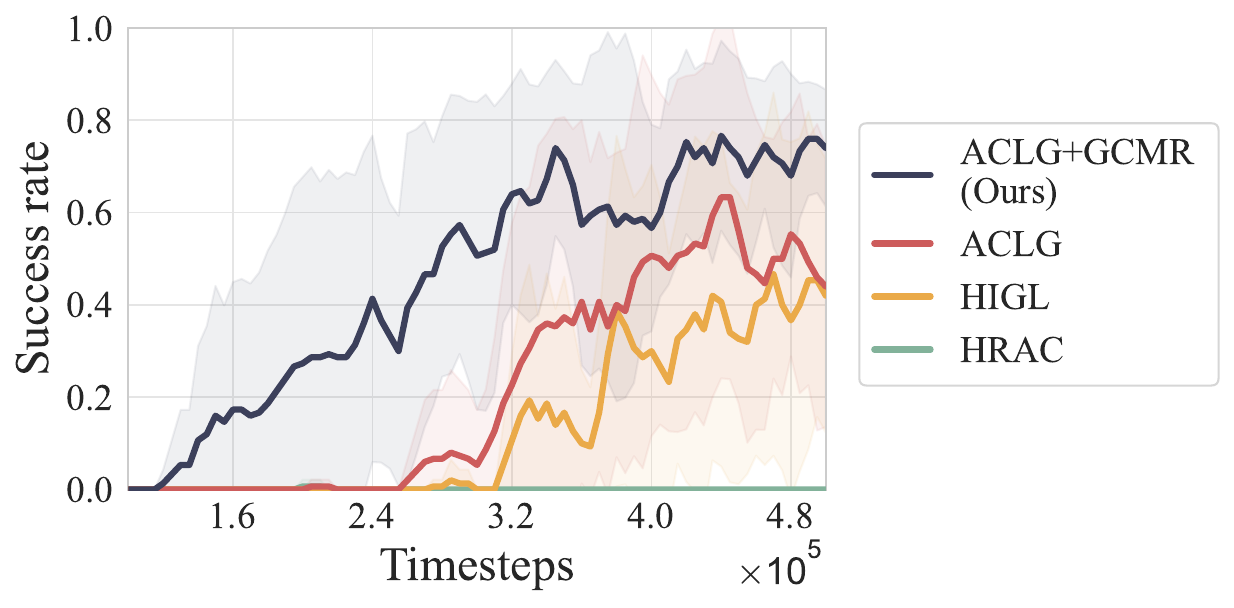}
\label{landmark_num}
\vspace{-0.8em}
\caption{The average success rate on the Ant Maze (\textit{Dense}, U-shape). Note that the comparison does not involve DHRL and PIG due to the scope and limitations of their applicability. The solid lines represent the mean across five runs. The transparent areas represent the standard deviation.}
\label{dense_compare}
\end{wrapfigure}
But we also present a discussion about dense experiments based on AntMaze (U-shape), as shown in Fig.~\ref{dense_compare}. Note that the comparison on dense reward setting did not involve DHRL and PIG due to the scope and limitations of their applicability. In the implementation, we did not use the learned dynamics model until we had sufficient transitions for sampling, which would avoid a catastrophic performance drop arising from inaccurate planning. It means that our method was enabled only if the step number of interactions was over a pre-set value. Here, the time step limit was set to $20K$ for maze-related tasks and $10K$ for robotic arm manipulation. After that, the dynamics model was trained at a frequency of $D$ steps. In the end, we evaluate their performance over 5 random seeds \chadded{(the seed number is set from 0 to 5 for all tasks)}, conducting 10 test episodes every $5K^{\rm th}$ time step.  All of the experiments were carried out on a computer with the configuration of Intel(R) Xeon(R) Gold 5220 CPU @ 2.20GHz, 8-core, 64 GB RAM. And each experiment was processed using a single GPU (Tesla V100 SXM2 32 GB). We provide more detailed experimental configurations in the "Supplementary Materials".

\paragraph{\chadded{Subgoals generated by different HRL algorithms}}
\chadded{We visualize the subgoals generated by the high-level policies of different HRL algorithms. As depicted in Fig.~\ref{subgoal_generate}, due to the introduction of adjacency constraints, HRAC\cite{zhang2020generating}, HIGL\cite{kim2021landmark}, ACLG, and ACLG+GCMR generated subgoals with better reachability (fewer outliers), compared to HIRO\cite{nachum2018data}. Moreover, HIGL\cite{kim2021landmark}, ACLG, and ACLG+GCMR displayed a preference for exploration, attributed to the use of landmark-based planning. However, the subgoals of HIGL exhibited a greater goal-reaching distance, farther from the current state. Conversely, ACLG and ACLG+GCMR achieved a better balance between reachability and planning. Fig.~\subref*{reachability_comp} provides statistical evidence supporting these findings, with the goal-reaching distance quantified using the lower-level Q functions\cite{huang2019mapping,kim2021landmark,kim2022imitating}. The results show that ACLG and ACLG+GCMR had smaller average goal-reaching distances.}
\begin{figure}[htbp]
\captionsetup[subfloat]{format=hang, justification=centering}
\centering
\subfloat[\chadded{The ant pursues a subgoal in Ant Maze}]{\includegraphics[width=0.24\textwidth]{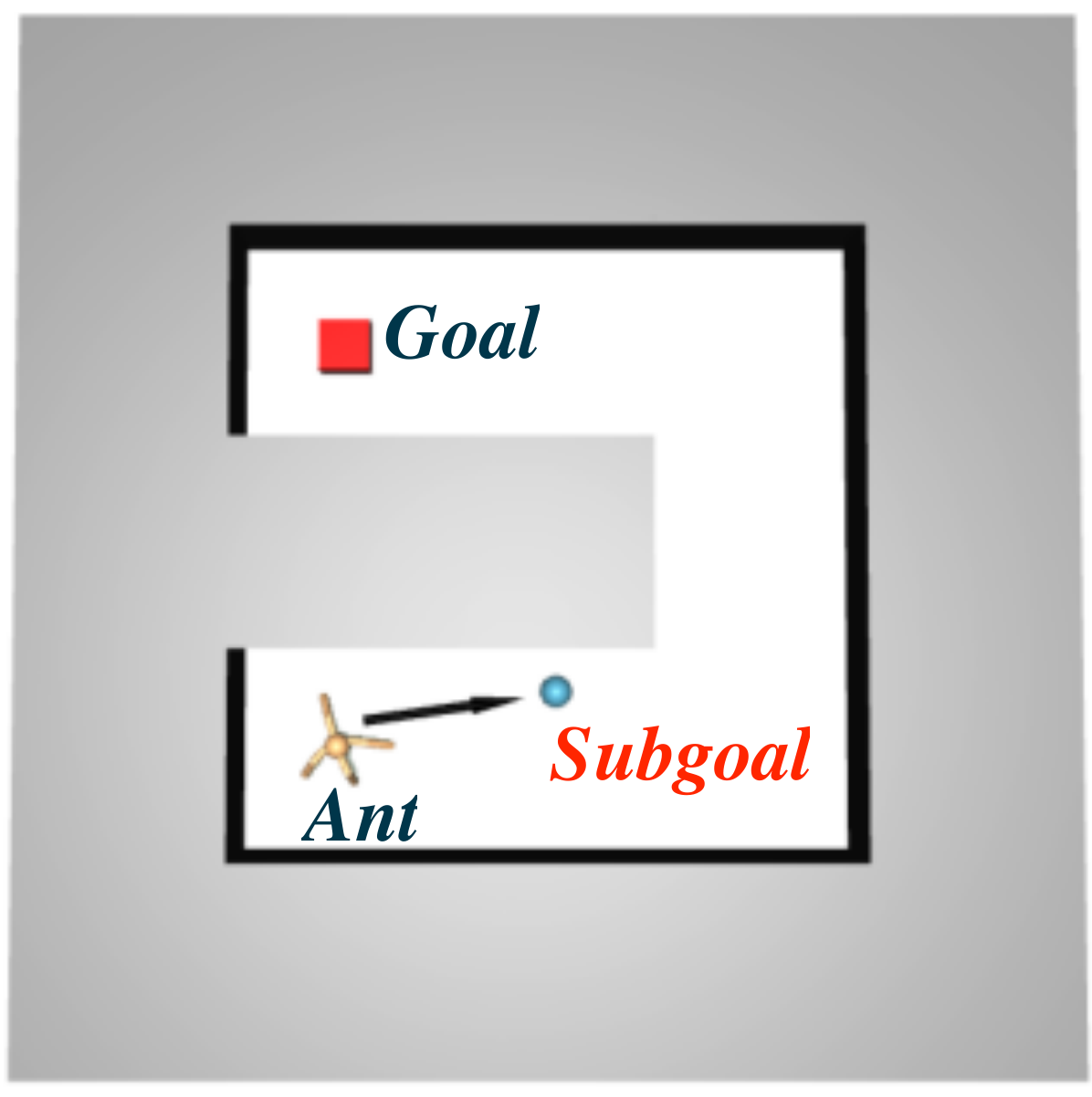}} 
\subfloat[\chadded{Subgoals generated by \textit{HIRO}\cite{nachum2018data}}]{\includegraphics[width=0.24\textwidth]{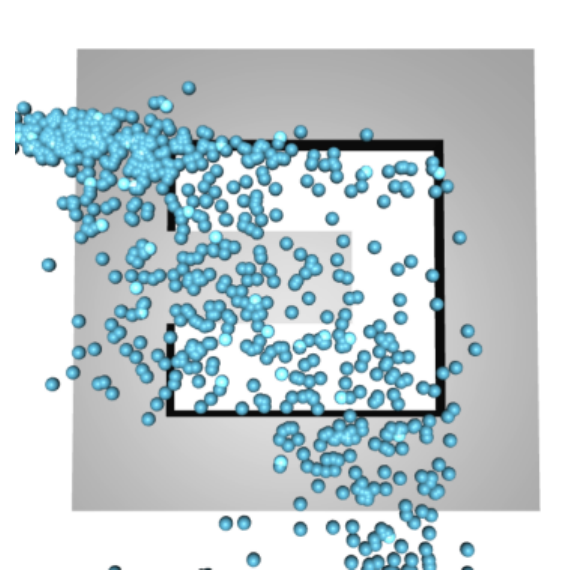}} 
\subfloat[\chadded{Subgoals generated by \textit{HRAC}\cite{zhang2020generating}}]{\includegraphics[width=0.24\textwidth]{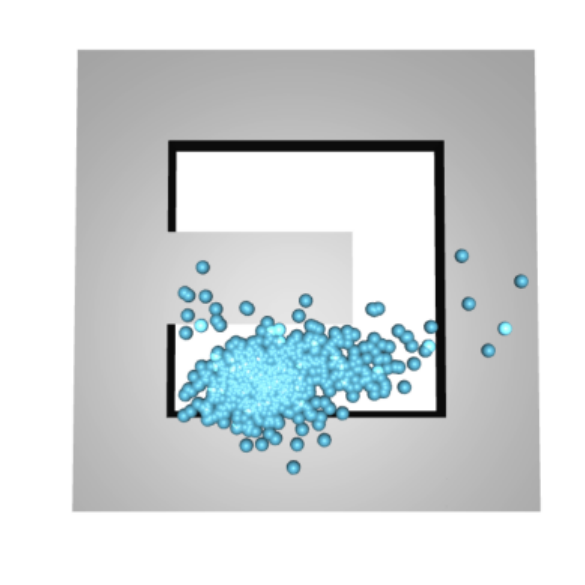}}
\subfloat[\chadded{Subgoals generated by \textit{HIGL}\cite{kim2021landmark}}]{\includegraphics[width=0.24\textwidth]{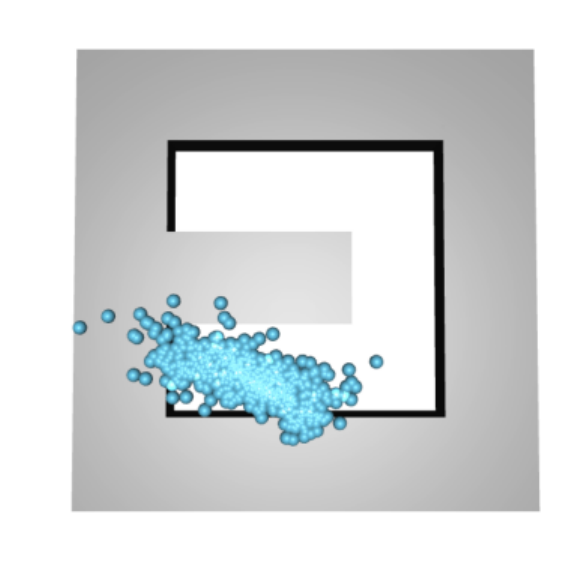}} \\
\subfloat[\chadded{Subgoals generated by \textit{ACLG}}]{\includegraphics[width=0.24\textwidth]{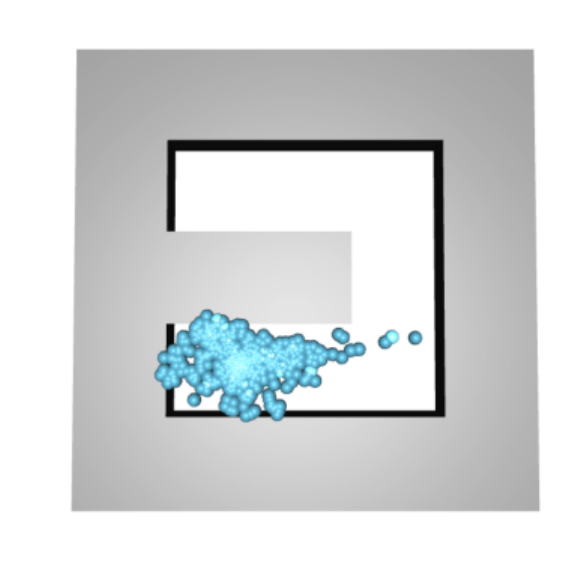}} 
\subfloat[\chadded{Subgoals generated by \textit{ACLG+GCMR}}]{\includegraphics[width=0.24\textwidth]{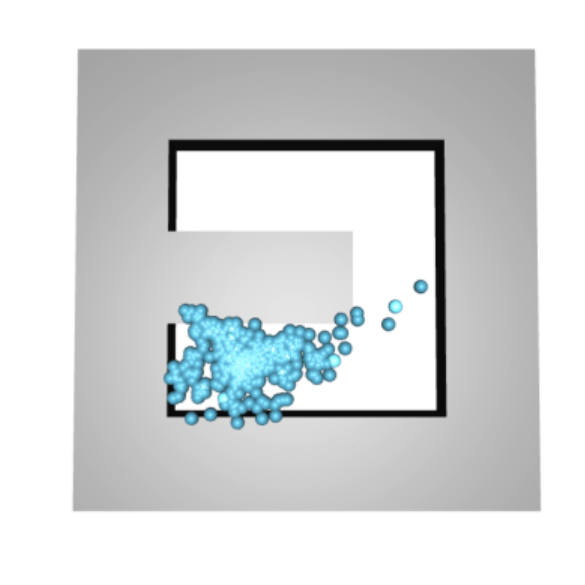}}\hspace{1.0em}
\subfloat[\chadded{Violin plots illustrating goal-reaching distance based on the lower-level value function.\cite{huang2019mapping,kim2021landmark,kim2022imitating}.}]{\includegraphics[width=0.4\textwidth]{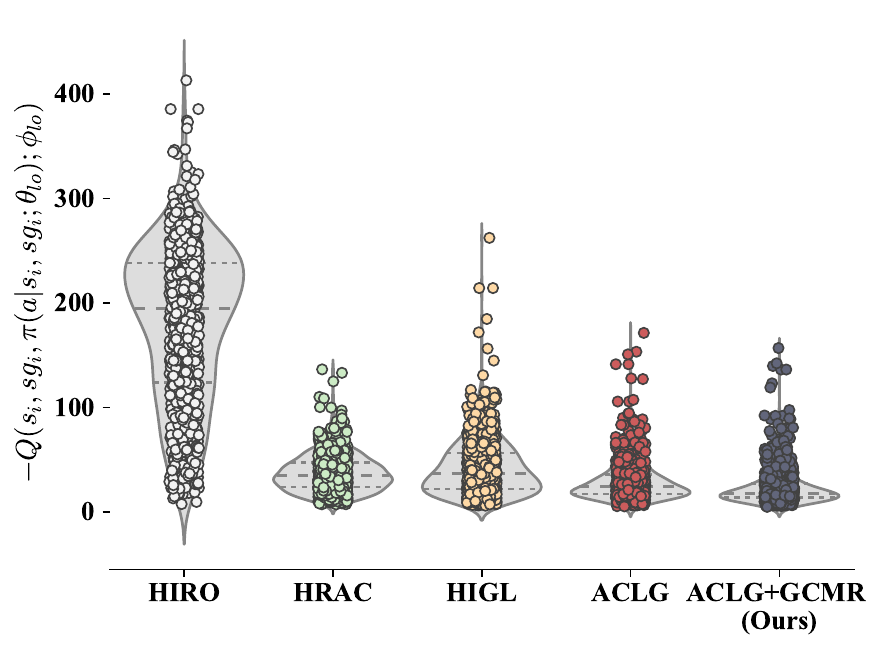}\label{reachability_comp}} 
\caption{\chadded{Visualizations and the goal-reaching distance measure of subgoals in the Ant Maze (U-shape) at 0.1M steps, based on a single run with the same random seed.}}
\label{subgoal_generate}
\end{figure}

\paragraph{Comparison results}

As shown in Fig.~\ref{compare_results}, GCMR contributes to achieving better performance and shows resistance to performance degradation. By integrating the GCMR with ACLG, we find that the proposed method outperforms the prior SOTA methods in almost all tasks. Especially in complicated tasks requiring meticulous operation (e.g., Ant Maze-Bottleneck, \textit{Stochastic} Ant Maze, and \textit{Large} Ant Maze), our method steadily improved the policy without getting stuck in local optima. \chreplaced{In most tasks}{In all Ant Maze tasks}, as shown in Fig.~\subref*{maze_u}, \subref*{maze_u_stoch}, \subref*{maze_u_large}, \subref*{maze_w}, \subref*{maze_u_bot}, \chdeleted{and }\ref{maze_point}, \chadded{\subref*{pusher_comp}, and \subref*{fetch_pp_comp}}, our method achieved a faster asymptotic convergence rate than others. There was no catastrophic failure. \chreplaced{O}{Although o}ur method slightly trailed behind the PIG\chreplaced{ in the Reacher (see Fig.~\subref*{reacher_comp}) and FetchPush (see Fig.~\subref*{fetch_push_comp}) tasks}{ and DHRL on Reacher (see Fig.~\ref{reacher_comp}) and Pusher (see Fig.~\ref{pusher_comp}) tasks, respectively}, it still achieved the second-best performance. Moreover, in Fig.~\ref{dense_compare}, we investigated the performance of the proposed method in the \textit{Dense}-reward environment, i.e., Ant Maze (\textit{Dense}, U-shape). The results demonstrate the GCMR is also effective and significantly improves the performance of ACLG. To verify whether GCMR can be solely applied to goal-reaching tasks, we conducted experiments in the Point Maze and Ant Maze (U-shape) tasks. From the experimental results depicted in Fig.~1 in the "Supplementary Materials", it can be observed that the GCMR can be used independently and achieve similar results to HIGL.


\renewcommand{\dblfloatpagefraction}{.95}
\begin{figure*}[!ht]
\captionsetup[subfloat]{format=hang, justification=centering}
\centering
\subfloat{\includegraphics[width=0.6\textwidth]{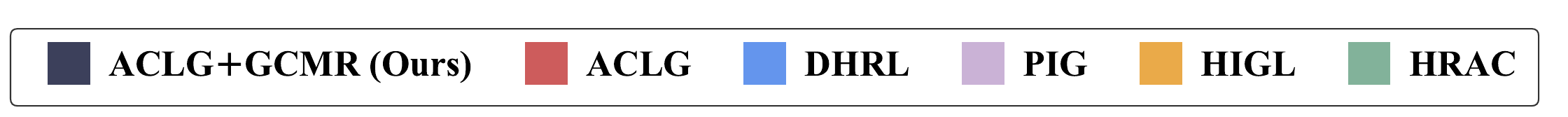}}\vspace{-4mm}
\setcounter{subfigure}{0}\\
\subfloat[Ant Maze (U-shape)]{\includegraphics[width=0.215\textwidth]{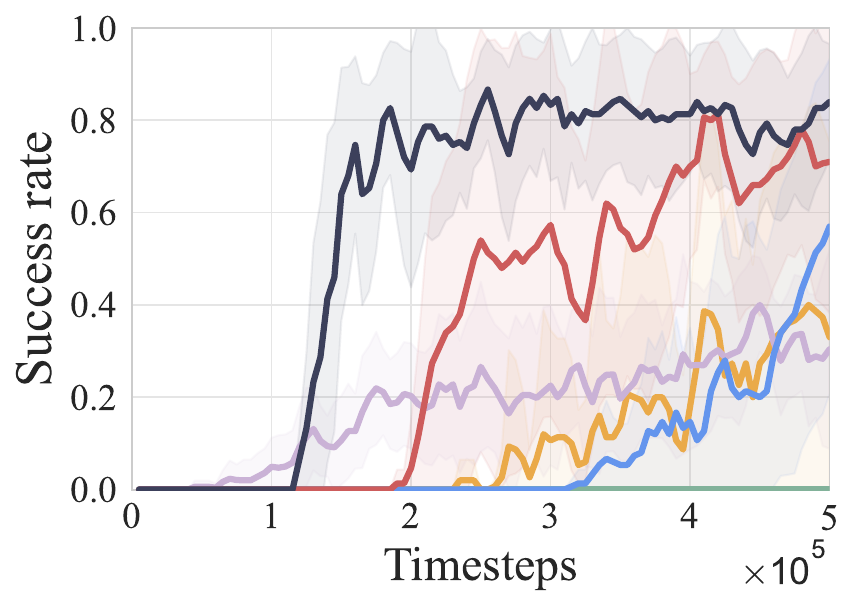}
\label{maze_u}}\hspace*{-0.8em}
\subfloat[\textit{Stochastic} Ant Maze \protect\\ (U-shape)]{\includegraphics[width=0.2\textwidth]{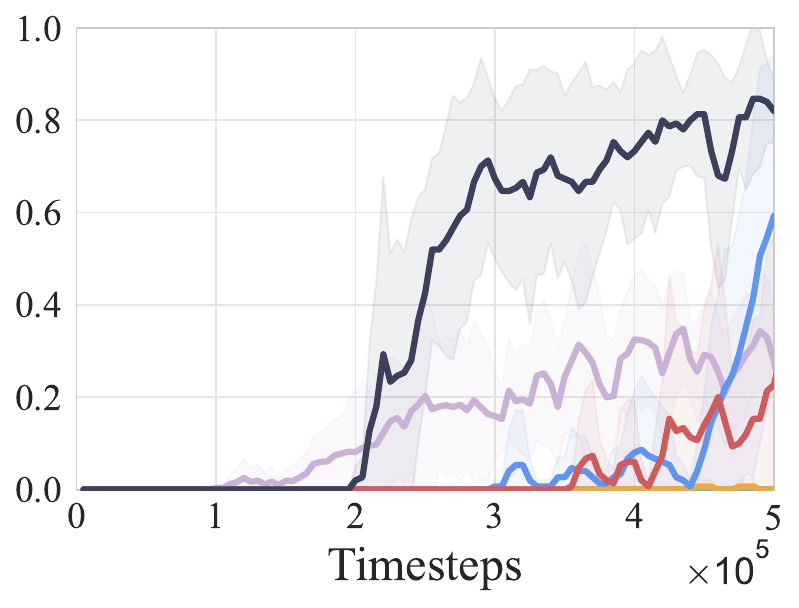}
\label{maze_u_stoch}}\hspace*{-0.8em}
\subfloat[\textit{Large} Ant Maze (U-shape)]{\includegraphics[width=0.2\textwidth]{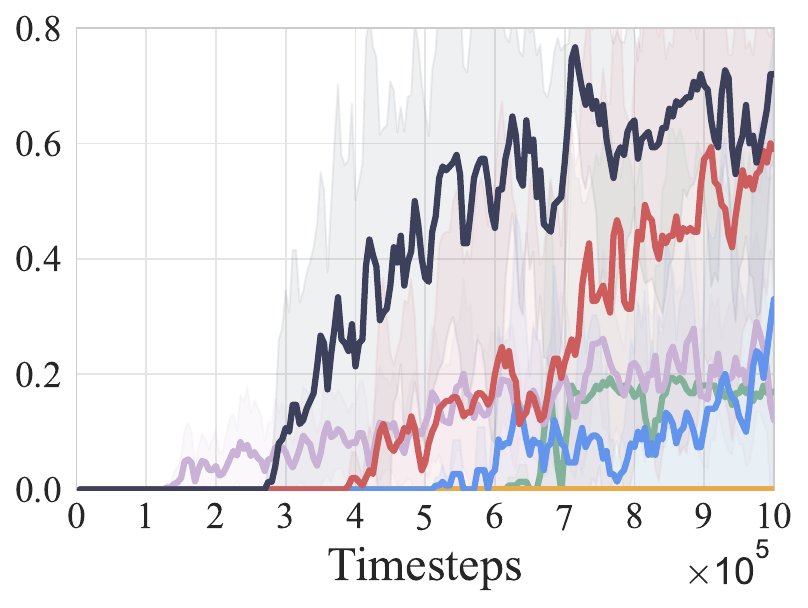}
\label{maze_u_large}}\hspace*{-0.8em}
\subfloat[Ant Maze (W-shape)]{\includegraphics[width=0.2\textwidth]{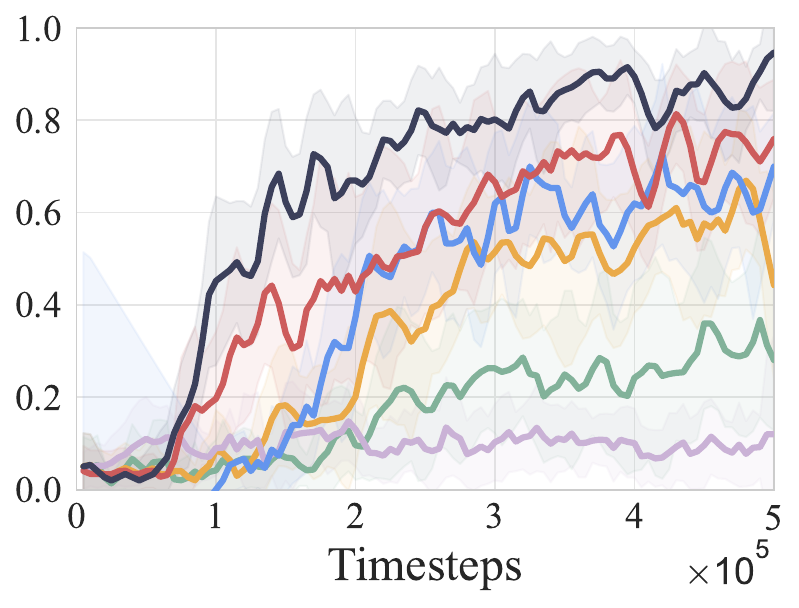}
\label{maze_w}}\hspace*{-0.8em}
\subfloat[Ant Maze-Bottleneck]{\includegraphics[width=0.2\textwidth]{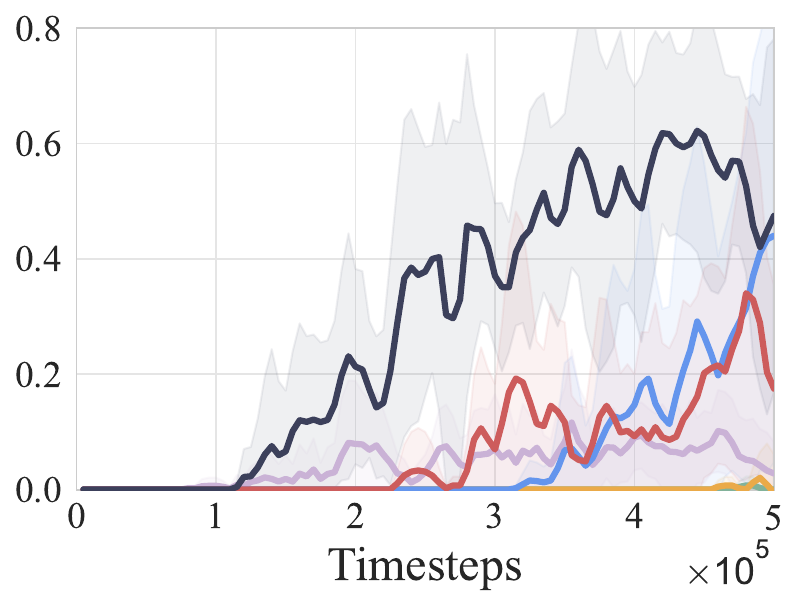}
\label{maze_u_bot}}\\ \vspace{-1.2em}
\subfloat[Point Maze]{\includegraphics[width=0.215\textwidth]{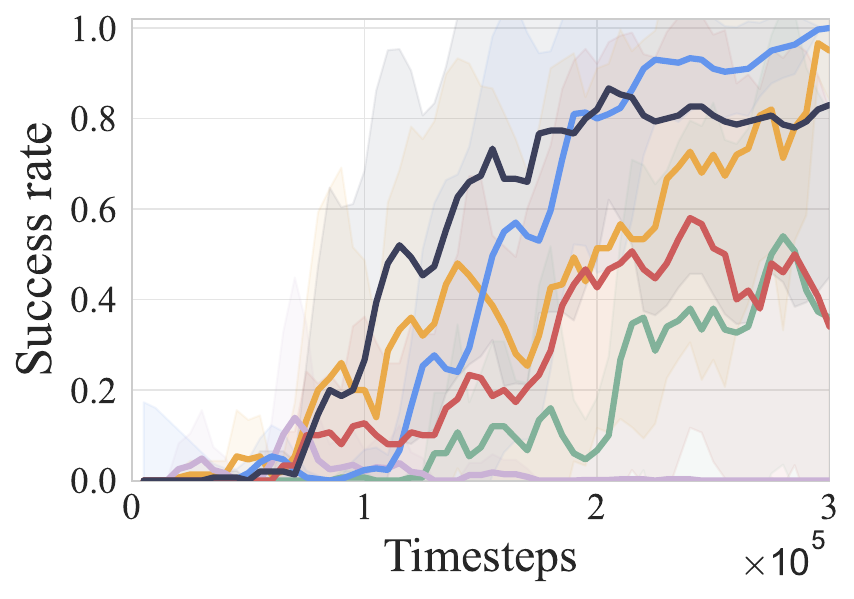}
\label{maze_point}}\hspace*{-0.8em}
\subfloat[Reacher]{\includegraphics[width=0.2\textwidth]{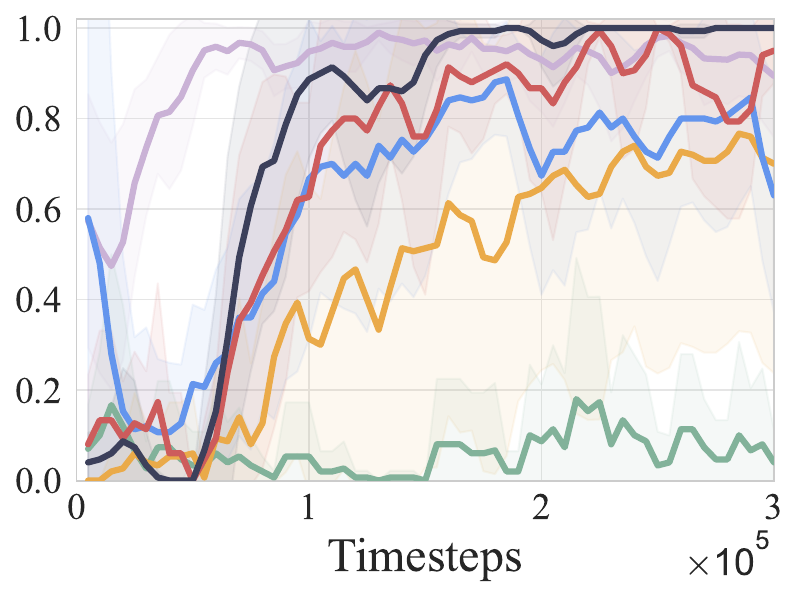}
\label{reacher_comp}}\hspace*{-0.8em}
\subfloat[Pusher]{\includegraphics[width=0.2\textwidth]{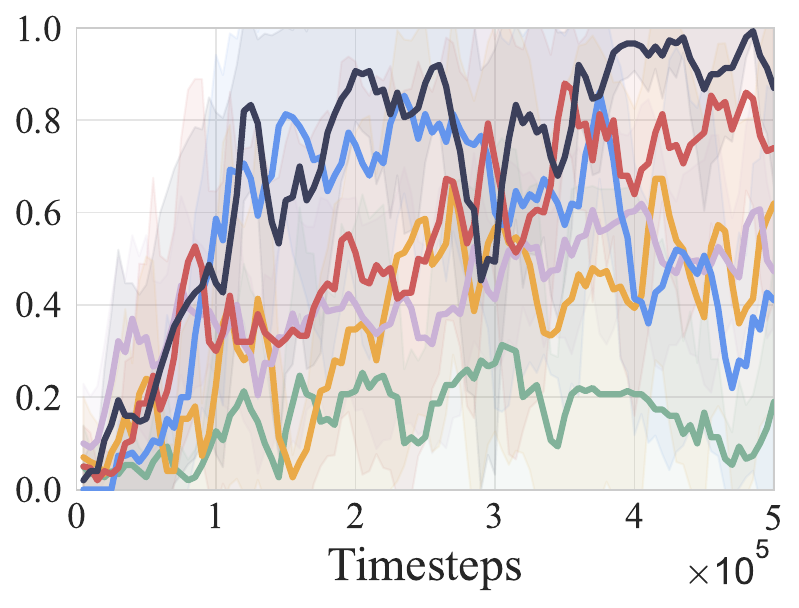}
\label{pusher_comp}}\hspace*{-0.8em}
\subfloat[FetchPickAndPlace]{\includegraphics[width=0.2\textwidth]{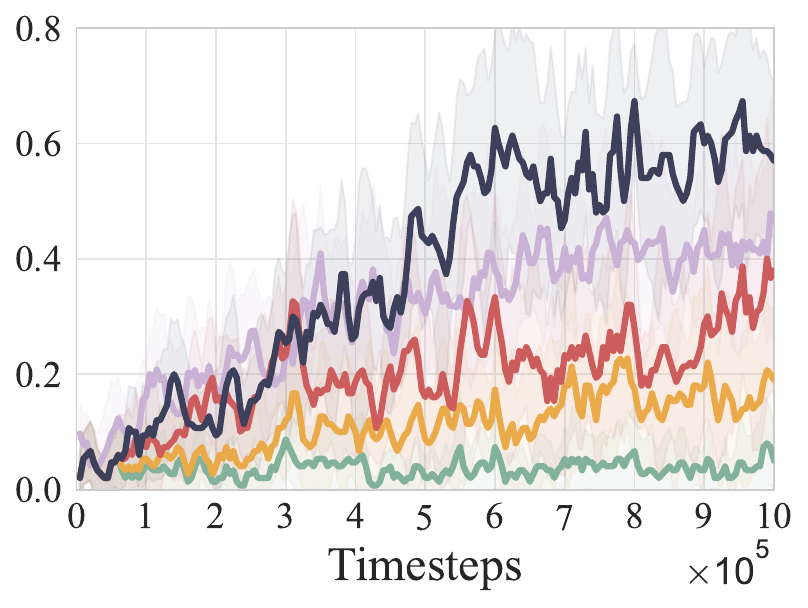}
\label{fetch_pp_comp}}\hspace*{-0.8em}
\subfloat[FetchPush]{\includegraphics[width=0.2\textwidth]{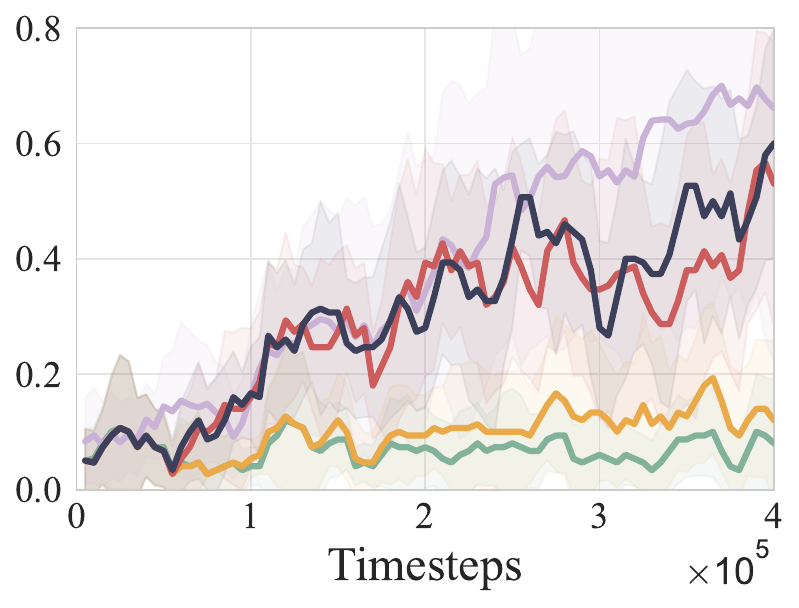}
\label{fetch_push_comp}}
\caption{The average success rate of multiple comparison methods on a set of \textit{Sparse}-reward environments. The solid lines represent the mean across five runs.}
\label{compare_results}
\end{figure*}

\paragraph{Comparison to existing goal-relabeling}
To justify the superiority of \chreplaced{the proposed \chadded{model-based off-policy correction }}{the proposed goal-relabeling approach}over the others. We compared it with various goal-relabeling technologies: (a) vanilla off-policy correction\chdeleted{(OPC)} in HIRO \cite{nachum2018data}, (b) hindsight-based goal-relabeling in HAC \cite{levy2019learning}, and (c) foresight goal inference\chdeleted{(FGI)} in MapGo \cite{zhu2021mapgo}, which is a model-based variant of vanilla hindsight-based goal-relabeling. 
\chadded{The average success rate illustrated in Fig.~\ref{relabel} highlights the significance of (both HIRO-style and HAC-style) relabeling for enhancing data efficiency, leading to accelerated learning compared to the case not using relabeling. Moreover, Fig.~\ref{relabel} also illustrates that HIRO-style relabeling outperforms HAC-style ones. Among them, the proposed relabeling with the soft-relabeling exhibits a slight advantage over the vanilla off-policy correction. However, the final performance of the proposed relabeling method is sensitive to the value of the shift magnitude $\delta_{sg}$, as shown in Fig.~\ref{goal_shift}. When soft-relabeling is not applied, the proposed relabeling exhibits inferior performance compared to the vanilla off-policy correction during the early stages of training.}
\chdeleted{Before making the comparison, we would like to first recognize the effectiveness of two tricks: the exponential weighting and soft-relabeling. \cite{janner2019trust} has been systematically explained that inaccuracies in learned models make long rollouts unreliable due to the compounding error. The gap between true returns and learned model returns cannot be eliminated, hence cumulative errors will be huge for too long rollouts. To address this, we employed an exponential weighting function along the horizontal axis to highlight shorter rollouts. Fig.~\ref{exp_weight} illustrates the significance of the exponential weighting in suppressing cumulative errors. On the other hand, }\chdeleted{soft relabeling was utilized to enhance the robustness of goal relabeling against outliers\chdeleted{, ensuring that the relabelled subgoals remain in close proximity to the original ones}. Fig.~\ref{relabel} provides insights into the impact of soft-relabeling on different goal-relabelling technologies. It can be observed that the soft-relabeling enhanced the robustness of different goal-relabelling technologies. \chdeleted{In the end, we investigated the impact of the model-based gradient penalty on different goal-relabelling technologies. As shown in Fig.~\ref{relabel}, all goal-relabelling technologies had a faster asymptotic convergence rate under gradient penalty. The result highlights the role of the gradient penalty in enhancing robustness against high-level errors, such as an unreachable or faraway goal.}}

\begin{figure}[!ht]
\centering
\includegraphics[width=0.7\textwidth]{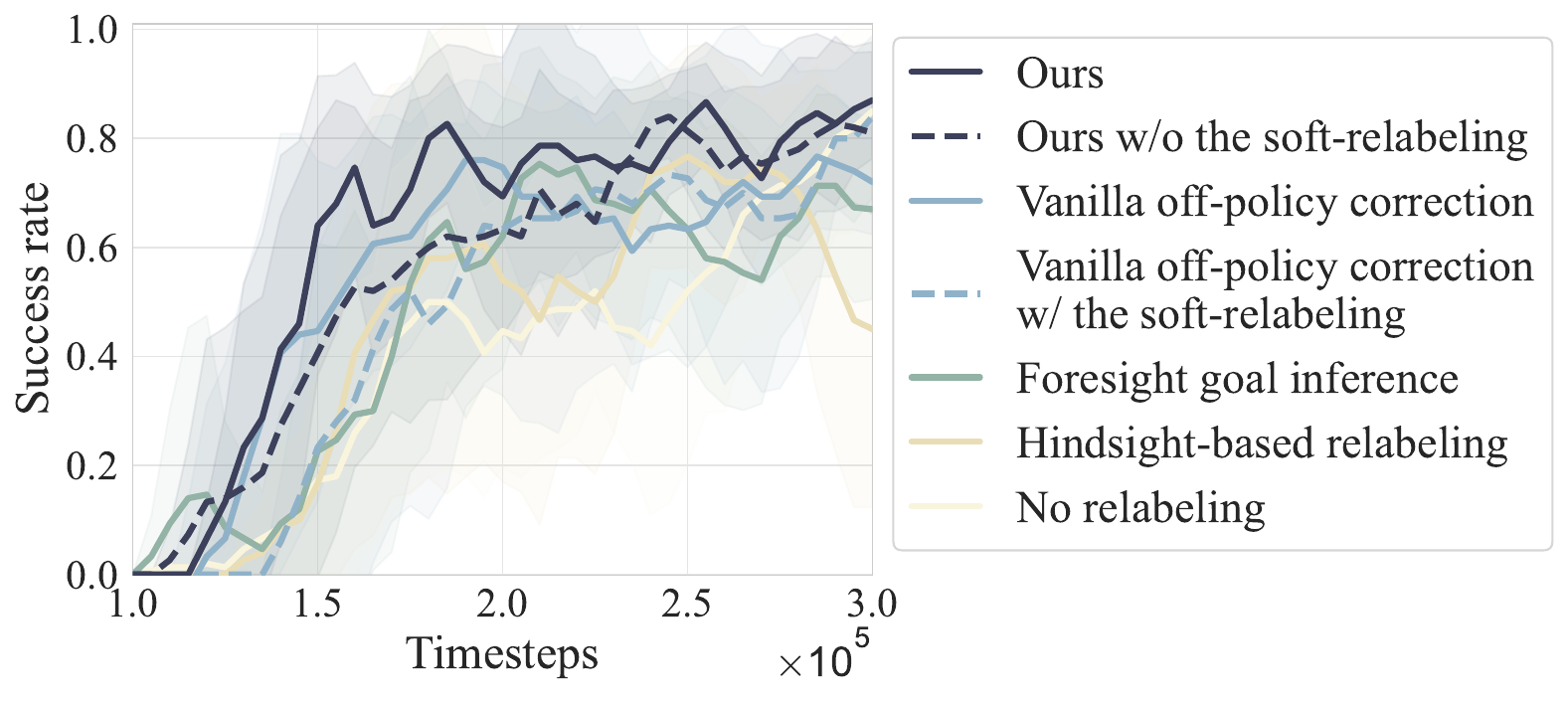}
\caption{Figure\chdeleted{ shows the importance of components related to relabeling and} compares the performance of different \chdeleted{goal-}relabeling technologies\chdeleted{: (a) the learning curves of ACLG based on the proposed relabeling method with and without exponential weighting on Ant Maze (U-shape), (b) the learning curves of ACLG based on different goal-relabeling technologies with (solid line) and without (dotted line) the soft-relabeling on the Point Maze, and (c) the learning curves of ACLG based on different goal-relabeling technologies (after applying the soft-relabeling) with (solid line) and without (dotted line) gradient penalty} on Ant Maze (U-shape). \chadded{The learning curves are plotted based on the average of over five independent runs}.}
\label{relabel}
\end{figure}

\subsection{Ablation study}
\chadded{We also investigate the effectiveness of different crucial components in our method.}
\paragraph{\chadded{Increased Critic Training Iterations in Lower-level}}
\chdeleted{In this section, we investigate the effectiveness of two crucial factors in our algorithm: the gradient penalty term and the one-step planning term. }Considering that the number of lower-level critic training iterations was increased to alleviate the impact of the gradient penalty, we additionally provide a comprehensive analysis concerning the effects of increased iterations on various alternative methods. As depicted in Fig.~\ref{crit5}, increasing the number of critic training iterations led to improved performance when compared to the original approach. Moreover, even without increasing the training iterations, ACLG+GCMR consistently outperformed other methods and overtook the ACLG with increased training iterations after several timesteps. The results demonstrate that the \chreplaced{gradient penalty}{GCMR} can enhance the robustness of HRL frameworks and prevent falling into local pitfalls.

\begin{figure}[htbp]
\captionsetup[subfloat]{format=hang, justification=centering}
\centering
\includegraphics[width=0.62\textwidth]{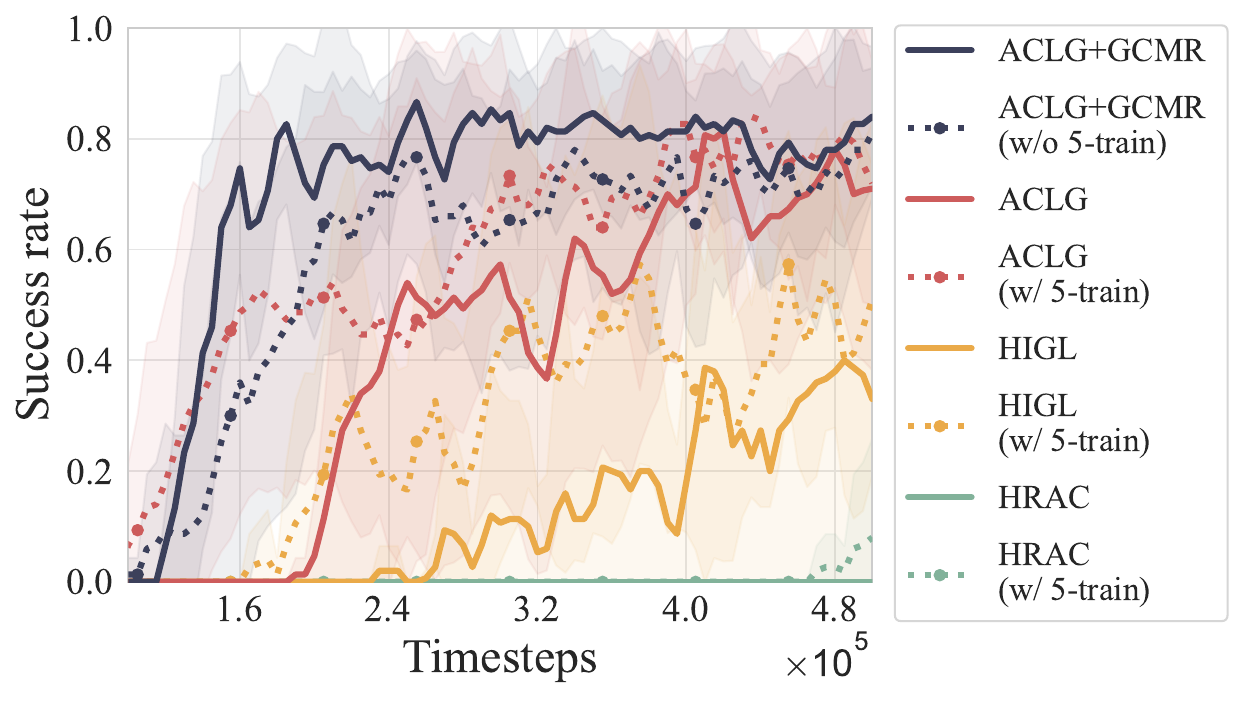}
\caption{We investigate the impact of increased training iterations for critic on various HRL methods in the Ant Maze (U-shape) environment, where "5-train" indicates that the number of training iterations of lower-level critic network is increase to 5.}
\label{crit5}
\end{figure}

\paragraph{\chadded{Ablation Study on Gradient Penalty term $\mathcal{L}_{gp}$}}
\chadded{In Fig.~\ref{weights_actor_gp} and Fig.~\ref{mgp_abs}, we investigate the effectiveness of the gradient penalty term on enhancing the generalization of the low-level policy and the impact on the final performance, respectively. Fig.~\ref{weights_actor_gp} \subref{weights_wo_mgp} and \subref{weights_w_mgp} depict the weight distribution of the final layer in the low-level actor over training steps (100K, 300K, and 500K). Intriguingly, we observe that in the constrained network with the gradient penalty, the weights fewer centred at zero compared to those in the unconstrained network. The uncertainty in the weights of the network has been evidenced to potentially enhance generalization performance\cite{blundell2015weight}. Meanwhile, in Fig.~\subref*{state_q}, the result of the value estimation indicates that even in unseen scenarios, similar state-action pairs still yield higher value estimations when applying the gradient penalty, thereby demonstrating robust generalization. Here, we shift the state distribution to simulate the unseen scenarios. Moreover, the learning curves in Fig.~\ref{mgp_abs} demonstrate that the application of gradient penalty contributes to achieving better asymptotic performance compared to not using it.}

\begin{figure}[htbp]
\captionsetup[subfloat]{format=hang, justification=raggedright}
\centering
\subfloat[\chadded{Histogram of final layer's weights in low-level actor of ACLG+GCMR \textit{without} $\mathcal{L}_{gp}$}]{\includegraphics[width=0.3\textwidth]{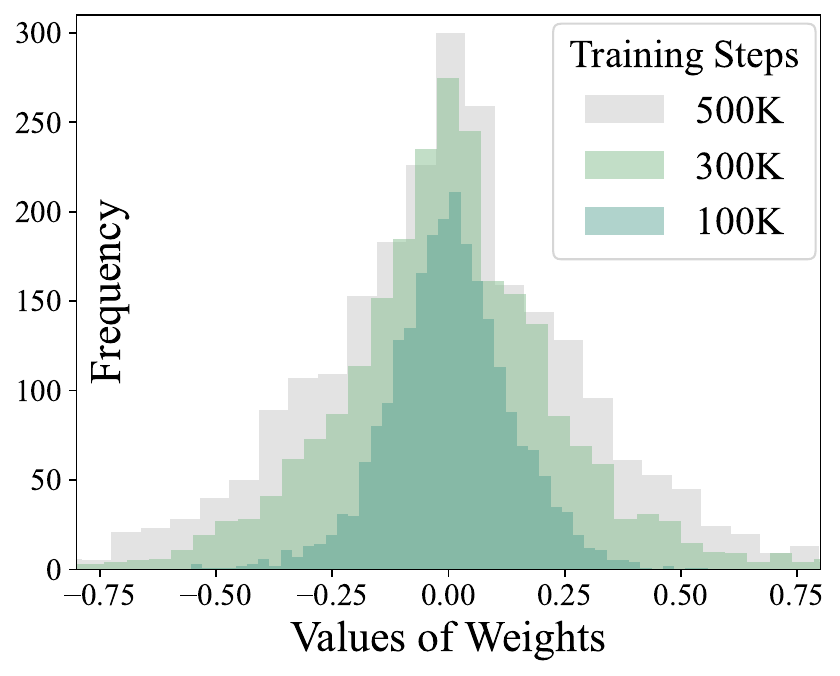}
\label{weights_wo_mgp}}\hspace*{-0.6em}
\subfloat[\chadded{Histogram of final layer's weights in low-level actor of ACLG+GCMR \textit{with} $\mathcal{L}_{gp}$}]{\includegraphics[width=0.3\textwidth]{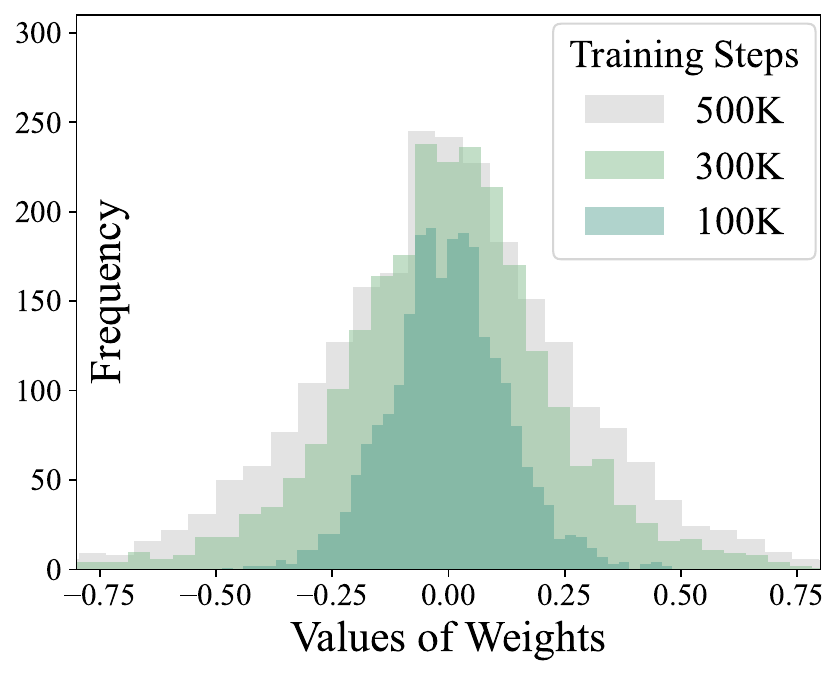}
\label{weights_w_mgp}}
\subfloat[\chadded{Value estimation based on lower-level Q-value functions}]{\includegraphics[width=0.38\textwidth]{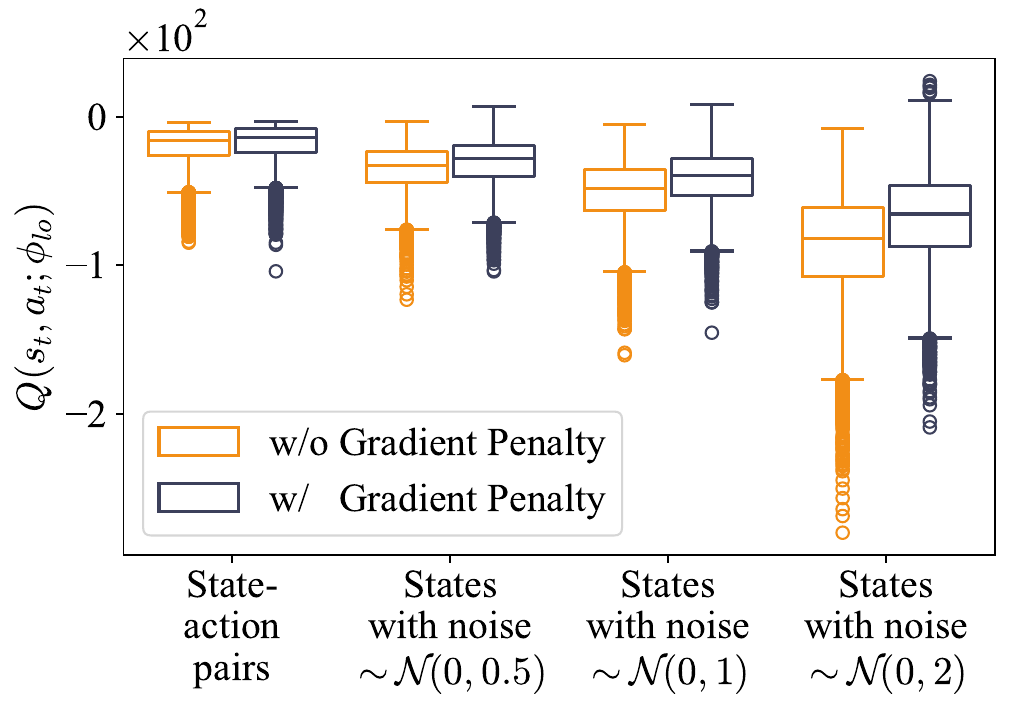}
\label{state_q}}\\
\caption{\chadded{Impact of gradient penalty on weight distribution and generalization of low-level policies. \subref{weights_wo_mgp} and \subref{weights_w_mgp} depict the weight distribution of the final layer in the low-level actor over training steps (100K, 300K, and 500K). These experiments are conducted in the Ant Maze (U-shape) task. \subref{state_q} illustrates value estimation on state-action pairs using lower-level Q functions of ACLG+GCMR w/o or w/ gradient penalty at 300K steps, in the presence of a state distribution shift. The state distribution shifts can simulate unseen scenarios.}}
\label{weights_actor_gp}
\end{figure}

\begin{figure}[!htbp]
\captionsetup[subfloat]{format=hang, justification=centering}
\centering
\subfloat[\chadded{Ant Maze (U-shape)}]{\includegraphics[width=0.4\textwidth]{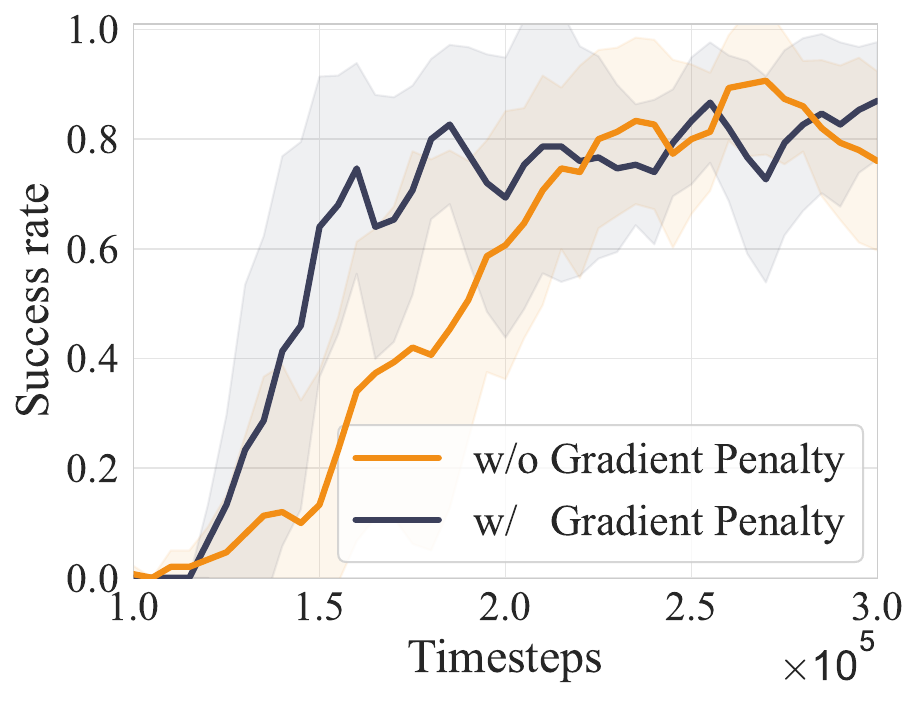}
\label{mgp_abs1}}\hspace*{-0.8em}
\subfloat[\chadded{Ant Maze-Bottleneck}]{\includegraphics[width=0.4\textwidth]{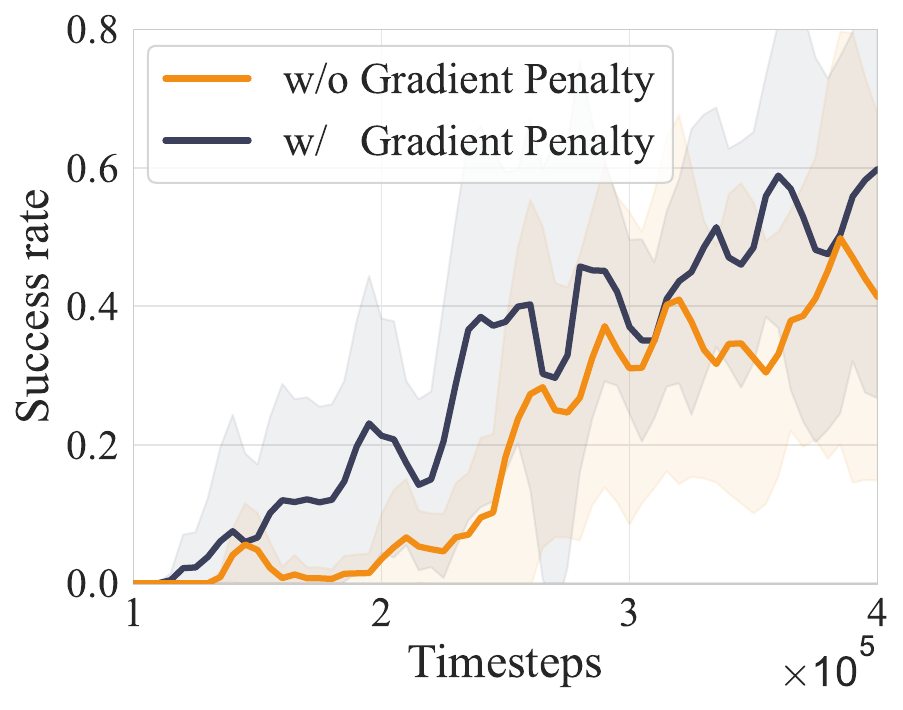}
\label{mgp_abs2}}\\
\caption{\chadded{Ablation study on gradient penalty term $\mathcal{L}_{gp}$. The success rate is averaged over 5 random seeds.}}
\label{mgp_abs}
\end{figure}

\paragraph{\chadded{Ablation Study on One-Step Rollout-based Planning term $\mathcal{L}_{osrp}$}}
\chadded{In Fig.~\ref{osrp_abs_traj} and Fig.~\ref{osrp_abs}, we clarify the role of the one-step rollout-based planning term and highlight its importance. As anticipated, in Fig.~\ref{osrp_abs_traj}, under the guidance of the one-step rollout-based planning, the agent consistently discovers smoother trajectories towards the final goal. These paths swiftly traverse the contours of higher-level Q-value, ultimately advancing towards positions of the highest higher-level Q-value. Comparing the first and second rows of Fig.~\ref{osrp_abs_traj}, we can observe that this effect is more pronounced in the Point Maze because its maximum distance within a single step exceeds that in the Ant Maze (U-shape). Also, in Fig.~\ref{osrp_abs}, we observe that applying the one-step rollout-based planning term can significantly enhance the final performance across various control tasks, demonstrating the effectiveness of the introduced planning term.}

\renewcommand{\dblfloatpagefraction}{.95}
\begin{figure}[htbp]
\captionsetup[subfloat]{format=hang, justification=centering}
\centering
\subfloat[\chadded{Ant Maze (U-shape)}]{\includegraphics[width=0.4\textwidth]{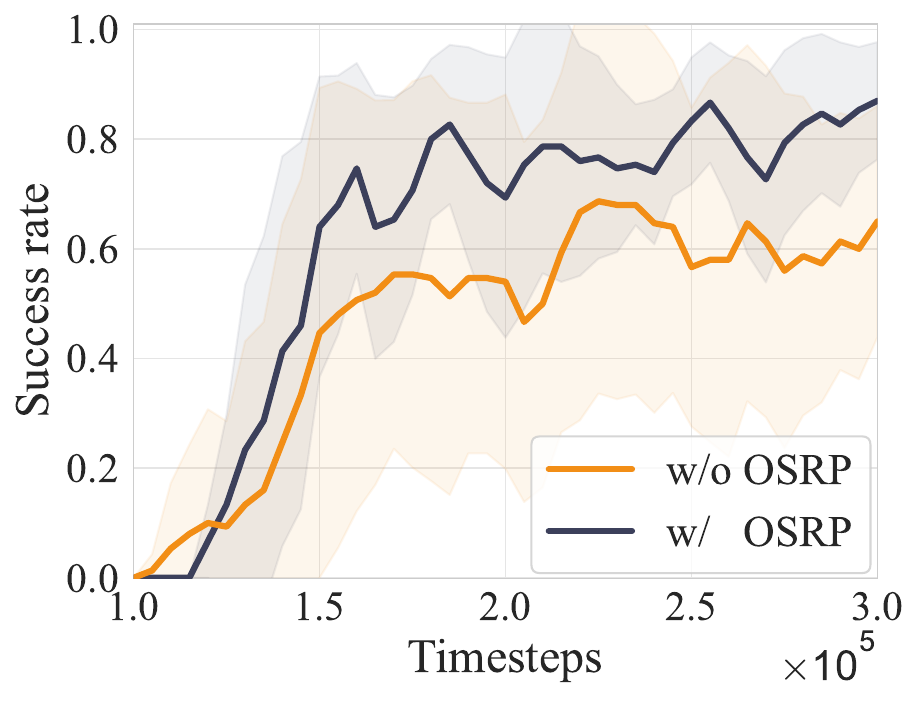}
\label{osrp_abs1}}\hspace*{-0.8em}
\subfloat[\chadded{Point Maze}]{\includegraphics[width=0.4\textwidth]{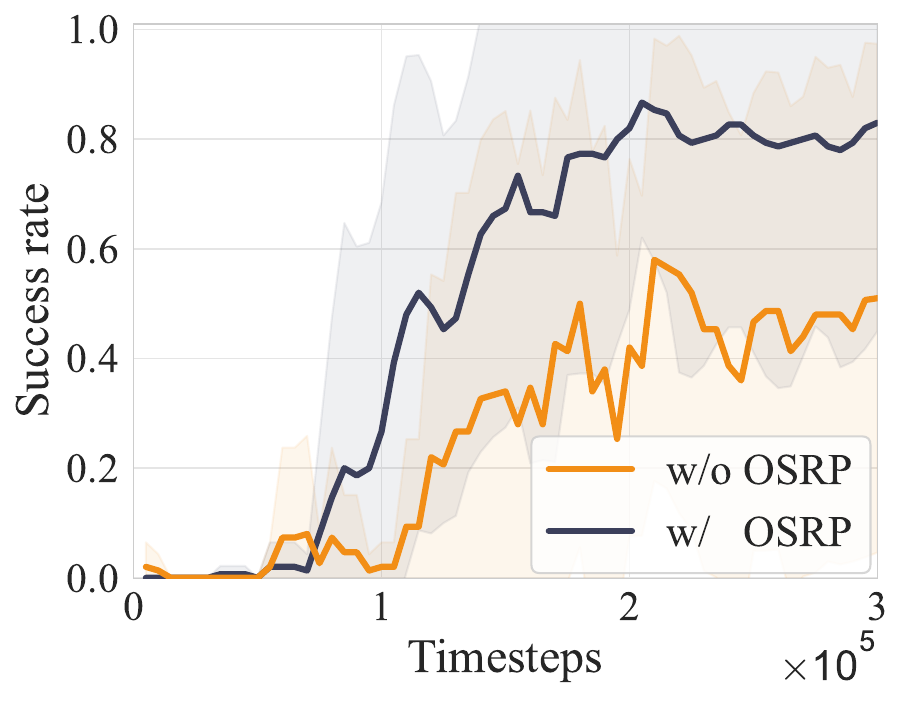}
\label{osrp_abs2}}\\
\caption{\chadded{Ablation study on one-step rollout-based planning term $\mathcal{L}_{osrp}$. The success rate is averaged over 5 random seeds.}}
\label{osrp_abs}
\end{figure}

\renewcommand{\dblfloatpagefraction}{.95}
\begin{figure}[htbp]
\captionsetup[subfloat]{format=hang, justification=centering}
\centering
\subfloat[\chadded{Ant Maze (U-shape)}]{\includegraphics[width=0.24\textwidth]{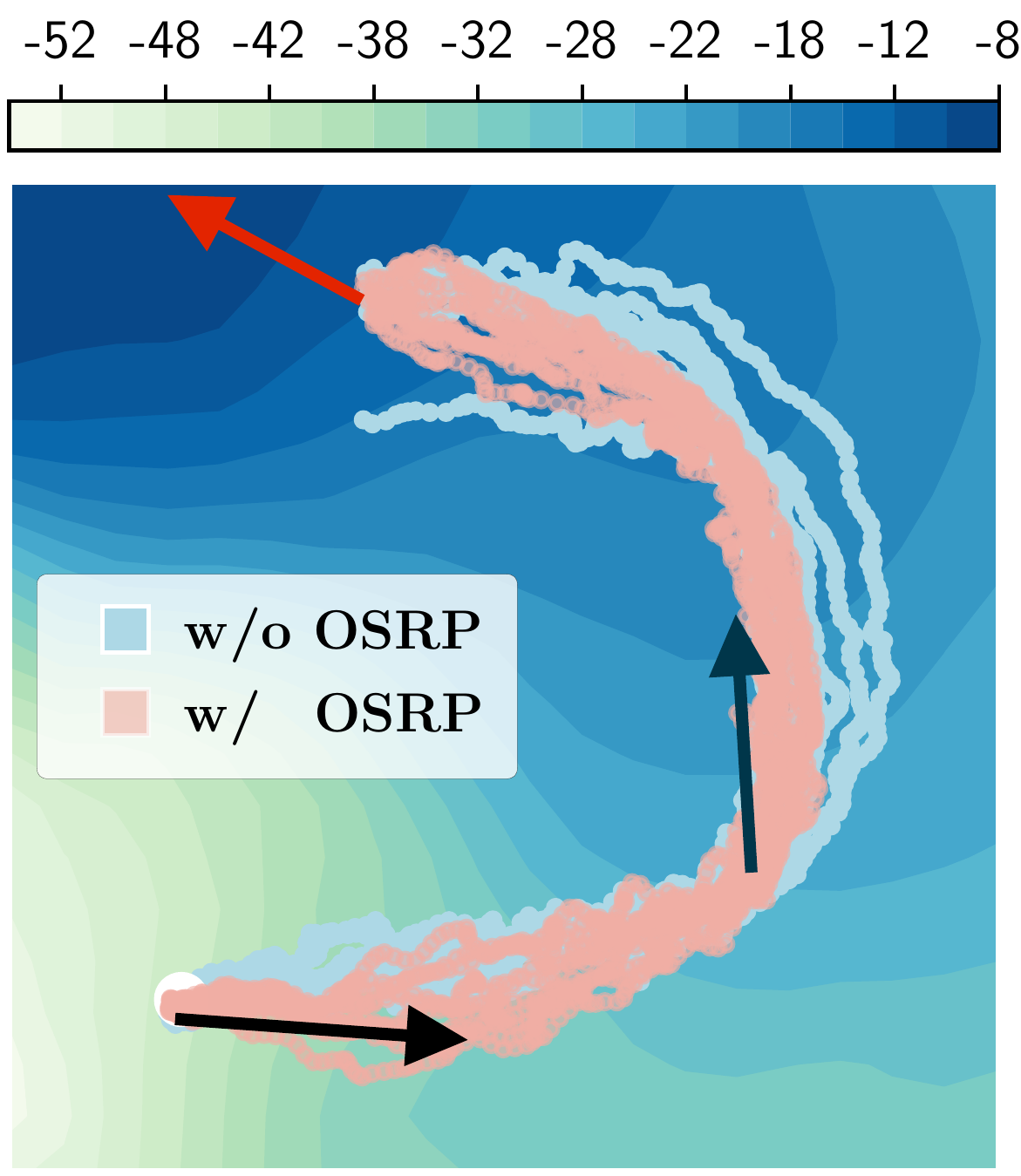}
\label{osrp_abs_traj11}}\hspace*{-0.6em}
\subfloat[\chadded{W/o OSRP}]{\includegraphics[width=0.24\textwidth]{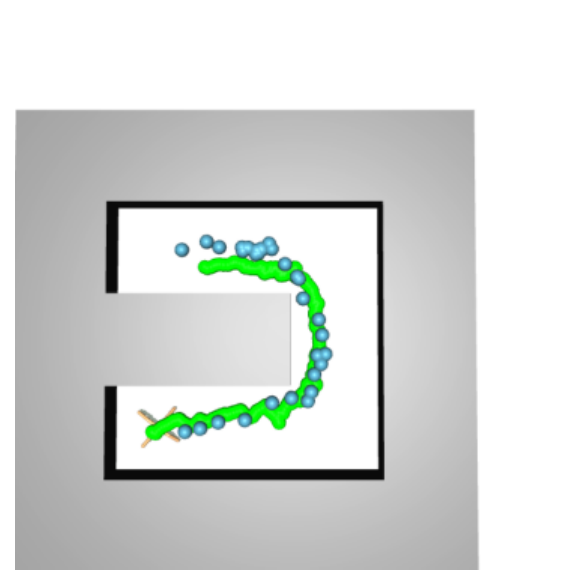}
\label{osrp_abs_traj12}}\hspace*{-0.6em}
\subfloat[\chadded{W/ OSRP}]{\includegraphics[width=0.24\textwidth]{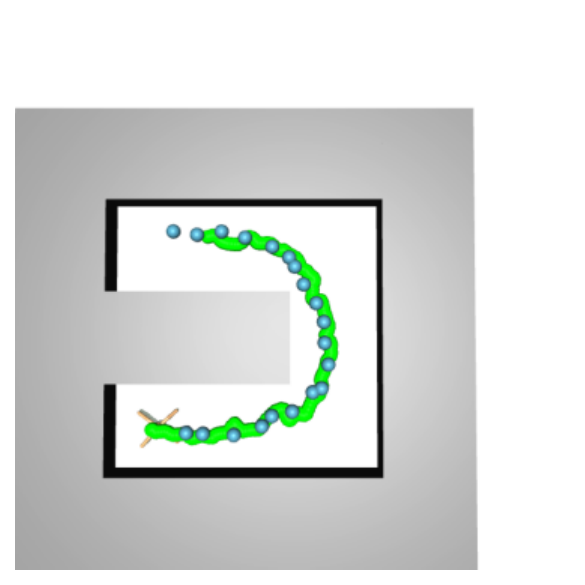}
\label{osrp_abs_traj13}}\\
\subfloat[\chadded{Point Maze}]{\includegraphics[width=0.24\textwidth]{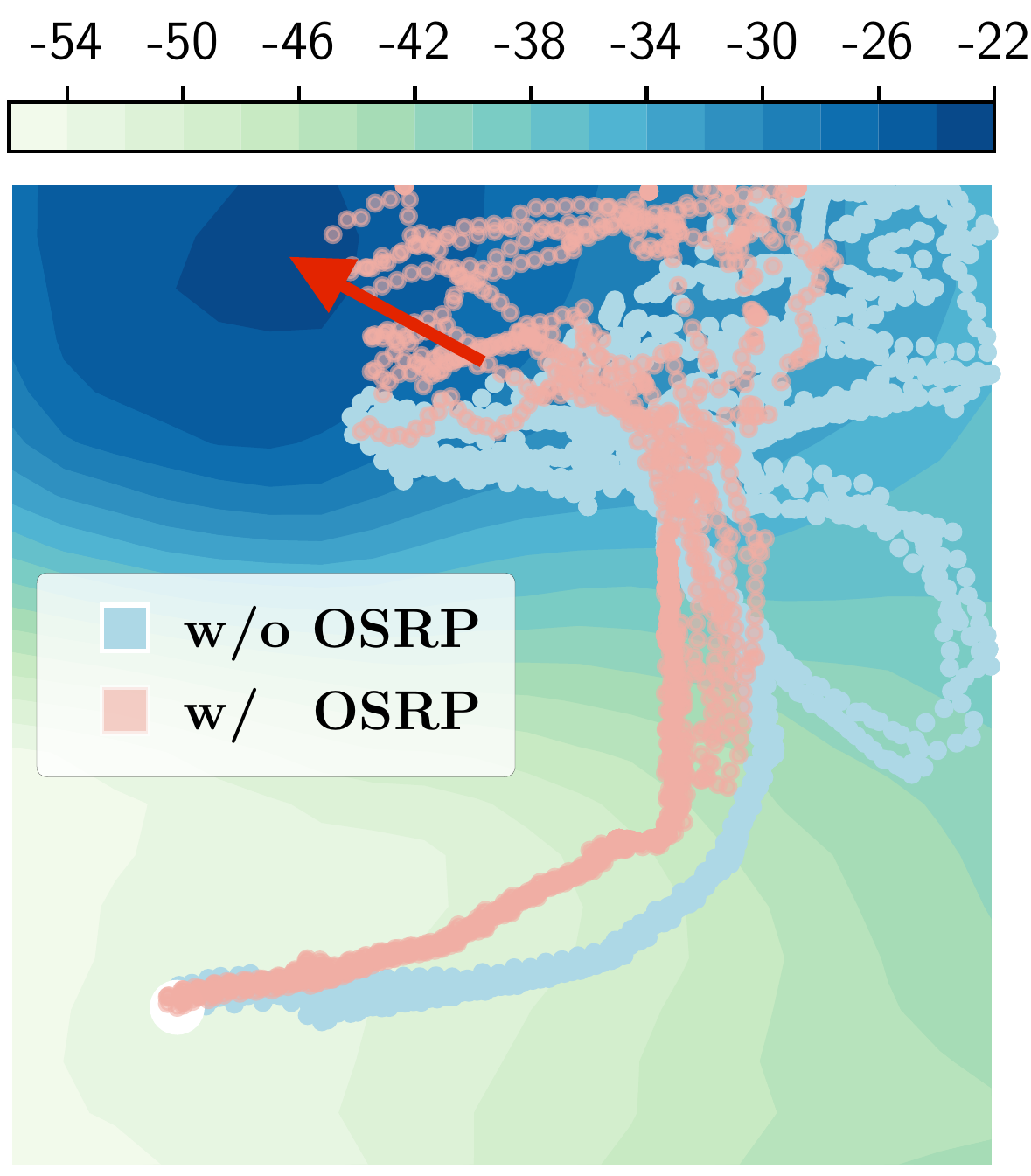}
\label{osrp_abs_traj21}}\hspace*{-0.6em}
\subfloat[\chadded{W/o OSRP}]{\includegraphics[width=0.24\textwidth]{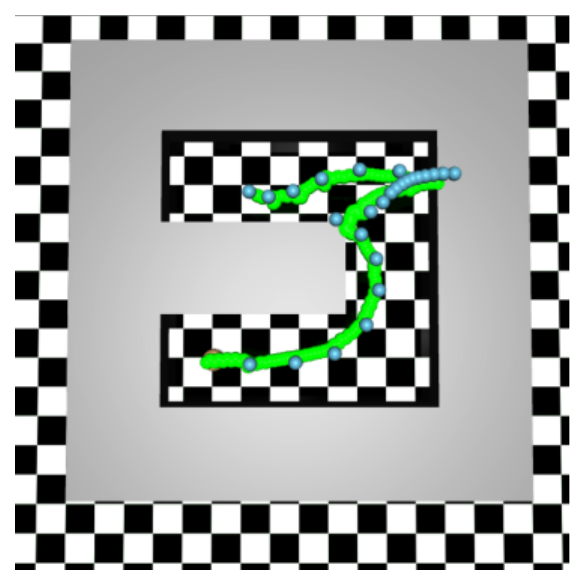}
\label{osrp_abs_traj22}}\hspace*{-0.6em}
\subfloat[\chadded{W/ OSRP}]{\includegraphics[width=0.24\textwidth]{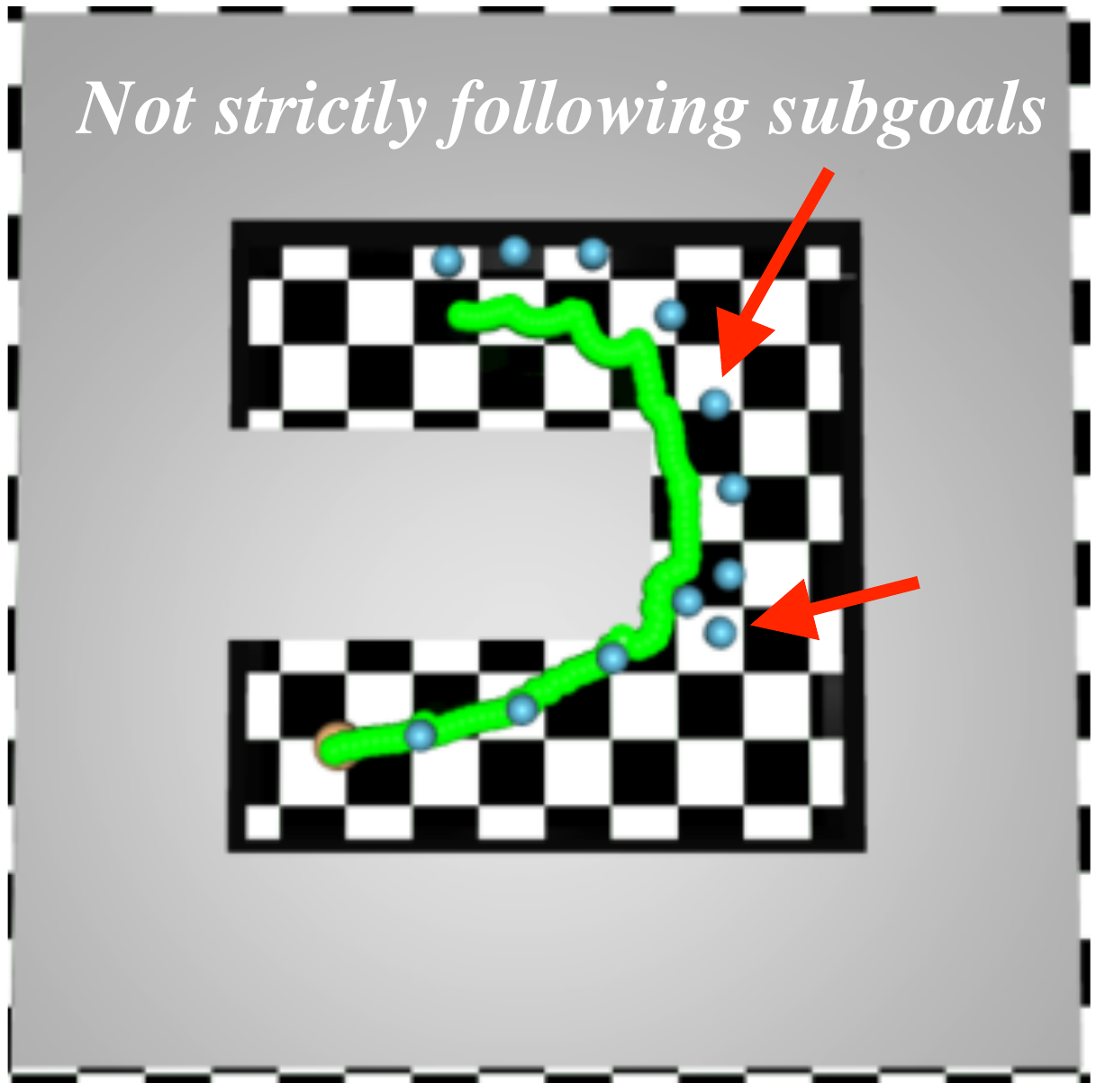}
\label{osrp_abs_traj23}}
\caption{\chadded{Trajectories of agents with or without the guidance of the one-step rollout-based planning. The policies of agents use the ACLG+GCMR and are trained for 0.3M steps. In \subref{osrp_abs_traj11} and \subref{osrp_abs_traj21}, the contours of higher-level Q-value are plotted.}}
\label{osrp_abs_traj}
\end{figure}

\section{Discussion}
\label{Discussion}
\subsection{\chadded{Accelerating reinforcement learning from the perspective of inter-level cooperation, with emphasis on the lower level.}}
\chadded{In prior works, there has been a strong emphasis on optimizing the higher-level policy, as global planning is determined by the higher-level policy. The lower-level policy is overlooked, merely seen as \textit{an "unintelligent" subordinate} to the higher level. Yet, in fact, the behavior of lower-level policy significantly influences the effectiveness of exploration and the stability of the hierarchical system because the lower-level policy interacts directly with the environment. Reinforcing the robustness of the lower-level policy can prevent catastrophic failures, such as collisions or tipping over, thereby accelerating reinforcement learning. This study reemphasizes the importance of the lower-level policy from the perspective of inter-level cooperation, drawing attention to related research on optimizing lower-level policies.}

\subsection{\chadded{Effectiveness of the gradient penalty and one-step rollout-based planning is significant, while model-based relabeling is weak.}}
\chadded{For facilitating inter-level cooperation and communication, we propose a novel goal-conditioned HRL framework, which mainly consists of three crucial components: the model-based off-policy correction for the data efficiency, the gradient penalty on the lower-level policy for the robustness, and one-step rollout-based planning for the cooperation. The experimental results indicate that the gradient penalty and one-step rollout-based planning achieved the expected effects, significantly enhancing the performance of the HRL framework. However, the model-based off-policy correction did not yield significant effects. 
The reason might be that these predicted states in correction could potentially suffer from compounding errors in long-horizon rollouts. Although soft-relabeling can mitigate this error, it is unstable due to the introduction of sensitive hyper-parameter $\delta_{sg}$ (see Fig.~\ref{goal_shift}).}

\subsection{\chadded{Limitations and future research}} This study has certain limitations. First, our experiments show that the GCMR achieved significant performance improvement, and such improvement came at the expense of more
computational cost (see Table~4 in the "Supplementary Materials" for a quantitative analysis of computational cost). However, the time-consuming issue only occurs during the training stage and will not affect the execution response time in the applications. Second, we need to clarify that the scope of applicability is off-policy goal-conditioned HRL. The effectiveness in general RL tasks or online tasks \chreplaced{still needs to be validated in future work}{had not been validated}. Third, the experimental environments used in this study have 7 or 30 dimensions. Our network architecture of transition dynamics models is relatively simple, leading to limited regression capability. Applications in complex environments that closely resemble real-world scenarios with high-dimensional observation, like the large-scale point cloud environments encountered in autonomous driving, might face limitations. This issue will be investigated in our future work. \chadded{Besides, related research on cooperative multi-robot HRL has successfully coped with extremely complex environments by enabling multiple robots to learn through the collective exchange of environmental data\cite{setyawan2022cooperative, setyawan2022depth, setyawan2022combinations}. Integrating the proposed algorithm into multi-robot HRL systems is expected to enhance the performance in complex environments further, and this will be investigated in future work. Finally, we can observe that the effect of one-step rollout-based planning is more pronounced in an environment with a higher maximum distance within a single step. This observation inspires us to delve into multi-step rollout-based planning in the future to broaden its application.}

\section{Conclusion}
\label{Conclusion}
This study proposes a new goal-conditioned HRL framework with Guided Cooperation via Model-based Rollout (GCMR), which uses the learned dynamics as a bridge for inter-level cooperation. Experimentally we instantiated several cooperation and communication mechanisms to improve the stability of hierarchy, achieving both data efficiency and learning efficiency. To our knowledge, very few prior works have discussed the model exploitation problem in goal-conditioned HRL.  This research not only provides a SOTA HRL algorithm but also demonstrates the potential of integrating the learned dynamics model into goal-conditioned HRL, which is expected to draw the attention of researchers to such a direction.

\bibliographystyle{plainnat}
\bibliography{neurips_2023}

\clearpage

\appendix

\section{Lipschitz Property of the Q-function w.r.t. action}
\label{sec:mgp_proof}
In this appendix, we provide a brief proof for \textbf{Proposition 1}. More detailed proof can be found in \chreplaced{\cite{blonde2022lipschitzness, gao2022robust}}{\cite{gao2022robust}}. 
We start out with a lemma that helps with subsequent derivation.
\begin{lemma} Assume policy gradients w.r.t. input actions in an MDP admit a bound at any time $t$: $\Vert \frac{\partial \pi^*(a_{t+1}|s_{t+1})}{\partial a_t}\Vert_F \leq L_{\pi}$.
Then the following holds for any non-negative integer $c$ and $t$:
\begin{equation}
\begin{aligned}
\big\vert \nabla_{a_t} &\mathbb{E}_{s_{t+c}|s_t}[r^*(s_{t+c}, a_{t+c})]\big\vert \\
&\leq L_{\pi} \mathbb{E}_{s_{t+c}|s_t} \big\vert \nabla_{a_{t+1}} \mathbb{E}_{s_{t+c}|s_{t+1}}[r^*(s_{t+c}, a_{t+c})]\big\vert
\end{aligned}
\end{equation}
\label{lemma:1}
\end{lemma}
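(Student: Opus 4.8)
The plan is to prove Lemma \ref{lemma:1} as a single-step chain-rule estimate: it peels one time step off the horizon, trading the differentiation variable $a_t$ for $a_{t+1}$ at the cost of one factor $L_\pi$, and it is exactly the atomic inequality that the proof of Proposition \ref{gradient_penalty} iterates $c$ times. Crucially, nothing here is an induction on $c$; the reward horizon $t+c$ is held fixed throughout, so $c$ is a passive parameter and the same argument works for every non-negative integer $c$.

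First I would decompose the time-$(t+c)$ expected reward by the tower property and the Markov structure, writing $\mathbb{E}_{s_{t+c}\mid s_t}[r^*(s_{t+c},a_{t+c})] = \mathbb{E}_{s_{t+1}\mid s_t,a_t}\big[\Phi(s_{t+1})\big]$ with $\Phi(s_{t+1}) := \mathbb{E}_{s_{t+c}\mid s_{t+1}}[r^*(s_{t+c},a_{t+c})]$. Next I would exchange $\nabla_{a_t}$ with the outer expectation (justified below) and apply the chain rule along the single channel through which $a_t$ acts on the downstream trajectory, namely $a_t \mapsto s_{t+1} \mapsto a_{t+1}=\pi^*(s_{t+1})$. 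This yields a product of the one-step Jacobian $\partial\pi^*(a_{t+1}\mid s_{t+1})/\partial a_t$ and the partial gradient $\nabla_{a_{t+1}}\Phi$. I would then take Frobenius norms, invoke submultiplicativity to separate the two factors, and bound the Jacobian factor by the hypothesis $\Vert \partial\pi^*(a_{t+1}\mid s_{t+1})/\partial a_t\Vert_F \le L_\pi$. Finally, a triangle inequality for integrals (Jensen applied to the convex norm) moves the norm inside the outer expectation, and since the remaining integrand depends on the trajectory only through $s_{t+1}$, the outer $\mathbb{E}_{s_{t+c}\mid s_t}$ collapses to $\mathbb{E}_{s_{t+1}\mid s_t}$, producing precisely the right-hand side.

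The main obstacle is making rigorous the claim that $a_t$ influences the downstream reward (to the accuracy captured by the bound) through the next action $a_{t+1}$, so that the relevant sensitivity is exactly the quantity the hypothesis controls. This is where the specific form $\partial\pi^*(a_{t+1}\mid s_{t+1})/\partial a_t$ in the assumption is used: it is meant to package the composite transition-then-policy Jacobian $\frac{\partial a_{t+1}}{\partial s_{t+1}}\frac{\partial s_{t+1}}{\partial a_t}$ into a single object whose norm is assumed to be at most $L_\pi$. I would treat the transition reparametrically (writing $s_{t+1}$ as a deterministic function of $s_t,a_t$ and exogenous noise) so that the chain rule and the interchange of $\nabla_{a_t}$ with the expectation are legitimate, appealing to dominated convergence and differentiation under the integral sign under standard smoothness and integrability of $\pi^*$, $r^*$, and the dynamics. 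A secondary point to handle carefully is the slightly informal outer expectation $\mathbb{E}_{s_{t+c}\mid s_t}$ appearing on the right-hand side, which I would read as the trajectory expectation restricted to a function of $s_{t+1}$, i.e. $\mathbb{E}_{s_{t+1}\mid s_t}$, so that the stated inequality is exactly what the chain-rule-plus-Jensen estimate delivers.
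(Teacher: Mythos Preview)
Your proposal is correct and follows essentially the same route as the paper: tower property to peel off one step, chain rule through $a_t \mapsto a_{t+1}$ to extract the Jacobian factor $\partial a_{t+1}/\partial a_t$, bound that factor by $L_\pi$, and move the norm inside the expectation. If anything, you are more careful than the paper's proof, which is a terse four-line display that leaves implicit the reparametrization, the interchange of $\nabla_{a_t}$ with the expectation, and the reading of $\mathbb{E}_{s_{t+c}\mid s_t}$ on the right-hand side as $\mathbb{E}_{s_{t+1}\mid s_t}$.
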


\begin{proof}
\begin{equation}
\begin{aligned}
&\big\vert \nabla_{a_t} \mathbb{E}_{s_{t+c}|s_t}[r^*(s_{t+c}, a_{t+c})] \\ & \quad= \big\vert \nabla_{a_t} \mathbb{E}_{s_{t+1}|s_t} \mathbb{E}_{s_{t+c}|s_{t+1}}[r^*(s_{t+c}, a_{t+c})]\cdot \frac{\partial a_{t+1}}{\partial a_{t}} \big\vert \\
&\quad\leq \big\vert \nabla_{a_t} \mathbb{E}_{s_{t+1}|s_t} \mathbb{E}_{s_{t+c}|s_{t+1}}[r^*(s_{t+c}, a_{t+c})] \big\vert \cdot \big\vert \frac{\partial a_{t+1}}{\partial a_{t}} \big\vert \\
&\quad\leq \big\vert \nabla_{a_t} \mathbb{E}_{s_{t+1}|s_t} \mathbb{E}_{s_{t+c}|s_{t+1}}[r^*(s_{t+c}, a_{t+c})] \big\vert \cdot L_{\pi} \\
&\quad= \mathbb{E}_{s_{t+1}|s_t} \big\vert \nabla_{a_t} \mathbb{E}_{s_{t+c}|s_{t+1}}[r^*(s_{t+c}, a_{t+c})] \big\vert \cdot L_{\pi}
\end{aligned}
\end{equation}
\end{proof}

\begin{remark}
Lemma \ref{lemma:1} gives the discrepancy of reward gradients starting from adjacent states. We can apply this lemma sequentially and infer the upper bound of reward gradients:
\begin{equation}
\begin{aligned}
&\big\vert \nabla_{a_t} \mathbb{E}_{s_{t+c}|s_t}[r^*(s_{t+c}, a_{t+c})]\big\vert \\
&= \big\vert \nabla_{a_t} \mathbb{E}_{s_{t+1}|s_t} \mathbb{E}_{s_{t+c}|s_{t+1}}[r^*(s_{t+c}, a_{t+c})]\cdot \frac{\partial a_{t+1}}{\partial a_{t}} \big\vert \\
&\leq \mathbb{E}_{s_{t+1}|s_t} \big\vert \nabla_{a_t} \mathbb{E}_{s_{t+c}|s_{t+1}}[r^*(s_{t+c}, a_{t+c})] \big\vert \cdot L_{\pi} \\
&\leq \mathbb{E}_{s_{t+1}|s_t} \mathbb{E}_{s_{t+2}|s_{t+1}} \dots \mathbb{E}_{s_{t+c}|s_{t+c-1}} \\
&\qquad\qquad\qquad \big\vert \nabla_{a_t} \mathbb{E}_{s_{t+c}|s_{t+c}}[r^*(s_{t+c}, a_{t+c})] \big\vert \cdot (L_{\pi})^c \\
&= \mathbb{E}_{s_{t+c}|s_t} \big\vert \nabla_{a_t} r^*(s_{t+c}, a_{t+c}) \big\vert \cdot (L_{\pi})^c
\end{aligned}
\end{equation}
\label{remark:1}
\end{remark}

\begin{prop}
Let $\pi^*(a_t|s_t)$ and $r^*(s_t,a_t)$ be the policy and the reward function in an MDP. Suppose there are the upper bounds of Frobenius norm of the policy and reward gradients w.r.t. input actions, i.e., $\Vert \frac{\partial \pi^*(a_{t+1}|s_{t+1})}{\partial a_t}\Vert_F \leq L_{\pi} < 1 $ and $\Vert \frac{\partial r^*(\chreplaced{s_{t}, a_{t}}{s_{t+1}, a_{t+1}})}{\partial a_t}\Vert_F \leq L_{r}$. Then the gradient of the learned Q-function w.r.t. action can be upper-bounded as:
\begin{equation}
\Vert \nabla_{a_t}Q_{\pi^*}(s_t,a_t) \Vert_F \leq \frac{\sqrt{N}L_r}{1-\gamma L_{\pi}}
\end{equation}
Where $N$ denotes the dimension of the action and $\gamma$ is the discount factor.
\end{prop}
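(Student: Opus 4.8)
The plan is to unroll the action-value function along trajectories generated by $\pi^*$ into a discounted sum of expected rewards, differentiate it termwise in $a_t$, and control each term by the single-step contraction supplied by Lemma~\ref{lemma:1}. I would start from the trajectory expansion of the Bellman equation,
\begin{equation}
Q_{\pi^*}(s_t,a_t) = \sum_{k=0}^{\infty} \gamma^{k}\, \mathbb{E}_{s_{t+k}|s_t,a_t}\big[r^*(s_{t+k},a_{t+k})\big],
\end{equation}
and, assuming enough regularity to interchange $\nabla_{a_t}$ with the sum and the expectations (unproblematic here, since the learned dynamics, the policy and the reward are differentiable in the action and the reward is bounded, so dominated convergence applies), obtain $\nabla_{a_t}Q_{\pi^*}(s_t,a_t)=\sum_{k\ge 0}\gamma^{k}\nabla_{a_t}\mathbb{E}_{s_{t+k}|s_t,a_t}[r^*(s_{t+k},a_{t+k})]$.

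Next I would bound the $k$-th summand. For $k=0$ it is simply $\nabla_{a_t}r^*(s_t,a_t)$, bounded by $L_r$ by hypothesis. For $k\ge 1$, a single application of Lemma~\ref{lemma:1} peels off one policy Jacobian and contributes the scalar factor $L_\pi$; iterating it $k$ times---this is exactly the telescoping chain written out in Remark~\ref{remark:1}---pushes the gradient all the way onto the terminal reward and gives
\begin{equation}
\big\vert \nabla_{a_t}\mathbb{E}_{s_{t+k}|s_t,a_t}[r^*(s_{t+k},a_{t+k})]\big\vert \le (L_\pi)^{k}\,\mathbb{E}_{s_{t+k}|s_t,a_t}\big\vert\nabla_{a_{t+k}}r^*(s_{t+k},a_{t+k})\big\vert \le (L_\pi)^{k}L_r .
\end{equation}
Applying the triangle inequality along the series and summing the resulting geometric series gives $\sum_{k\ge 0}(\gamma L_\pi)^{k}L_r = L_r/(1-\gamma L_\pi)$ as a bound on each partial derivative of $Q_{\pi^*}$, where convergence is guaranteed because $L_\pi<1$ and $\gamma\in[0,1]$ force $\gamma L_\pi<1$.

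The factor $\sqrt{N}$ in the statement is then pure bookkeeping: Lemma~\ref{lemma:1} and Remark~\ref{remark:1} are phrased coordinatewise, so the argument above bounds each of the $N$ partial derivatives of $Q_{\pi^*}$ by $L_r/(1-\gamma L_\pi)$, and reassembling the $N$ coordinates into the Frobenius (Euclidean) norm multiplies by $\sqrt{N}$, yielding $\Vert\nabla_{a_t}Q_{\pi^*}(s_t,a_t)\Vert_F\le\sqrt{N}L_r/(1-\gamma L_\pi)$. The cruder bound in the remark following the proposition then follows from $1-\gamma L_\pi\ge 1-\gamma$, which holds since $L_\pi\le 1$.

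I expect the main obstacle to be the norm bookkeeping across the unrolling rather than any individual estimate: one has to make sure that each invocation of Lemma~\ref{lemma:1} contributes only the scalar $L_\pi$ and nothing dimension-dependent, since an extra $\sqrt{N}$ per step would make the geometric sum diverge and the whole bound collapse. A secondary technical point is justifying the termwise differentiation of the infinite sum and the differentiation-under-the-expectation, which requires the policy, the dynamics and the reward to be sufficiently smooth in the action argument---mild in this off-policy HRL setting but worth stating explicitly.
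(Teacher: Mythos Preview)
Your proposal is correct and follows essentially the same approach as the paper: unroll $Q_{\pi^*}$ as a discounted sum of expected rewards, differentiate termwise, iterate Lemma~\ref{lemma:1} (as in Remark~\ref{remark:1}) to bound the $k$-th term by $(L_\pi)^k L_r$, sum the geometric series, and then aggregate the $N$ coordinatewise bounds into the Frobenius norm via the $\sqrt{N}$ factor. The paper's proof differs only cosmetically, working with $\Vert\nabla_{a_t}Q_{\pi^*}\Vert_F^2=\sum_i(\partial_{a_t^i}Q_{\pi^*})^2$ from the outset and taking a square root at the end, but the substance is identical.
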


\begin{proof}
\begin{equation}
\begin{aligned}
\Vert \nabla_{a_t}&Q_{\pi^*}(s_t,a_t) \Vert^2_F \\
&= \sum_{i=0}^N\left(\nabla_{a^i_t}Q_{\pi^*}(s_t,a_t)\right)^2 \\
&=\sum_{i=0}^N\left(\sum_{c=0}^{\infty}\gamma^c \nabla_{a^i_t} \mathbb{E}_{s_{t+c}|s_t}[r^*(s_{t+c}, a_{t+c})]\right)^2 \\
&\leq \sum_{i=0}^N\left(\sum_{c=0}^{\infty}\gamma^c \big\vert \nabla_{a^i_t} \mathbb{E}_{s_{t+c}|s_t}[r^*(s_{t+c}, a_{t+c})]\big\vert\right)^2
\end{aligned}
\label{proof:2}
\end{equation}
Meanwhile according to Remark \ref{remark:1}, we have:
\begin{equation}
\begin{aligned}
\big\vert \nabla_{a_t}& \mathbb{E}_{s_{t+c}|s_t}[r^*(s_{t+c}, a_{t+c})]\big\vert \\
&\leq \mathbb{E}_{s_{t+c}|s_t} \big\vert \nabla_{a^i_t} r^*(s_{t+c}, a_{t+c}) \big\vert \cdot (L_{\pi})^c \\
&\leq \mathbb{E}_{s_{t+c}|s_t} L_r \cdot (L_{\pi})^c\\
&=L_r \cdot (L_{\pi})^c
\end{aligned}
\end{equation}
Replacing the above gradient term, then the formula (\ref{proof:2}) can be rewritten as:
\begin{equation}
\begin{aligned}
\Vert \nabla_{a_t}&Q_{\pi^*}(s_t,a_t) \Vert^2_F \\
&\leq \sum_{i=0}^N\left(\sum_{c=0}^{\infty}\gamma^c \big\vert \nabla_{\chreplaced{a_t^i}{a_t}} \mathbb{E}_{s_{t+c}|s_t}[r^*(s_{t+c}, a_{t+c})]\big\vert\right)^2 \\
&\leq \sum_{i=0}^N\left(\sum_{c=0}^{\infty}\gamma^c \cdot L_r \cdot L_{\pi}^c\right)^2 = 
N \left(L_r\sum_{c=0}^{\infty}(\gamma \cdot L_{\pi})^c\right)^2 \\
&= N \left(\frac{L_r}{1-\gamma L_{\pi}} \right)^2
\end{aligned}
\end{equation}
The above inequality on the sqrt function then implies:
\begin{equation}
\begin{split}
\Vert \nabla_{a_t}Q_{\pi^*}(s_t,a_t) \Vert_F \leq \frac{\sqrt{N}L_r}{1-\gamma L_{\pi}}
\end{split}
\end{equation}
Which completes the proof.
\end{proof}

\newpage
\section{Algorithm table}

We provide the pseudo code below for this algorithm. Python-based implementation is available at \url{https://github.com/HaoranWang-TJ/GCMR_ACLG_official}.

\begin{algorithm}
\caption{Guided Cooperation via Model-based Rollout (GCMR)}\label{alg:cap}
\begin{algorithmic}
\Statex \textbf{Input:}\\
\begin{itemize}
\item \textbf{Key hyper-parameters}: the number of candidate goals $k$, gradient penalty loss coefficient $\lambda_{gp}$, one-step planning term coefficient$\lambda_{osrp}$, soft update rate of the shift magnitude within relabeling $\epsilon$.
\item \textbf{General hyper-parameters}: the subgoal scheme $\eta$ (set to 0/1 for the relative/absolute scheme), training batch number $BN$, higher-level update frequency $H_c$, learning frequency of dynamics models $D_c$, initial steps without using dynamics models $t_{dm}$, usage frequencies of gradient penalty and planning term $GP_c$, $OP_c$.
\end{itemize}
\State Initialize all actor and critic networks with random parameters $\theta_{lo}$, $\theta_{hi}$, $\phi_{lo}$, $\phi_{hi}$.
\State Initialize the dynamics models $\Gamma_{\xi}$.
\State $\mathcal{D}_{lo} \gets \emptyset$, $\mathcal{D}_{hi} \gets \emptyset$ \Comment{Initialize replay buffers}
\While{True}
\State $t \gets 0$
\State Reset the environment and get the state $s_t$ and episode terminal
signal $done$.
\Repeat
\If{$t \equiv 0$ (mod $c$)}
\State Generate subgoal $sg_t \sim \pi \left(sg|s_t, g;\theta_{hi}\right)$.
\Else
   \State Obtain \chreplaced{new subgoals}{goal} through the transition function $sg_t = sg_{t-1} + (\neg\eta) \cdot \varphi(s_{t-1} - s_t)$.
\EndIf 
\State $a_t \sim \pi \left(a|s_t,sg_t;\theta_{lo}\right)$ \Comment{Sample lower-level action}
\State $s_{t+1}$, $r_t$ $\gets {\rm env.step}(a_t)$ \protect\\ \Comment{Perform action $a_t$ in the environment}
\State $\mathcal{D}_{lo} \gets \mathcal{D}_{lo} \cup\{\tau_{lo}\}$, $\mathcal{D}_{hi} \gets \mathcal{D}_{hi} \cup\{\tau_{hi}\}$ \protect\\ \Comment{Store transitions into buffers}
\State $t \gets t + 1$
\Until{$done$ is $true$.}
\If{$t > t_{dm}$}

\If{$t \equiv 0$ (mod $D_c$)}
\State Train the dynamics models $\Gamma_{\xi}$.
\EndIf

\State Randomly sample experiences from replay buffers.
\State Relabel subgoals via the rollout-based off-policy correction.

\If{$t \equiv 0$ (mod $GP_c$)}
\RepeatN{$5$}
\State $\mathcal{L}(\phi_{lo}) \gets \mathcal{L}_{gp}(\phi_{lo}) + \mathcal{L}(\phi_{lo})$ \protect\\ \Comment{Plug the gradient penalty into critic loss}
\EndRepeatN
\EndIf
\If{$t \equiv 0$ (mod $OP_c \times H_c$)}
\State $\mathcal{L}(\theta_{lo})  \gets \mathcal{L}_{osrp} + \mathcal{L}(\theta_{lo})$ \protect\\ \Comment{Plug the planning term into actor loss}
\EndIf

\EndIf
\EndWhile
\end{algorithmic}
\end{algorithm}

\newpage
\section{Environment Details}
\label{sec:environment_setting}
Most experiments were conducted under the same environments as that in \cite{kim2021landmark}, including \textit{\textbf{Point Maze}}, \textit{\textbf{Ant Maze
(W-shape)}}, \textit{\textbf{Ant Maze (U-shape)}}, \textit{\textbf{Pusher}}, and \textit{\textbf{Reacher}}. Further detail is available in public repositories \footnote{Our code is available at \url{https://github.com/HaoranWang-TJ/GCMR_ACLG_official}\label{link_us}} \footnote{\url{https://github.com/junsu-kim97/HIGL.git}}. Besides, we introduced a more challenging locomotion environment, i.e., Ant Maze-Bottleneck, to validate the stability and robustness of the proposed GCMR in long-horizon and complicated tasks requiring delicate controls.
\paragraph{Ant Maze-Bottleneck}
The Ant Maze-Bottleneck environment was first introduced in \cite{leed2022hrl}, which provided implementation details in public repositories in \footref{link_us} \footnote{\url{https://github.com/jayLEE0301/dhrl_official.git}}. In this study, we resized it to be the same as the other mazes. Specifically, the size of the environment is $12 \times 12$. At training time, a goal point was selected randomly from a two-dimensional planar where both $x$- and $y$-axes range from -2 to 10. At evaluation time, the goal was placed at (0, 8) (i.e., the top left corner). A minimum threshold of competence was set to an L2 distance of 2.5 from the goal. Each episode would be terminated after 600 steps.

\section{Experimental Configurations}
\subsection{Network Structure}
For a fair comparison, we adopted the same architecture as \cite{zhang2020generating, kim2021landmark}. In detail, all actor and critic networks had two hidden layers composed of a fully connected layer with 300 units and a ReLU nonlinearity function. The output layer in actor networks had the same number of cells as the dimension of the action space and normalized the output to $[-1, 1]$ using the $tanh$ function. After that, the output was rescaled to the range of action space. 

For the dynamics model, the ensemble size $B$ was set to 5, a recommended value in \cite{chua2018deep}. Each of the ensembles was instantiated with three layers, similar to the above actor network. Yet, each hidden layer had 256 units and was followed by the Swish activation \cite{ramachandran2018searching}. Note that units of the output layer were twice as much as the actor because the action distribution was represented with the mean and covariance of a Gaussian.

The Adam optimizer was utilized for all networks.

\subsection{Hyper-Parameter Settings}
\label{sec:params_setting}
In Table \ref{table:1}, we list common hyper-parameters used across all environments. Hyper-parameters that differ across the environments are presented in Table \ref{table:2}. These hyper-parameters involving HIGL remained the same as proposed by \cite{kim2021landmark}. Besides, the update speed of the shift magnitude of goals was set at 0.01.
\begin{table}[H]
  \caption{Hyper-parameters across all environments.}
  \label{table:1}
  \centering
  \begin{threeparttable}
  \begin{tabular}{c|cc}
    \toprule
    \multirow{2}{*}{Hyper-parameters}    & \multicolumn{2}{c}{Value}\\
    \cmidrule(r){2-3}
     & Higher-level  & Lower-level     \\
     \midrule
    Actor learning rate     & 0.0001 & 0.0001      \\
    Critic learning rate     & 0.001      & 0.001  \\
    Soft update rate & 0.005       & 0.005  \\
    $\gamma$ & 0.99       & 0.95  \\
    Reward scaling & 0.1       & 1.0  \\
    Training frequency $H_c$ &10 & 1\\
    Batch size& 128       & 128 \\
    Candidates' number  & 10 & \\
    $\lambda^{\rm ACLG}_{\rm landmark}$ & 1.0 &\\
    Shift Magnitude $\delta_{sg}$ & 20$\sim$30\tnote{a} &\\
    $\lambda_{gp}$ & &1.0$\sim$10\tnote{b} \\
    $\lambda_{osrp}$ & & 0.0005 $\sim$ 0.00005\tnote{c} \\
         \makecell{$\lambda_{gp}$ usage frequencies\\ $GP_c$} &  & 5      \\
     \makecell{$\lambda_{osrp}$ usage frequencies\\ $OP_c$} &  & 10      \\
    
    \midrule
     & \multicolumn{2}{c}{Dynamics model}     \\
     \midrule
     Ensemble number & \multicolumn{2}{c}{5} \\
     Learning rate & \multicolumn{2}{c}{0.005} \\
     Batch size& \multicolumn{2}{c}{256} \\
     Training epochs & \multicolumn{2}{c}{20 $\sim$ 50} \\
     \bottomrule
  \end{tabular}
  \begin{tablenotes}
 \item[a] $\delta_{sg}$ is set to 20 for small mazes, such as Ant Maze (U-shape, W-shape), and 30 for larger mazes, such as the Large Ant Maze (U-shape).
\item[b] Only $\lambda_{gp}=10$ for the Ant Maze-Bottleneck, while $\lambda_{gp}$ are set to 1.0 for others.
\item[c] Only $\lambda_{osrp}=0.00005$ for the FetchPush and FetchPickAndPlace, while $\lambda_{osrp}$ are set to 0.0005 for others.
\end{tablenotes}   
\end{threeparttable}
\end{table}

\begin{table}[H]
  \caption{Hyper-parameters that differ across the environments.}
  \label{table:2}
  \centering
  \begin{threeparttable}
  \begin{tabular}{c|cc}
    \toprule
    \multirow{2}{*}{Hyper-parameters}     & Maze-based\tnote{a}& Arm-based\tnote{b} \\
    \cmidrule(l){2-3}
     & \multicolumn{2}{c}{Higher-level}\\
     \midrule
    High-level action frequency     & 10 & 5      \\
    Exploration strategy & \makecell{Gaussian\\($\sigma=1.0$)} & \makecell{Gaussian\\($\sigma=0.2$)}      \\
    \midrule
     & \multicolumn{2}{c}{Lower-level}     \\
     \midrule
     Exploration strategy & \makecell{Gaussian\\($\sigma=1.0$)} & \makecell{Gaussian\\($\sigma=0.1$)}      \\
     $\lambda_{\rm adj}$ & 20.0 & 0     \\
    \midrule
     & \multicolumn{2}{c}{Dynamics model}     \\
     \midrule
     Training frequency $D_c$ & 2000 & 500 \\
     Initial steps w/o model $t_{dm}$ &20000&10000\\
     \bottomrule
  \end{tabular}
  \begin{tablenotes}
\item[a] Maze-based environments include the Point Maze, Ant Maze (U/W-shape), and Ant Maze-Bottleneck.
\item[b] Arm-based environments are the Pusher, Reacher, FetchPush, and FetchPickAndPlace.
\end{tablenotes}   
\end{threeparttable}
\end{table}

\begin{table}[H]
  \caption{Number of Landmarks}
  \label{table:landmark_num}
  \centering
  \begin{threeparttable}
  \begin{tabular}{c|c|c|ccc}
    \toprule
    Environments & GCMR$+$ACLG\tnote{*}&ACLG\tnote{*}&HIGL\tnote{*}&DHRL&PIG\\
    \midrule
     Ant Maze (U-shape)&\multicolumn{2}{c|}{60$+$60}&20$+$20&300&400\\
     Ant Maze (W-shape)&\multicolumn{2}{c|}{60$+$60}&60$+$60&300&400\\
     Point Maze&\multicolumn{2}{c|}{60$+$60}&20$+$20&300&200\\
     Pusher and Reacher&\multicolumn{2}{c|}{20$+$20}&20$+$20&300&80\\
    \makecell[c]{FetchPush and \\ FetchPickAndPlace}&\multicolumn{2}{c|}{60$+$60}&20$+$20&300&80\\
     \bottomrule
  \end{tabular}
  \begin{tablenotes}
\item[*] Where $+$ connects the numbers of coverage-based landmarks and novelty-based landmarks.
\end{tablenotes}   
\end{threeparttable}
\end{table}

\section{Additional Experiments}

\subsection{GCMR Solely Employed for Goal-Reaching Tasks}
\label{sec:gcmr_sole}
In our experiments, the proposed GCMR was used as an additional plugin. Of course, GCMR can be solely applicable to goal-reaching tasks. We conducted experiments in the Point Maze and Ant Maze (U-shape) tasks. As shown in Figure~\ref{gcmr_sole}, the results indicate that GCMR can be used independently and achieve similar results to HIGL. Meanwhile, in Figure~\ref{gcmr_sole_params}, we investigate how the gradient penalty term and the one-step planning term impact the final performance when solely using the GCMR method in Ant Maze (U-shape). Figure~\ref{gcmr_sole_params} illustrates similar conclusions as those of Figure~2 in the "\textit{\textbf{Main Paper}}".

\begin{figure}[htbp]
\centering
\subfloat{\includegraphics[width=0.7\textwidth]{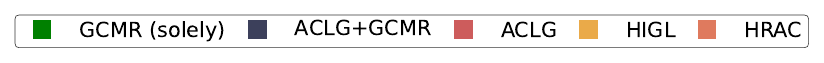}}\vspace{-5mm}
\setcounter{subfigure}{0}
\subfloat[Point Maze]{\includegraphics[width=0.4\textwidth]{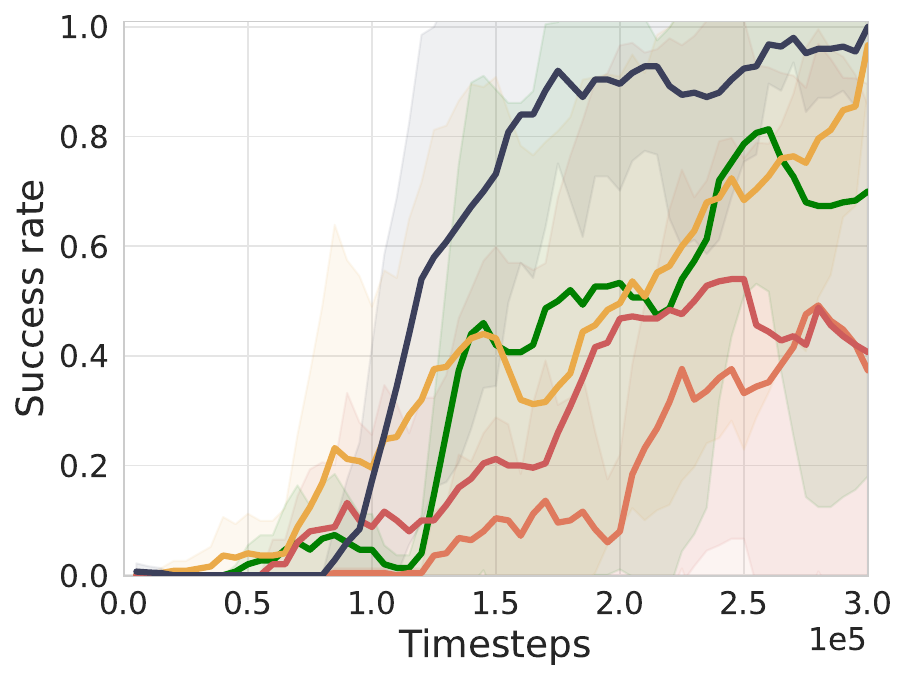}}
\subfloat[Ant Maze (U-shape)]{\includegraphics[width=0.4\textwidth]{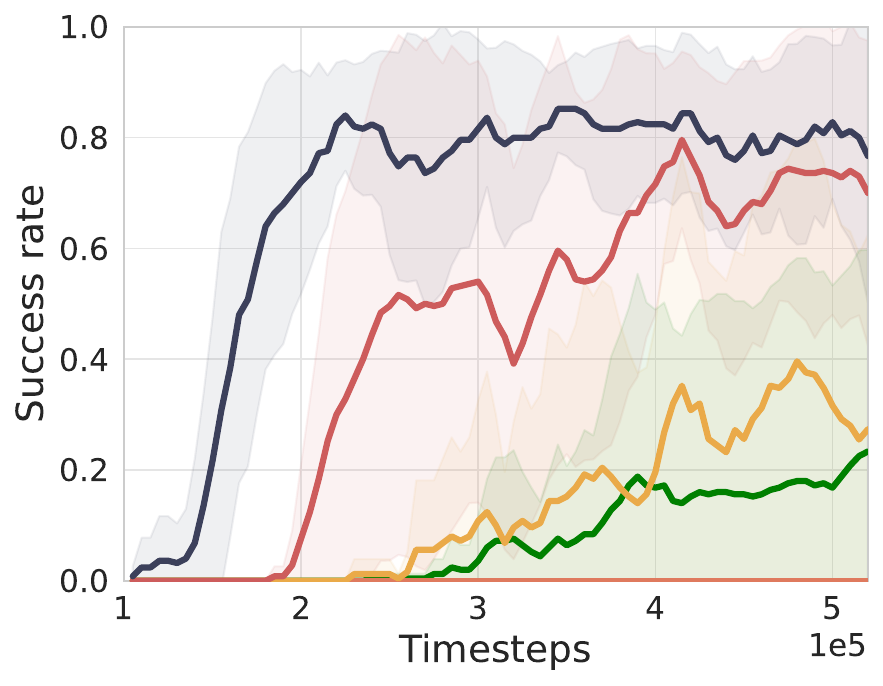}}
\caption{Performance when solely using the GCMR method in environments (a) Point Maze and (b) Ant Maze (U-shape). The solid lines represent the mean across five runs. The transparent areas represent the standard deviation.}
\label{gcmr_sole}
\end{figure}

\begin{figure}[H]
\centering
\subfloat[Varying $\lambda_{gp}$]{\includegraphics[width=0.4\textwidth]{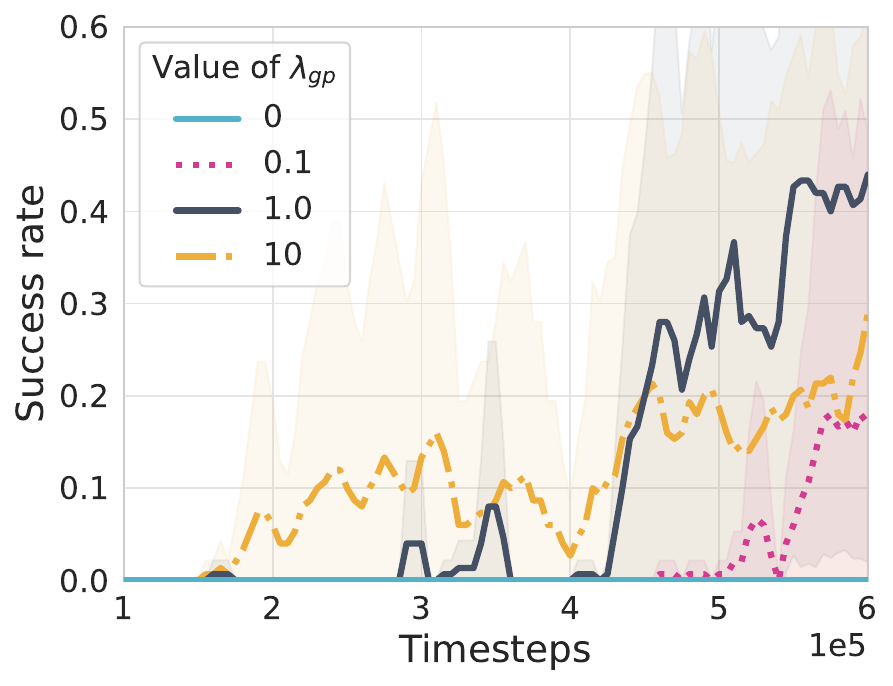}
\label{gcmr_sole_mgp_lambda}}
\subfloat[Varying $\lambda_{osrp}$]{\includegraphics[width=0.4\textwidth]{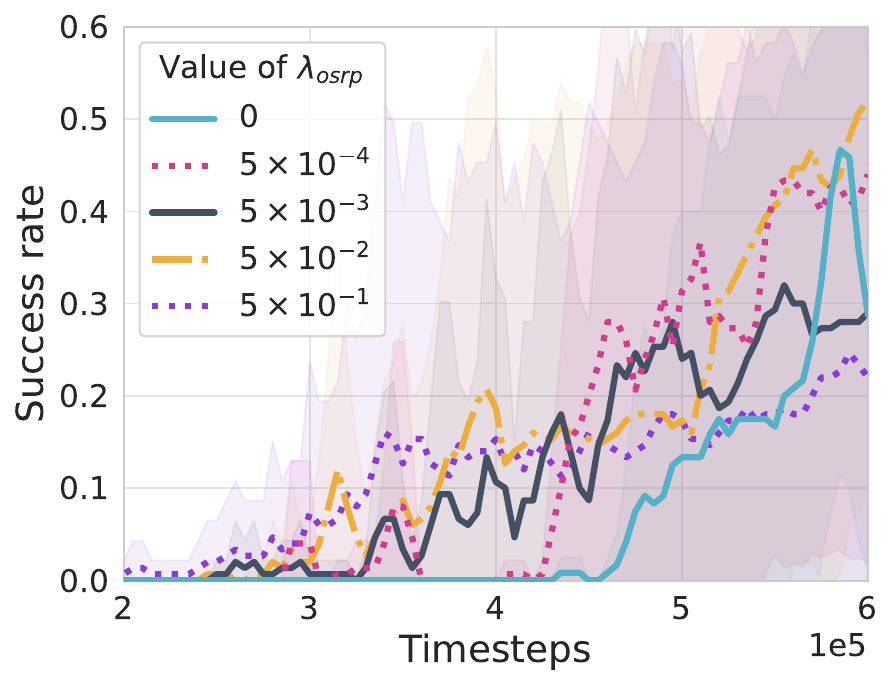}
\label{gcmr_sole_osrp_lambda}}\\
\caption{Impact of varying (a) $\lambda_{gp}$ and (b) $\lambda_{osrp}$ in the Ant Maze (U-shape) environment, when solely using the GCMR method.}
\label{gcmr_sole_params}
\end{figure}

\subsection{Quantification of Extra Computational Cost}
\label{sec:quan_comp_cost}
The extra computational cost is primarily composed of three aspects: 
\begin{itemize}
\item (1) \textbf{dynamic model training}, 
\item (2) model-based gradient penalty and one-step rollout planning in \textbf{low-level policy training}, 
\item (3) goal-relabeling in \textbf{high-level policy training}.
\end{itemize}
Therefore, we ran both methods ACLG+GCMR and ACLG on Ant Maze (U-shape) for $0.7 \times 10^5$ steps to compare their runtime in these aspects.

\begin{table}[H]
  \caption{Quantification of Extra Computational Cost}
  \label{table:quan_comp_cost}
  \centering
  \begin{tabular}{c|cccc}
    \toprule
    Training time (s) & Total&Dynamic model&Low-level policy&High-level policy\\
    \midrule
     ACLG+GCMR & 5065.18 &16.51&3603.01&854.64 \\
     ACLG & 1845.63 &0&632.97&614.01 \\
     \bottomrule
  \end{tabular}
\end{table}

\end{document}